\def\Zeta{{Z}}
\def\eqref#1{equation~\ref{#1}}
\def\floor#1{\lfloor #1 \rfloor}
\def\1{\bm{1}}
\def\rk{{\textnormal{k}}}
\def\rn{{\textnormal{n}}}
\def\rx{{\textnormal{x}}}
\def\rz{{\textnormal{z}}}
\def\rvh{{\mathbf{h}}}
\def\rvk{{\mathbf{k}}}
\def\rvx{{\mathbf{x}}}
\def\rvy{{\mathbf{y}}}
\def\rvz{{\mathbf{z}}}
\def\ervh{{\textnormal{h}}}
\def\ervx{{\textnormal{x}}}
\def\rmA{{\mathbf{A}}}
\def\rmB{{\mathbf{B}}}
\def\rmC{{\mathbf{C}}}
\def\rmU{{\mathbf{U}}}
\def\rmV{{\mathbf{V}}}
\def\vzero{{\bm{0}}}
\def\vone{{\bm{1}}}
\def\vphi{{\bm{\phi}}}
\def\vzeta{{\bm{\zeta}}}
\def\va{{\bm{a}}}
\def\vb{{\bm{b}}}
\def\vc{{\bm{c}}}
\def\vd{{\bm{d}}}
\def\ve{{\bm{e}}}
\def\vh{{\bm{h}}}
\def\vk{{\bm{k}}}
\def\vr{{\bm{r}}}
\def\vt{{\bm{t}}}
\def\vu{{\bm{u}}}
\def\vv{{\bm{v}}}
\def\vx{{\bm{x}}}
\def\vy{{\bm{y}}}
\def\vz{{\bm{z}}}
\def\evh{{h}}
\def\evk{{k}}
\def\evx{{x}}
\def\evz{{z}}
\def\mA{{\bm{A}}}
\def\mB{{\bm{B}}}
\def\mC{{\bm{C}}}
\def\mD{{\bm{D}}}
\def\mF{{\bm{F}}}
\def\mG{{\bm{G}}}
\def\mI{{\bm{I}}}
\def\mL{{\bm{L}}}
\def\mP{{\bm{P}}}
\def\mQ{{\bm{Q}}}
\def\mT{{\bm{T}}}
\def\mU{{\bm{U}}}
\def\mV{{\bm{V}}}
\def\mW{{\bm{W}}}
\def\mX{{\bm{X}}}
\def\mY{{\bm{Y}}}
\DeclareMathAlphabet{\mathsfit}{\encodingdefault}{\sfdefault}{m}{sl}
\SetMathAlphabet{\mathsfit}{bold}{\encodingdefault}{\sfdefault}{bx}{n}
\def\sA{{\mathbb{A}}}
\def\sB{{\mathbb{B}}}
\def\sC{{\mathbb{C}}}
\def\sH{{\mathbb{H}}}
\def\sJ{{\mathbb{J}}}
\def\sK{{\mathbb{K}}}
\def\sM{{\mathbb{M}}}
\def\sN{{\mathbb{N}}}
\def\sP{{\mathbb{P}}}
\def\sS{{\mathbb{S}}}
\def\sT{{\mathbb{T}}}
\def\sU{{\mathbb{U}}}
\def\sV{{\mathbb{V}}}
\def\sW{{\mathbb{W}}}
\def\emA{{A}}
\def\emD{{D}}
\def\emF{{F}}
\newcommand{\E}{\mathbb{E}}
\newcommand{\R}{\mathbb{R}}
\newcommand{\normlone}{L^1}
\newcommand{\normltwo}{L^2}
\DeclareMathOperator*{\argmin}{arg\,min}
\DeclareMathOperator{\Tr}{Tr}
\setlist[enumerate]{label=\alph*), noitemsep, topsep=0pt}
\newcommand{\aseq}{\overset{\text{a.s.}}{=\joinrel=}}
\newcommand{\mvec}{\operatorname{vec}}
\newcommand{\shaycomment}[1]{\textcolor{blue}{#1 --Shay}}
\newcommand{\zhunxuancomment}[1]{\textcolor{red}{#1 --Zhunxuan}}
\let\classAND\AND
\let\AND\relax
\let\AND\classAND
\setlist[enumerate]{label=\alph*), noitemsep, topsep=0pt}
\newcommand{\ignore}[1]{}
\newcommand\Autoref[1]{\@first@ref#1,@}
\def\@throw@dot#1.#2@{#1}
\def\@set@refname#1{
    \edef\@tmp{\getrefbykeydefault{#1}{anchor}{}}%
    \xdef\@tmp{\expandafter\@throw@dot\@tmp.@}%
    \ltx@IfUndefined{\@tmp autorefnameplural}%
         {\def\@refname{\@nameuse{\@tmp autorefname}}}%
         {\def\@refname{\@nameuse{\@tmp autorefnameplural}}}%
}
\def\@first@ref#1,#2{%
  \ifx#2@\cref{#1}\let\@nextref\@gobble
  \else%
    \@set@refname{#1}
    \@refname~\ref{#1}
    \let\@nextref\@next@ref
  \fi%
  \@nextref#2%
}
\def\@next@ref#1,#2{%
   \ifx#2@ and~\ref{#1}\let\@nextref\@gobble
   \else, \ref{#1}
   \fi%
   \@nextref#2%
}
\newcommand{\restore@Environment}[1]{%
  \AtBeginDocument{%
    \csletcs{#1*}{#1}%
    \csletcs{end#1*}{end#1}%
  }%
}
\forcsvlist\restore@Environment{alignat,equation,gather,multline,flalign,align}
\newtheorem{theorem}{Theorem}[section]
\newaliascnt{lemma}{theorem}
\newtheorem{lemma}[lemma]{Lemma}
\theoremstyle{definition}
\newtheorem{definition}{Definition}[section]
\newtheorem{condition}{Condition}[section]
\theoremstyle{remark}
\newtheorem*{remark}{Remark}
\newcommand{\norm}[1]{\left\lVert#1\right\rVert}
\newcommand{\abs}[1]{\left\lvert#1\right\rvert}
\newcommand{\diag}{\operatorname{diag}}
\newcommand{\shat}[1]{\vphantom{#1}\smash[t]{\hat{#1}}} 
\newcommand{\poly}{\operatorname{poly}}
\newcommand\iidsim{\overset{\text{i.i.d.}}{\sim}}
\DeclareMathOperator{\LR}{LR}
\crefname{figure}{Figure}{Figures}
\crefname{table}{Table}{Tables}
\crefname{algorithm}{Algorithm}{Algorithms}
\title{Nonparametric Learning of Two-Layer ReLU Residual Units}
\author{\name Zhunxuan Wang \email zhunxuan.wang@gmail.com \\
      \addr Amazon\thanks{Work done mostly at University of Edinburgh.} \\
  London EC2A 2FA, United Kingdom
      \AND
      \name Linyun He \email lhe85@gatech.edu \\
      \addr Georgia Institute of Technology \\
  Atlanta, GA 30332, United States
      \AND
      \name Chunchuan Lyu \email chunchuan.lv@gmail.com \\
      \addr Instituto de Telecomunicacões Torre Norte \\
  1049-001 Lisbon, Portugal
      \AND
      \name Shay B.~Cohen \email scohen@inf.ed.ac.uk \\
      \addr University of Edinburgh \\
  Edinburgh EH8 9AB, United Kingdom}
\begin{document}

\maketitle

\begin{abstract}
We describe an algorithm that learns two-layer residual units using rectified linear unit (ReLU) activation: suppose the input $\rvx$ is from a distribution with support space $\R^d$ and the ground-truth generative model is a residual unit of this type, given by $\rvy = \mB^\ast\left[\left(\mA^\ast\rvx\right)^+ + \rvx\right]$, where ground-truth network parameters $\mA^\ast \in \R^{d\times d}$ represent a full-rank matrix with nonnegative entries and $\mB^\ast \in \R^{m\times d}$ is full-rank with $m \geq d$ and for $\vc \in \R^d$, $[\vc^{+}]_i = \max\{0, c_i\}$. We design layer-wise objectives as functionals whose analytic minimizers express the exact ground-truth network in terms of its parameters and nonlinearities. Following this objective landscape, learning residual units from finite samples can be formulated using convex optimization of a nonparametric function: for each layer, we first formulate the corresponding empirical risk minimization (ERM) as a positive semi-definite quadratic program (QP), then we show the solution space of the QP can be equivalently determined by a set of linear inequalities, which can then be efficiently solved by linear programming (LP). We further prove the strong statistical consistency of our algorithm, and demonstrate its robustness and sample efficiency through experimental results on synthetic data and a set of benchmark regression datasets.\footnote{Our code is available at \url{https://github.com/uuzeeex/relu-resunit-learning}.}
\end{abstract}

\section{Introduction}

\label{sec:intro}
Neural networks have achieved remarkable success in various fields such as computer vision \citep{lecun1998gradient, krizhevsky2012imagenet, he2016deep} and natural language processing \citep{kim2014convolutional, sutskever2014sequence}. This success is largely due to the strong expressive power of neural networks \citep{bengio2011expressive}, where nonlinear activation units, such as rectified linear units (ReLU; \citealt{nair2010rectified}) and hyperbolic tangents ($\tanh$) play a vital role to ensure the large learning capacity of the networks \citep{maas2013rectifier}. However, the nonlinearity of neural networks makes them significantly more difficult to train than linear models \citep{livni2014computational}. Therefore, with the development of neural network applications, finding efficient algorithms with provable properties to train such nontrivial neural networks has become an important and a relatively new goal. 

\hypertarget{fig:preReLU-TLRN}{
\begin{wrapfigure}{r}{0.09\textwidth}
  \begin{center}
    \includestandalone[width=\textwidth]{relu_resunit}
  \end{center}
\end{wrapfigure}
}

Residual networks, or ResNets \citep{he2016deep}, are a class of deep neural networks that adopt skip connections to feed values between nonadjacent layers, where skipped layers may contain nonlinearities in between. Without loss of expressivity, ResNets avoid the vanishing gradient problem by directly passing gradient information from previous layers to current layers where otherwise gradients might vanish without skipping. In practice, ResNets have shown strong learning efficiency in several tasks, e.g.~achieving at least $93\%$ test accuracy on CIFAR-10 classification, lowering single-crop error to $20.1\%$ on the $1000$-class ImageNet dataset \citep{russakovsky2015imagenet,he2016identity,allen2019can}.

Common ResNets are often aggregated by many repeated shallow networks, where each network acts as a minimal unit with this kind of skip propagation, named a residual unit \citep{he2016identity}. Given the flexibility and simplicity of residual units, much theoretical work has been devoted to studying them and developing training algorithms for them in a way that sidesteps from the standard backpropagation regime and provides guarantees on the quality of estimation (see \cref{section:related}). In this paper, we propose algorithms that learn a general class of single-skip two-layer residual units with ReLU activation as shown on the \hyperlink{fig:preReLU-TLRN}{right} by the equation: $\rvy = \mB\left[\left(\mA\rvx\right)^+ + \rvx\right]$, where for a scalar $c$, $c^+ = \max\{0,c\}$ (for a vector, this maximization is applied coordinate-wise), $\rvx$ is a random vector as the network input with support space $\R^d$, and $\mA\in\R^{d\times d}$ with nonnegative entries and $\mB\in\R^{m\times d}$ are weight matrices of \emph{layer 1} and \emph{layer 2}, respectively.

Compared to previous work~\citep{ge2019learning,ge2018learning, zhang2019learning,wu2019learning,Tian2017AnAF, du2018gradient,brutzkus2017globally,Soltanolkotabi2017LearningRV,li2017convergence,Zhong2017RecoveryGF}, the introduction of residual connections simplifies the recovery of the network parameters by removing the permutation and scaling invariance. 
However, the naive mean square error minimization used in estimating the parameters remains nonconvex. Unlike most previous work, we do not assume a specific input distribution nor that the distribution is symmetric \citep{ge2019learning,du2018improved}.

We show that under regularity conditions on the weights of the residual unit, the problem of learning the ReLU unit can be formulated through quadratic programming (QP). We use nonparametric estimation~\citep{guntuboyina2018nonparametric} to estimate the ReLU function values in the networks. We further rewrite our constructed quadratic programs to linear programs (LPs).
The LP formulation is simpler to optimize and has the same solution space as the QP for the network parameters.

\subsection{Related Work}
\label{section:related}

The study of two-layer ReLU networks (and two-layer networks in general, with one hidden layer) has received much attention in recent years because of the balance they present between their practical and theoretical properties. On one hand, two-layer networks represent non-convex learning problems and embody many of the same difficulties that we have with several layers. On the other hand, they are simple enough to study from a theoretical perspective. \citet{arora2014provable} recover a multi-layer generative network with sparse connections and \citet{livni2014computational} study the learning of multi-layer neural networks with polynomial activation. \citet{goel2018Learning} learns a one-layer convolution network with a perceptron-like rule. They prove the correctness of an iterative algorithm for exact recovery of the target network. \citet{rabusseau2019connecting} describe a spectral algorithm for two-layer linear networks. Recent work has connected optimization and two-layer network learning
\citep{ergen2021implicit,sahiner2021vector,ergen2021revealing,pilanci2020neural,mishkin2022fast} and has shown how to optimize networks layer by layer \citep{belilovsky2019greedy} or their relationship to linear classification \citep{yang2021learning}.
\cite{arjevani2021analytic} study the symmetries that two-layer ReLU networks present.

For learning a one-layer ReLU network, \citet{wu2019learning} optimize the norm and direction of neural network weight vectors separately, and \citet{zhang2019learning} use gradient descent with a specific initialization. For learning a two-layer ReLU network, \citet{ge2018learning} redesign the optimization landscape such that it is more amenable to theoretical analysis and \citet{ge2019learning} use a moment-based method for estimating a neural network \citep{janzamin2015beating}. Many others have studied  ReLU networks in various settings \citep{Tian2017AnAF,du2018gradient,brutzkus2017globally,Soltanolkotabi2017LearningRV,li2017convergence,Zhong2017RecoveryGF,goel2019learning}. 

The study of ReLU networks with two hidden layers has also been gaining attention~\citep{goel2019learning,allen2019learning}, with a focus on Probably Approximately Correct (PAC) learning~\citep{valiant1984theory}. 
In relation to our work, \citet{allen2019can} examined the PAC-learnable function of a specific three-layer neural network with residual connections. Their work differs from ours in two aspects. First, their learnable functions include a smaller (in comparison to the student network) three-layer residual network. Second, the assumptions they make on their three-layer model are rather different than ours. 

In relation to nonparametric estimation, \citet{guntuboyina2018nonparametric} treat the final output of a shape-restricted regressor as a parameter, placing some restrictions on the type of function that can be estimated (such as convexity). They provide solutions for estimation with isotonic regression~\citep{brunk1955maximum,ayer1955empirical,vaneeden1956maximum}, convex regression~\citep{seijo2011nonparametric}, shape-restricted additive models~\citep{Meyer2013ASN,chen2016generalized} and shape-restricted single index models~\citep{kakade2011efficient,kuchibhotla2021semiparametric}. 



\subsection{Main Results}
We design quadratic objective functionals with a linear bounded feasible domain that take network parameters and activation functions as variables to estimate the ground-truth network parameters and nonlinearities. The values of the objectives are moments over the input distribution. 


In \cref{thm:layer1,thm:layer2}, we show that 
if a ground-truth residual unit has nondegenerate weights in both layers and nonnegative weights in layer 1, then there are quadratic functionals (see \cref{equ:obj-layer2,equ:obj-layer1}) with a domain defined by linear constraints whose minimizers \begin{enumerate*}[label=\itshape\alph*\upshape)] \item are unique, and are the exact ground-truth, or \item are not unique, but can be adjusted to the exact ground-truth by a process of rescaling (see \cref{thm:rescale-layer1}).
\end{enumerate*}

The minimizers above use moments of the input distribution. In practice, exact moments from the underlying distribution are not available. To address that, we use empirical risk minimization (ERM) of the moment-valued objectives by generated samples. With functions as variables in moment-valued objectives being optimized nonparametrically, the empirical objectives become quadratic functions with linear constraints (quadratic program; QP).

We further show the convexity of our QP which guarantees its solution in polynomial time w.r.t.~sample size and dimension.
With the available QP solution,  our learning algorithm is strongly consistent (\cref{thm:strong-consistency-formal}).\footnote{By \emph{strong} consistency, we are referring to almost sure convergence of the estimator to the true parameters, rather than convergence in probability, which is a weaker type of probabilistic convergence.}
We show that if samples are generated by a ground-truth residual unit that has nondegenerate weights in both layers and nonnegative weights in layer 1, then our algorithm learns a network closer to the exact ground-truth (in Frobenius norm) as sample size grows.

\noindent \textbf{Roadmap:} Assume $\mA$ and $\mB$ are student network weights of layer 1 and 2. We first give a warm-up vanilla linear regression 
approach only knowing $\mA$ is in $\R^{d\times d}$ and $\mB$ is in $\R^{m\times d}$ (\cref{sec:lr}). We then move to the details of our nonparametric learning for layer 2 (\cref{sec:layer2-learning}), and similarly, layer 1 (\cref{app:layer1-learning}). We formalize 
the quadratic functional minimization result above,
showing how nonparametric learning allows us to select the values of $\mA$ and $\mB$ from reduced spaces that are seeded by $\mA^\ast$ and $\mB^\ast$, under which we can use linear regression (LR) effectively compared to vanilla LR.
In \cref{sec:full-alg-as-conv}, we describe the strong consistency of our methods for respective layers, formalizing 
the consistency result
using the continuous mapping theorem \citep{mann1943stochastic}. In \cref{appendix:sample}, we describe a sample complexity analysis of our algorithm. We further discuss the multi-layer case in \cref{app:multilayer}.


%


\section{Preliminaries}
We describe the notations used in this paper, introduce the model and its underlying assumptions, and state conditions which simplify the problem but which can be removed without loss of learnability.

\paragraph{Notation}

The ReLU residual units we use take a vector $\vx \in \mathbb{R}^d$ as input and return a vector $\vy \in \mathbb{R}^m$.
We use $\mA^\ast \in \R^{d \times d}$ and $\mB^\ast \in \R^{m\times d}$ to denote the ground-truth network parameters for layer 1 and 2, respectively. We use circumflex to denote predicted terms (e.g. an empirical objective function $\hat{f}$, estimated layer 1 weights $\hat{\mA}$). We use $n$ to denote the number of independently and identically distributed (i.i.d.) samples available for the training algorithm, $\{\vx^{(i)}, \vy^{(i)}\}$ to denote the $i$-th sample drawn. For an integer $k$, We define $[k]$ to be $\{1,2,\dots,k\}$, and $\ve^{(j)}$ as the standard basis vector with a $1$ at position $j$. All scalar-based operators are element wise in the case of vectors or matrices unless specified otherwise.
For a matrix $\mG$ and an index $j$, $\mG_{:,j}$ denotes its $j$th column and $\mG_{j,:}$ denotes its $j$th row.
We use $\rvx$ and $\rvy$ to refer to the input and output vectors, respectively, as random variables.

\textbf{Linear Regression:}
Linear regression, or LR, in this paper refers to the problem of estimating $\mL$ for unbiased model $\vy=\mL\vx$. In this case,  estimation is done by minimizing the empirical risk $\hat{R}\left(\mL\right) = \frac{1}{2n}\sum_{i\in[n]}\norm{\mL\vx^{(i)} - \vy^{(i)}}^2$ -- using linear least squares (LLS). Its solution has a closed form which we use as given. We refer the reader to \citet{hamilton2020time} for more information.

\paragraph{Models, Assumptions and Conditions}\label{subsec:assumption}
Following previous work \citep{livni2014computational}, we assume a given neural network structure specifies a hypothesis class that contains all the networks conforming to this structure. Learning such a class means using training samples to find a set of weights such that the neural networks predictions generalize well to unseen samples, where both the training and the unseen samples are drawn from an unobserved ground-truth distribution. In this paper, the hypothesis class is given by ReLU residual units and we assume it has sufficient expressive power to fit the ground-truth model. More specifically, we discuss the \emph{realizable} case of learning, in which the ground-truth model is set to be a residual unit taken from the hypothesis class with the form:
\begin{equation}\label{equ:preReLU-TLRN-gt}
\rvy = \mB^\ast\left[\left(\mA^\ast\rvx\right)^+ + \rvx\right],
\end{equation}

\noindent and is used to draw samples for learning. Unlike other multi-layer ReLU-affine models which do not apply skip connections, we cannot permute the weight matrices of the residual unit and retain the same function because of skip-adding $\rvx$ (it breaks symmetry). This helps us circumvent issues of identifiability,\footnote{Here, we are referring to the ability to identify the true model parameters using infinite samples.} and allows us to precisely estimate the ground-truth weight matrices $\mA^{\ast}$ and $\mB^{\ast}$.

Our general approach for residual unit layer 2 learns a scaled ground-truth weight matrix that also minimizes the layer 2 objective. The existence of such scaled equivalence of our layer 2 approach comes from what is defined below.

\begin{definition}[component-wise scale transformation]\label{def:scale-trans}
A matrix $\mA \in \R^{d\times d}$ is said to be a scale transformation w.r.t.~the $j$-th component if $\left(\mA_{j, :}\right)^\top = \emA_{j, j}\cdot\ve^{(j)}$.
\end{definition}
Additionally, estimation is more complex when the layer 2 weight matrix $\mB^\ast$ is nonsquare. For simplicity of our algorithm presentation for layer 2, we use \cref{cond:layer-2-unique} in the following sections.\footnote{In \cref{app:layer2-generalization}, we show that estimation of $\mB^{\ast}$ remains possible without satisfying \cref{cond:layer-2-unique}.}
\begin{condition}[layer 2 objective minimizer unique]\label{cond:layer-2-unique}
$\mA^\ast$ is not a scale transformation w.r.t.~any components and $\mB^\ast$ is a square matrix, i.e. $m=d$. 
\end{condition}


\ignore{
We turn to give a concise introduction to Linear Regression to ground some of the concepts mentioned in this paper. \shaycomment{This section can be skimmed through by most readers who are familiar with Linear Regression.}
Linear regression (LR) is a learning approach that fits samples generated by a ground-truth model into a linear mapping model \citep{freedman2009statistical}. Consider the unbiased formulation of a linear mapping $\vy = \mL\vx$, where $\mL$ is the parameter of the model. The empirical risk of a linear mapping by samples $\{(\vx^{(i)}, \vy^{(i)})\}_{i=1}^n$ can be estimated as
\begin{equation}\label{equ:lin-model-risk}
\hat{R}\left(\mL\right) = \frac{1}{2n}\sum_{i\in[n]}\norm{\mL\vx^{(i)} - \vy^{(i)}}^2\text{.}
\end{equation}
We define the problem of LR as the minimization of the empirical risk by \cref{equ:lin-model-risk}, also known as linear least squares (LLS) \citep{lai1978strong}, and denote $\LR\{(\vx^{(i)}, \vy^{(i)})\}_{i=1}^n = \argmin_{\mL}\hat{R}\left(\mL\right)$. Since the minimizer of \cref{equ:lin-model-risk} has an exact form,\footnote{See \citet{hamilton2020time} for the derivation of the exact minimizer in LLS.} we compute the minimizer using this exact analytic form and do not go into details about the underlying process.
}

\section{Warm-Up: Vanilla Linear Regression} 
\label{sec:lr}
Consider a ground-truth two-layer residual unit. If we assume that the inputs only contain vectors with negative entries, i.e.~$\rvx < 0$, the effect of the ReLU function in the residual unit then disappears because of the nonnegativity of $\mA^\ast$. The residual unit turns into linear model $\rvy = \mB^\ast\rvx$. Thus, direct LR on samples with negative inputs can learn the exact ground-truth layer 2 parameter $\mB^\ast$ with at least $d$ samples, when formulating the LR as a solvable full-rank linear equation system.

On the contrary, if the inputs only contain vectors with positive entries, i.e.~$\rvx > 0$, all the neurons in the hidden layer are then activated by the ReLU and the nonlinearity is eliminated. The residual unit in this case turns into $\rvy = \mB^\ast\left(\mA^\ast + \mI_d\right)\rvx$. Taking the value that left-multiplies $\rvx$ as a single weight matrix $\mD^\ast$, this is also a linear model. Direct LR on at least $d$ samples with positive inputs by the residual unit can learn the exact $\mD^\ast$. Since we have the access to the exact $\mB^\ast$, solving for the exact $\mA^\ast$ can be accomplished through solving a full-rank linear equation system $\mB^\ast\cdot\tilde{\mA} = \mD^\ast$, where the unique solution $\tilde{\mA} = \mA^\ast + \mI_d$.

While simple, this vanilla LR approach requires a large number of redundant samples, since sampled inputs usually have a small proportion of fully negative/positive vectors. Taking random input vectors i.i.d.~with respect to each entry as an example, the probability of sampling a vector with all negative entries is $p_-^d$, where $p_-$ is the probability of sampling a negative vector entry, then the expected number of samples to get one negative vector is $1 / p_-^d$. Denoting $p_+$ similarly, $1 / p_+^d$ samples are expected for a positive vector. In addition, each LR step in this approach requires $d$ such samples respectively to make the linear equation system full-rank, which implies the expected sample size to be $d$-exponential $d\cdot\left(1 / p_-^d + 1 / p_+^d\right)$. For other common input distributions such as Gaussian, the proportions of fully negative/positive vectors in sampled inputs are also expected to decrease exponentially as $d$ grows, as high-dimensional Gaussian random vectors are essentially concentrated uniformly in a sphere \citep{johnstone2007high}. Technical and experimental details about sample size expectations and the vanilla LR algorithm are further discussed in \cref{app:vanilla-lr-more-details}.

\section{Nonparametric Learning: Layer 2}\label{sec:layer2-learning}
We present how we learn a residual unit layer 2 under \cref{cond:layer-2-unique} (estimating $\mB^{\ast}$): We first design an objective functional with the arguments being a matrix and a function. The objective uses expectation of a loss over the true distribution generating the data and is uniquely minimized by $\left[\mB^\ast\right]^{-1}$ and a rectifier function (ReLU). We then formulate its ERM using nonparametric estimation as a standard convex QP, further simplified as an LP that (in the noiseless case) has the same solution as the QP to learn layer 2.

\subsection{Objective Design and Landscape}
Consider the formulation of a residual unit as in \cref{equ:preReLU-TLRN-gt}. It is possible to rewrite the model as equation:
$\mC^\ast\rvy = \left(\mA^\ast\rvx\right)^+ + \rvx$,
where the output of the hidden neuron with skip addition is on both sides of the equation, and $\mC^\ast\mB^\ast = \mI_d$. We aim to estimate the inverse of $\mB^\ast$ by matrix variable $\mC$ and the nonlinearity $\vx\mapsto\left(\mA^\ast\vx\right)^+$ by a function variable $h$. The objective is formulated as risk functional by the $\normltwo$ error between values respectively computed by $\mC$ and $h$,
\begin{equation}\label{equ:obj-layer2}
G_2\left(\mC, h\right) = \frac12\E_{\rvx} \left[ \norm{h\left(\rvx\right) + \rvx - \mC\rvy}^2 \right]\text{,}
\end{equation}
where the estimator $\mC\in\R^{d\times m}$, the domain of $h$ is the nonnegative\footnote{Setting $h$ as nonnegative ensures that \begin{enumerate*}[label=\itshape\alph*\upshape)] \item only ReLU nonlinearity minimizes $G_2$ (see \cref{thm:layer2}), and \item $h$'s nonparametric estimator is linearly constrained (explained in \cref{subsec:erm-npe-qp})\end{enumerate*}.} continuous\footnote{If $h$ is not a continuous function, only a null set of discontinuities is possible to make $G_2$ reach zero as its minimum. Setting $h$ as continuous simplifies our theoretical results that still strictly support empirical discussion.} $\R^d \to \R^d_{\geq 0}$ function space, written as $\sC^0_{\geq 0}$ in shorthand. This objective is quadratic because the forward mapping $\vx \mapsto h(\vx) + \vx$ and the backward mapping $\vy \mapsto \mC\vy$ are both linear w.r.t.~$\mC$ and $h$, and the two are linearly combined in an $\normltwo$ norm. The objective in \cref{equ:obj-layer2} is minimized by the ground-truth, i.e. $\mC^\ast$ and $\vx\mapsto\left(\mA^\ast\vx\right)^+$, is one of its minimizers. However, it is not simple to describe other variable values that minimize the objective if any exist. We give that detail under \cref{cond:layer-2-unique}, the minimizer of $G_2$ is unique, as the exact ground-truth in the given domain.
\begin{theorem}[objective minimizer, layer 2]\label{thm:layer2}
Define $G_2(\mC, h)$ as \cref{equ:obj-layer2}, where $\mC \in \R^{d\times m}$, $h\in\sC^0_{\geq 0}$. Then under \cref{cond:layer-2-unique}, $G_2(\mC, h)$ reaches its zero minimum iff $\mC=\left[\mB^\ast\right]^{-1}$ and $h: \vx \mapsto \left(\mA^\ast\vx\right)^+$.
\end{theorem}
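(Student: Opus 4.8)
The plan is to prove the two directions separately, with essentially all the work going into uniqueness. For sufficiency, I would substitute $\mC=[\mB^\ast]^{-1}$ and $h:\vx\mapsto(\mA^\ast\vx)^+$ into \autoref{equ:obj-layer2}: since $\mC\rvy=[\mB^\ast]^{-1}\mB^\ast[(\mA^\ast\rvx)^++\rvx]=(\mA^\ast\rvx)^++\rvx$, the integrand vanishes identically and $G_2=0$. As $G_2$ is an expectation of a squared norm it is nonnegative, so this pair is a global minimizer and the ``if'' direction follows.

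For necessity, suppose $G_2(\mC,h)=0$. Because $\rvx$ has full support $\R^d$ and the integrand $\tfrac12\norm{h(\vx)+\vx-\mC\rvy}^2$ is a continuous, nonnegative function of $\vx$ (recall $h\in\sC^0_{\geq0}$ and the remaining terms are piecewise linear), a continuous nonnegative function with zero expectation under a full-support law must vanish everywhere on $\R^d$. Writing $\mD=\mC\mB^\ast\in\R^{d\times d}$ and using $\rvy=\mB^\ast[(\mA^\ast\vx)^++\vx]$, this pins $h$ down completely: $h(\vx)=\mD(\mA^\ast\vx)^++(\mD-\mI)\vx$ for all $\vx$. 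The task then reduces to showing that the nonnegativity requirement $h\geq0$, together with \autoref{cond:layer-2-unique}, forces $\mD=\mI$ (whence $\mC=[\mB^\ast]^{-1}$ by invertibility of the square $\mB^\ast$, and $h:\vx\mapsto(\mA^\ast\vx)^+$).

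The key step is to convert the pointwise constraint $h\geq0$ into tractable matrix inequalities by evaluating it on the regions where $\mA^\ast\vx$ has a fixed sign pattern, on each of which $h$ is linear. I would focus on the two full-dimensional cones where $\mA^\ast\vx$ is entrywise negative and entrywise positive; both are nonempty and full-dimensional because $\mA^\ast$ is invertible, and they are the images under $[\mA^\ast]^{-1}$ of the nonpositive and nonnegative orthants, hence generated by the rays $[\mA^\ast]^{-1}(\pm\ve^{(j)})$. Since a linear functional is nonnegative on a finitely generated cone iff it is nonnegative on the generators, evaluating $h$ on these rays yields exactly $\mN\leq0$ and $\mD+\mN\geq0$ entrywise, where $\mN:=(\mD-\mI)[\mA^\ast]^{-1}$ (the remaining sign patterns produce only redundant constraints, so these two suffice).

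Finally, I would substitute $\mD=\mI+\mN\mA^\ast$ to rewrite the second inequality as $\mI+\mN(\mA^\ast+\mI)\geq0$. This is where \autoref{cond:layer-2-unique} enters. Since $\mN\leq0$ and $\mA^\ast\geq0$, every summand of the off-diagonal entry $(\mN(\mA^\ast+\mI))_{kl}$ for $k\neq l$ is nonpositive, yet the inequality forces the entry to be nonnegative, hence zero; isolating the $\delta_{kl}$-term gives $\mN_{kl}(\mA^\ast_{ll}+1)=0$, so $\mN_{kl}=0$ and $\mN$ is diagonal. The surviving terms then give $\mN_{kk}\mA^\ast_{kl}=0$ for all $k\neq l$, so if some $\mN_{kk}\neq0$ then row $k$ of $\mA^\ast$ satisfies $(\mA^\ast_{k,:})^\top=\mA^\ast_{kk}\ve^{(k)}$, i.e.\ $\mA^\ast$ is a scale transformation w.r.t.\ component $k$, contradicting \autoref{cond:layer-2-unique}. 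Hence $\mN=\vzero$ and $\mD=\mI$, completing the argument. I expect the cone-generator reduction and this concluding sign-chase to be the main obstacle; the rest is routine.
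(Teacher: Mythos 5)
Your proposal is correct, and after the shared first step it takes a genuinely different route from the paper's. Both arguments begin identically: sufficiency by substitution, and for necessity the observation that a continuous nonnegative integrand with zero expectation under a full-support law must vanish identically, which pins down $h(\vx)=\mC\vy-\vx$ and reduces everything to the pointwise inequality $\mC\vy-\vx\geq 0$ (this is exactly the paper's \autoref{lem:layer2_nonzero}). From there the paper proceeds in two stages: it first proves $\mD=\mC\mB^\ast$ is diagonal (\autoref{lem:diag}) by the symmetrization trick of adding the constraints at $\vx$ and $-\vx$ to obtain $\mD\left(\abs{\mA^\ast_{k,:}\vx}\right)\geq 0$ and then testing bespoke points, and then runs a case analysis on each diagonal entry $\emD_{i,i}$ with further hand-built counterexample points to force $\emD_{i,i}=1$ when row $i$ of $\mA^\ast$ is not a scale transformation (\autoref{thm:layer2-general}). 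You instead restrict the inequality to the two closed cones $\left[\mA^\ast\right]^{-1}(\R^d_{\leq 0})$ and $\left[\mA^\ast\right]^{-1}(\R^d_{\geq 0})$, where the ReLU linearizes, reduce to the $2d$ generator constraints $\mN\leq 0$ and $\mI+\mN(\mA^\ast+\mI)\geq 0$ with $\mN=(\mD-\mI)\left[\mA^\ast\right]^{-1}$, and finish with an entrywise sign chase exploiting $\mA^\ast\geq 0$; this correctly forces $\mN=\vzero$ under \autoref{cond:layer-2-unique}, and using only two of the $2^d$ sign-pattern cones is perfectly legitimate for the necessity direction (the unproved aside that the other cones are redundant is never used). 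Your route is more compact and more systematic for the theorem as stated; what it does not deliver is the paper's stronger intermediate result, \autoref{thm:layer2-general}, which characterizes the full scale-equivalent minimizer set when \autoref{cond:layer-2-unique} fails and is reused later for the generalization in \autoref{app:layer2-generalization} and the consistency analysis in \autoref{app:strong-consistency} (where the paper's explicit test points become the neighborhoods $\sN_i$). Your generator rays could in principle play the same role, but extracting the interval $\frac{1}{1+\emA^\ast_{j,j}}\leq\evk_j\leq 1$ for scale-transformation rows would require revisiting your final step rather than falling out for free.
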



Technical details are in \cref{app:exact-derivation}, where we use \cref{lem:layer2_nonzero} (in \cref{subsec:lp-layer2}) to prove a more general theorem 
which does not require \cref{cond:layer-2-unique} and is sufficient for \cref{thm:layer2}. In the next subsection, we construct the ERM of $G_2$ and present our convex QP formulation.
\subsection{ERM with Nonparametric Estimation is Convex QP}\label{subsec:erm-npe-qp}
Consider the second layer objective (\cref{equ:obj-layer2}) with nonnegative continuous function space as the domain of $h$. We follow \citet{vapnik1991principles} and define its standard empirical risk functional:
\begin{equation}\label{equ:empirical-risk-layer2}
\hat{G}_2(\mC, h) = \frac{1}{2n}\sum_{i\in [n]}\norm{h(\vx^{(i)}) + \vx^{(i)} - \mC\vy^{(i)}}^2\text{.}
\end{equation}
The function variable $h\in\sC^0_{\geq 0}$ can be optimized either parametrically or nonparametrically. If we parameterize $h$, such nonlinearity w.r.t.~its parameters would make $\hat{G}_2$ lose its quadratic form. Instead, we estimate $h$ nonparametrically: for each sample input $\vx^{(i)}$, we introduce variables $\bm{\xi}^{(i)}$ that estimate mapped values by $h$. This avoids introducing nonlinearity to the objective and keeps $\hat{G}_2$ quadratic. On the other hand, the domain of $h$, i.e. nonnegative continuous function space, turns into a set of linear inequalities as constraints when optimizing the learning objective 
nonparametrically. In this sense, learning the second layer of the residual unit can be formulated as the following QP:
\begin{equation}\label{equ:layer2-QP-obj}
\min_{\mC,\,\bm{\Xi}} \hat{G}_2^{\text{NPE}}\left(\mC, \bm{\Xi}\right) \coloneqq \frac{1}{2n}\sum_{i\in [n]}\norm{\bm{\xi}^{(i)} + \vx^{(i)} - \mC\vy^{(i)}}^2,\ \text{s.t.}\  \bm{\xi}^{(i)} \geq \vzero,\, \forall\,i\in [n]\text{,} 
\end{equation}
\noindent where $\bm{\Xi} = \{\bm{\xi}^{(i)}\}_{i=1}^n$ is the nonparametric estimator of $\vx\mapsto\left(\mA^\ast\vx\right)^+$.


\textbf{Nonparametric Estimation Validation:} A solution to the ERM with nonparametric estimation, i.e. the QP, is guaranteed to be sufficient for minimizing the standard empirical risk functional (\cref{equ:empirical-risk-layer2}). More specifically, assuming $\mC$ and $\bm{\Xi} = \{\bm{\xi}^{(i)}\}_{i=1}^n$ are a solution to layer 2 QP (\cref{equ:layer2-QP-obj}), we see now that $\mC$ and $h\in\sC^0_{\geq 0}$ such that $h(\vx^{(i)}) = \bm{\xi}^{(i)}$ minimize the empirical risk functional \cref{equ:empirical-risk-layer2}. 
Conversely, a minimizer of the standard empirical risk \cref{equ:empirical-risk-layer2}, $\mC$ and $h$, corresponds to a solution to layer 2 QP as we set $\bm{\xi}^{(i)} = h(\vx^{(i)})$. Therefore, minimizing $\hat{G}_2$ and solving layer 2 QP are equivalent problems.

\textbf{Convexity:} The convexity of the QP: \cref{equ:layer2-QP-obj} is also guaranteed. First, its constraints are linear. Second, for each sample with index $i\in[n]$, the $\normltwo$ norm wraps linearity w.r.t.~$\mC$ and $\bm{\xi}^{(i)}$. Such formulation ensures the quadratic coefficient matrix is positive semidefinite. 
Thus, with the sum of convex functions still being convex, the QP objective (\cref{equ:layer2-QP-obj}) becomes convex. Even without the knowledge of how samples are generated, this QP would be a convex program. Strict proofs are in \cref{app:qp-conv}.

\subsection{LP Simplification}\label{subsec:lp-layer2}
Consider single-sample error written as $g_2\left(\mC, h; \vx, \vy\right) = \frac12 \norm{h\left(\vx\right) + \vx - \mC\vy}^2$. If there is a feasible $\mC$ such that $\mC\vy - \vx \geq 0$ holds for all $\vx\in\R^d$, then $\mC$ and $h: \vx\mapsto\mC\vy - \vx$ always minimize $g_2$ as zero, and thereby minimize the layer 2 objective (\cref{equ:obj-layer2}). Thus, we obtain a condition that is equivalent to $G_2$ reaching the minimum in \cref{thm:layer2} and avoids randomness (see \cref{lem:layer2_nonzero}).
\begin{lemma}\label{lem:layer2_nonzero}
$G_2(\mC, h)$ reaches its zero minimum iff
$\mC\vy - \vx \geq 0$ holds for any $\vx \in \R^d$ and its corresponding residual unit output $\vy$, and $h: \vx \mapsto \mC\vy - \vx$.
\end{lemma}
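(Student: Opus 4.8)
The plan is to exploit that $G_2$ is the expectation of a squared norm, hence nonnegative, so its infimum is at least zero; since the ground-truth already attains value zero (as noted after \autoref{equ:obj-layer2}), ``$G_2$ reaches its zero minimum'' simply means $G_2(\mC,h)=0$. The whole argument then reduces to characterizing when this expectation vanishes, and both directions hinge on a single continuity observation.

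For the ``only if'' direction I would start from $G_2(\mC,h)=0$. Because the integrand $\norm{h(\vx)+\vx-\mC\vy}^2$ is nonnegative and its expectation over $\rvx$ is zero, it vanishes $\rvx$-almost surely. The key step is to upgrade this almost-sure statement to a pointwise statement on all of $\R^d$. The residual map $\vx\mapsto\vy=\mB^\ast\!\left[(\mA^\ast\vx)^++\vx\right]$ is continuous and $h\in\sC^0_{\geq 0}$ is continuous, so the integrand is a continuous nonnegative function of $\vx$. Since $\rvx$ has full support $\R^d$, every nonempty open set carries positive probability; a continuous nonnegative function that vanished at no single point would be strictly positive on an open neighborhood of positive measure, contradicting almost-sure vanishing. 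Hence the integrand vanishes identically, giving $h(\vx)+\vx-\mC\vy=\vzero$, i.e.\ $h:\vx\mapsto\mC\vy-\vx$ for every $\vx\in\R^d$. Finally, the admissibility constraint $h\in\sC^0_{\geq 0}$ forces $h(\vx)=\mC\vy-\vx\geq\vzero$ everywhere, which is exactly the claimed inequality.

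For the ``if'' direction I would substitute $h:\vx\mapsto\mC\vy-\vx$ into the integrand, which then collapses to $\vzero$ identically, so $G_2(\mC,h)=0$ attains the global minimum. It remains only to verify that this $h$ is a legitimate element of $\sC^0_{\geq 0}$: continuity is inherited from continuity of the residual map, and nonnegativity is precisely the hypothesis $\mC\vy-\vx\geq\vzero$.

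The main obstacle is the continuity-plus-full-support upgrade from almost-sure to pointwise equality; every other step is a direct substitution. I would also flag one well-definedness point worth stating explicitly: under the noiseless ground-truth model \autoref{equ:preReLU-TLRN-gt}, $\vy$ is a deterministic function of $\vx$, so $\vx\mapsto\mC\vy-\vx$ is a genuine function of $\vx$ and the constructed $h$ is well defined. This lemma deliberately phrases the zero-minimum condition through the deterministic inequality $\mC\vy-\vx\geq\vzero$, removing any reference to the distribution of $\rvx$, which is what subsequently enables the LP reformulation.
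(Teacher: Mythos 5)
Your proof is correct and follows essentially the same route as the paper's: the ``$\impliedby$'' direction is direct substitution, and the ``$\implies$'' direction rests on the same key step of upgrading almost-sure vanishing of the nonnegative integrand to pointwise vanishing via continuity and the full support of $\rvx$. The only cosmetic difference is ordering --- the paper first derives the inequality $\mC\vy-\vx\geq 0$ by a coordinate-wise contradiction using only $h\geq 0$, and invokes the continuity of $h$ afterwards to pin down $h$ on the residual null set, whereas you derive $h(\vx)=\mC\vy-\vx$ everywhere first and then read off the inequality from $h\in\sC^0_{\geq 0}$; both are valid instances of the same argument.
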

The pointwise (for every $\vx \in \R^d$) satisfaction of the inequality in \cref{lem:layer2_nonzero} describes the solution space for the minimization of $G_2$. The sufficiency of satisfying this inequality to minimize $G_2$ is directly obtained by assigning $h: \vx \mapsto \mC \vy - \vx$ where $\mC$ complies with $\mC\vy - \vx \geq 0$ for all $\vx$ in the support space $\R^d$. The necessity of satisfying this inequality comes from its contraposition: If a $\mC$ violates the inequality, there must be a non-null set of $\vx$ that yield the violation due to the continuity of $\mC\vy - \vx$. In this sense, the resulting $G_2$ value becomes nonzero.

Empirically speaking, we cannot solve an inequality that holds w.r.t.~to every point in the support space if we only observe finite samples. 
We can only estimate $\mC$ by solving the inequality that holds w.r.t.~each sample. Following this, we formulate such estimation as to find a feasible point in the space defined by a set of linear inequalities, each of which corresponds to a sample: $\mC\vy^{(i)} - \vx^{(i)} \geq 0$. Each point in the feasibility defined by the inequalities has a one-to-one correspondence in the solution space to layer 2 QP (\cref{equ:layer2-QP-obj}): $\mC \leftrightarrow (\mC, \{\mC\vy^{(i)} - \vx^{(i)}\}_{i\in[n]})$. The set of inequalities can be solved by a standard LP with a constant objective and with constraints that are inequalities, i.e.
\begin{equation}\label{equ:layer2-LP-noiseless}
\min_{\mC}\ \text{const}\text{,}\ \text{s.t.} \ \mC\vy^{(i)} - \vx^{(i)} \geq 0,\, \forall\,i\in[n]\text{.}
\end{equation}
With the one-to-one correspondences, LP: \cref{equ:layer2-LP-noiseless} and QP: \cref{equ:layer2-QP-obj} have identical solution spaces (feasible regions) for layer 2. Moreover, LP: \cref{equ:layer2-LP-noiseless} is also a convex program.

\begin{algorithm}[t]
\centering
\caption{Learn a ReLU residual unit, layer 2.}\label{alg:preReLU-TLRN-QP-Layer2}
\begin{algorithmic}[1]
\STATE {\bfseries Input:}
$\{(\vx^{(i)}, \vy^{(i)})\}_{i=1}^n$, samples drawn by \cref{equ:preReLU-TLRN-gt}.
\STATE {\bfseries Output:}
$\hat{\mB}$, $\hat{\bm{\Xi}}$, a layer 2 and $\vx \mapsto \left(\mA^\ast\vx\right)^+$ estimate.
\STATE Go to line 4 if QP, line 5 if LP.
\STATE Solve QP: \cref{equ:layer2-QP-obj} and obtain a $\hat{G}_2^{\text{NPE}}$ minimizer, denoted by $\hat{\mC}$, $\hat{\bm{\Xi}}$. Go to line 6.
\STATE Solve LP: \cref{equ:layer2-LP-noiseless} and obtain a minimizer $\hat{\mC}$, then assign $\hat{\bm{\xi}}^{(i)} \gets \hat{\mC}\vy^{(i)} - \vx^{(i)}$.\label{algline:layer-2-LP}
\RETURN $\hat{\mC}^{-1}$, $\hat{\bm{\Xi}}$.
\end{algorithmic}
\end{algorithm}

\cref{alg:preReLU-TLRN-QP-Layer2} summarizes the layer 2 estimator: Simply solve the QP/LP\footnote{We use \texttt{CVX} \citep{cvx, gb08} that calls \texttt{SDPT3} \citep{toh1999sdpt3} (a free solver under \texttt{GPLv3} license) and solves our convex QP/LP in polynomial time. See \cref{app:solving-convex-programs} for technical details.\label{ftn:cvx}} and return the inverse of $\hat{\mC}$ as the estimate of layer 2 weights and $\hat{\bm{\Xi}}$ as $\vx\mapsto\left(\mA^\ast\vx\right)^+$ estimate. Regardless of time complexity, QP and LP in \cref{alg:preReLU-TLRN-QP-Layer2} work equivalently since their solution spaces are equivalent to each other.

Our nonparametric learning directly finds a unique layer 2 estimate under \cref{cond:layer-2-unique}. In the general case without \cref{cond:layer-2-unique} (discussed in \cref{app:layer2-generalization}), nonparametric learning essentially reduces the possible values of $\mB$ from $\R^{m\times d}$ to a $\mB^\ast$ scale-equivalent matrix space, where LR uses sampled data much more efficiently than vanilla LR on layer 2 (\cref{sec:lr}).




\section{Nonparametric Learning: Layer 1}
\label{app:layer1-learning}
With layer 2 learned, the outputs of the hidden neurons become observable. 
The two-layer problem is thereby reduced to single-layer. Consider a ground-truth single-layer model: $\rvh = \left(\mA^\ast\rvx\right)^+$. To construct a learning objective for this model, we rewrite the model as a nonlinearity plus a linear mapping by $\mA^\ast$: $\rvh = \left(-\mA^\ast\rvx\right)^+ + \mA^\ast\rvx$, where on both sides of the equation is the output of layer 1, and the nonlinearity is $\vx \mapsto \left(-\mA^\ast\vx\right)^+$. The objective of layer 1 is formulated as:
\begin{equation}\label{equ:obj-layer1}
G_1\left(\mA, r\right) = \frac12\E_{\rvx} \left[ \norm{r(\rvx) + \mA\rvx - \rvh}^2 \right],
\end{equation}
where $\mA\in\R^{d\times d}$ is of the layer 1 weights estimator, the domain of $r$ is also $\sC^0_{\geq 0}$. 
The minimizer $\mA$ of the risk $G_1$ falls into a matrix space such that for any matrix $\mA$ in the space, each row of $\mA$ is a scaled-down version of the same row of $\mA^\ast$ without changing the direction (\cref{thm:layer1}).
\begin{theorem}[objective minimizer space, layer 1]\label{thm:layer1}
Define $G_1(\mA, r)$ as \cref{equ:obj-layer1}, where $\mA\in\R^{d\times d}$, $r\in\sC^0_{\geq 0}$. Then $G_1(\mA, r)$ reaches its zero minimum iff for each $j \in [d]$, $\mA_{j,:} = \evk_j\mA^\ast_{j,:}$ where $0 \leq \evk_j \leq 1$, and $r: \vx\mapsto\left(\mA^\ast\vx\right)^+ - \diag(\vk)\cdot\mA^\ast\vx$.
\end{theorem}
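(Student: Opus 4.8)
The plan is to exploit that $G_1$ is the expectation of a squared norm, so $G_1 \geq 0$ and $G_1(\mA, r) = 0$ iff the integrand $\norm{r(\rvx) + \mA\rvx - \rvh}^2$ vanishes almost surely with respect to the distribution of $\rvx$. First I would upgrade this almost-sure vanishing to vanishing at every point of $\R^d$: since $r$ is continuous and $\vx \mapsto \mA\vx$, $\vx \mapsto (\mA^\ast\vx)^+$ are continuous, the map $g(\vx) = \norm{r(\vx) + \mA\vx - (\mA^\ast\vx)^+}^2$ is continuous, so $\{g > 0\}$ is open; because $\rvx$ has full support $\R^d$, an open set of probability zero must be empty, forcing $g \equiv 0$. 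Thus $G_1 = 0$ is equivalent to $r(\vx) = (\mA^\ast\vx)^+ - \mA\vx$ for every $\vx$, and since $r \in \sC^0_{\geq 0}$ must be nonnegative, the whole statement reduces to characterizing the matrices $\mA$ for which
\[
(\mA^\ast\vx)^+ - \mA\vx \geq \vzero \quad \text{for all } \vx \in \R^d.
\]

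This vector inequality decouples into $d$ independent scalar inequalities, one per row. Writing $\va = (\mA^\ast_{j,:})^\top$ and $\vb = (\mA_{j,:})^\top$, the $j$-th condition reads $\max\{0, \va^\top\vx\} - \vb^\top\vx \geq 0$ for all $\vx \in \R^d$; since $\mA^\ast$ is full-rank, $\va \neq \vzero$. I would therefore prove the pointwise equivalence: this scalar inequality holds for all $\vx$ iff $\vb = k\va$ with $0 \leq k \leq 1$. The core of the argument is the necessity direction. Restricting to the hyperplane $\{\vx : \va^\top\vx = 0\}$, the inequality becomes $-\vb^\top\vx \geq 0$; applying it to both $\vx$ and $-\vx$ (both lie in the hyperplane) forces $\vb^\top\vx = 0$ for every $\vx \perp \va$, so $\vb \in \mathrm{span}\{\va\}$ and $\vb = k\va$. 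Feeding $\vb = k\va$ back in and evaluating on the two half-spaces $\va^\top\vx > 0$ and $\va^\top\vx < 0$ (both nonempty since $\va \neq \vzero$) gives $(1-k)\va^\top\vx \geq 0$ and $-k\va^\top\vx \geq 0$ respectively, pinning $k$ to $[0,1]$.

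The sufficiency direction is a direct check: with $\vb = k\va$ and $0 \leq k \leq 1$, one splits on the sign of $\va^\top\vx$ and verifies nonnegativity in each case. Reassembling the $d$ rows yields $\mA_{j,:} = \evk_j\mA^\ast_{j,:}$ with $0 \leq \evk_j \leq 1$, and substituting back into $r(\vx) = (\mA^\ast\vx)^+ - \mA\vx$ recovers exactly $r : \vx \mapsto (\mA^\ast\vx)^+ - \diag(\vk)\mA^\ast\vx$, which is continuous and, by the same pointwise nonnegativity, lies in $\sC^0_{\geq 0}$. I expect the main obstacle to be the passage from almost-sure to everywhere equality, where the full-support assumption on $\rvx$ and the continuity of $r$ are both essential and where an informal ``the integrand must be zero'' would hide a gap; the hyperplane/antisymmetry step forcing $\vb \parallel \va$ is the other delicate point, since it is precisely what rules out off-axis perturbations of the rows of $\mA$.
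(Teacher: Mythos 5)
Your proposal is correct, and its overall skeleton matches the paper's: first upgrade the vanishing of the expected squared error to the pointwise inequality $\left(\mA^\ast\vx\right)^+ - \mA\vx \geq \vzero$ on all of $\R^d$ (the paper packages this as \autoref{lem:layer1_nonzero}, proved exactly as you describe via continuity of the integrand and the full support of $\rvx$), then reduce to the single-row question of when $\left({\va^\ast}^\top\vx\right)^+ - \va^\top\vx \geq 0$ holds for every $\vx$. The one place you genuinely diverge is the necessity step of that scalar characterization. The paper rules out non-collinear rows with a single test vector, $\vx = \frac{\norm{\va^\ast}}{\norm{\va}}\va - \va^\ast$, for which ${\va^\ast}^\top\vx = \norm{\va^\ast}^2\left(\cos\theta - 1\right) < 0$ while $\va^\top\vx = \norm{\va}\norm{\va^\ast}\left(1-\cos\theta\right) > 0$, and then kills $k>1$ by testing $\vx = \va^\ast$; negative $k$ is excluded in the same stroke because the first case covers every $\va$ that is not a \emph{nonnegative} multiple of $\va^\ast$. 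You instead argue in two stages: restricting to the hyperplane $\left\{\vx : {\va^\ast}^\top\vx = 0\right\}$ and testing $\pm\vx$ forces the row of $\mA$ into $\operatorname{span}\{\va^\ast\}$, after which the two open half-spaces pin $k$ to $[0,1]$. Both arguments are elementary and complete; yours is somewhat more modular (direction first, then scale) and avoids the angle computation, at the cost of an extra case to dispatch ($k<0$) that the paper's construction absorbs. One small remark: the paper's proof also treats the degenerate row $\va^\ast = \vzero$ explicitly, which you sidestep by invoking the full-rank assumption on $\mA^\ast$ --- legitimate under the paper's standing hypotheses, but worth stating if you want the row-wise equivalence to stand on its own.
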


The scale equivalence in the solution space is derived from a ReLU inequality: $\left(x\right)^+ \geq kx$ where $0 \leq k \leq 1$. Let the $j$-th row of $\mA$ be a scaled-down version of $\mA^\ast$, i.e. $\mA_{j,:} = \evk_j\mA^\ast_{j,:}$ where $0 \leq \evk_j \leq 1$. According to the inequality, we have $\left(\mA^\ast_{j,:}\vx\right)^+ \geq \evk_j\mA^\ast_{j,:}\vx$, which indicates $\left(\mA^\ast\vx\right)^+ \geq \diag(\vk)\cdot\mA^\ast\vx$. Thus, $r$ minimizing $G_1$ in \cref{thm:layer1} lies in feasibility $\sC^0_{\geq 0}$ when $\mA = \diag(\vk)\cdot\mA^\ast$.

Due to the existence of scale equivalence, we must compute the scale factor to obtain the ground-truth weights $\mA^\ast$. The scale factor $\evk_j$ is sufficiently obtainable with $\left(\mA_{j,:}\rvx\right)^+$ and $\left(\mA^\ast_{j,:}\rvx\right)^+$ observable: Conditioned on nonnegative ReLU input, we have a linear model $\mA_{j,:}\rvx = \evk_j\mA^\ast_{j,:}\rvx$ where $\mA_{j,:}\rvx$ and $\mA^\ast_{j,:}\rvx$ are observed. \cref{thm:rescale-layer1} summarizes layer 1 scale factor property, allowing us to correct a minimizer of $G_1$ to the ground-truth weights $\mA^\ast$ by computing a scalar for each row. 
\begin{theorem}[scale factor, layer 1]\label{thm:rescale-layer1}
Assume $\mA$ is a minimizer of $G_1$. Then for any $j\in[d]$, $\left(\mA_{j,:}\rvx\right)^+ / \left(\mA^\ast_{j,:}\rvx\right)^+$ is always equal to the scale factor $\evk_j$ given that $\left(\mA_{j,:}\rvx\right)^+ > 0$.
\end{theorem}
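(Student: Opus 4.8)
The plan is to treat this as a direct corollary of \autoref{thm:layer1}, so that the only genuine work is a careful sign analysis of the two ReLU arguments. First I would invoke \autoref{thm:layer1}: since $\mA$ is assumed to be a minimizer of $G_1$, that theorem tells us that for each $j \in [d]$ we have $\mA_{j,:} = \evk_j\mA^\ast_{j,:}$ for some scale factor $0 \leq \evk_j \leq 1$. This row-wise identity is the structural fact that makes the quotient in the statement meaningful, because it ties $\mA_{j,:}\rvx$ and $\mA^\ast_{j,:}\rvx$ together by a single scalar.

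Next I would fix an index $j$ and condition on the event $\left(\mA_{j,:}\rvx\right)^+ > 0$. By the definition of the ReLU, this strict positivity is equivalent to $\mA_{j,:}\rvx > 0$, and substituting the row identity gives $\evk_j\mA^\ast_{j,:}\rvx > 0$. I would first rule out the degenerate case $\evk_j = 0$: if it held, the product would be $0$, contradicting the strict positivity; hence $\evk_j > 0$ on this event. With $\evk_j > 0$ established, the inequality $\evk_j\mA^\ast_{j,:}\rvx > 0$ forces $\mA^\ast_{j,:}\rvx > 0$ as well, so that $\left(\mA^\ast_{j,:}\rvx\right)^+ = \mA^\ast_{j,:}\rvx$ and the denominator of the quotient is nonzero, making the ratio well defined.

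Finally, on this conditioning event both ReLUs act as the identity, so $\left(\mA_{j,:}\rvx\right)^+ = \mA_{j,:}\rvx = \evk_j\mA^\ast_{j,:}\rvx$ while $\left(\mA^\ast_{j,:}\rvx\right)^+ = \mA^\ast_{j,:}\rvx$, and dividing cancels the common factor $\mA^\ast_{j,:}\rvx$ to leave exactly $\evk_j$. There is essentially no hard analytic step here; the only point requiring care is the bookkeeping of signs, namely verifying that the single hypothesis $\left(\mA_{j,:}\rvx\right)^+ > 0$ simultaneously guarantees $\evk_j > 0$ and that the ground-truth preactivation $\mA^\ast_{j,:}\rvx$ is itself positive. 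That is precisely what makes both ReLUs collapse to the identity on the conditioning event, and it is what licenses using this quotient as the estimator of $\evk_j$ in the rescaling step of \autoref{alg:preReLU-TLRN-QP-Layer1}.
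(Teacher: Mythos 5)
Your proposal is correct and follows the same route the paper takes: the paper gives no separate formal proof of \autoref{thm:rescale-layer1}, only the one-line justification preceding it that conditioning on a positive preactivation reduces both ReLUs to the identity and leaves the linear relation $\mA_{j,:}\rvx = \evk_j\mA^\ast_{j,:}\rvx$ from \autoref{thm:layer1}. Your version simply fills in the details the paper omits, in particular ruling out $\evk_j = 0$ and confirming that the denominator $\left(\mA^\ast_{j,:}\rvx\right)^+$ is strictly positive on the conditioning event, which is exactly the bookkeeping needed.
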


The elimination of randomness for $G_1$ follows the same pattern as layer 2. If there is a feasible $\mA$ such that $\left(\mA^\ast\vx\right)^+ - \mA\vx \geq 0$ holds for all $\vx\in\R^d$, such $\mA$ is a solution to our objective by \cref{equ:obj-layer1}. We can also have the following proposition that avoids randomness.
\begin{lemma}\label{lem:layer1_nonzero}
$G_1(\mA, r)$ reaches its zero minimum iff
$\vh - \mA\vx \geq 0$ holds for any $\vx \in \R^d$ and its corresponding hidden output $\vh$, and $r: \vx\mapsto\vh-\mA\vx$.
\end{lemma}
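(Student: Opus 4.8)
The plan is to mirror the argument already sketched for \autoref{lem:layer2_nonzero}, exploiting that $G_1$ is the expectation of a nonnegative quantity and that $\rvx$ has full support $\R^d$. First I would record the trivial lower bound: since the integrand $\norm{r(\rvx) + \mA\rvx - \rvh}^2$ is nonnegative, $G_1(\mA, r) \geq 0$, with equality precisely when $r(\rvx) + \mA\rvx - \rvh = \vzero$ almost surely, i.e.\ $r(\vx) = \vh - \mA\vx$ for $\rvx$-almost every $\vx$ (recalling $\vh = \left(\mA^\ast\vx\right)^+$). This reduces the ``zero minimum'' condition to an almost-sure identity that pins down $r$ on the support of $\rvx$, and frames the two directions of the equivalence.

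For the reverse direction (sufficiency) I would assume $\vh - \mA\vx \geq \vzero$ for all $\vx \in \R^d$ and set $r: \vx \mapsto \vh - \mA\vx = \left(\mA^\ast\vx\right)^+ - \mA\vx$. This $r$ is continuous, being a difference of the continuous ReLU composed with a linear map and another linear map, and by hypothesis it is nonnegative, so $r \in \sC^0_{\geq 0}$ lies in the feasible domain. With this choice the integrand vanishes identically, hence $G_1(\mA, r) = 0$, which is the minimum by the lower bound above.

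For the forward direction (necessity) I would start from $G_1(\mA, r) = 0$, which by the first step forces $r(\vx) = \vh - \mA\vx$ almost surely and, because $r \in \sC^0_{\geq 0}$, gives $\vh - \mA\vx \geq \vzero$ almost surely. The crux --- and the step I expect to be the main obstacle --- is upgrading these almost-sure statements to the pointwise-everywhere claims in the lemma. Here I would invoke that $\rvx$ is supported on all of $\R^d$: the set on which the identity holds is dense, since its complement is null and hence contains no open ball, and both $r$ and $\vx \mapsto \left(\mA^\ast\vx\right)^+ - \mA\vx$ are continuous, so two continuous maps agreeing on a dense set must agree on all of $\R^d$. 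Thus $r: \vx \mapsto \vh - \mA\vx$ exactly and $\vh - \mA\vx \geq \vzero$ for every $\vx$. Equivalently, by contraposition, if $\vh - \mA\vx$ had a negative entry at some $\vx_0$, continuity would force the violation on an open neighborhood, which carries positive probability under the full-support distribution and would make $G_1 > 0$. This continuity-plus-full-support passage is exactly the ``randomness elimination'' the paper invokes for layer 2, and is the only nonroutine ingredient; everything else is the elementary nonnegativity-of-expectation argument.
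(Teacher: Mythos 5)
Your proof is correct and follows essentially the same route as the paper, which itself only remarks that \autoref{lem:layer1_nonzero} is proved ``similarly to'' \autoref{lem:layer2_nonzero}: sufficiency by direct substitution of $r:\vx\mapsto\vh-\mA\vx$, and necessity by combining nonnegativity of the integrand with the continuity-plus-full-support argument that upgrades the almost-sure identity to a pointwise one. If anything, your phrasing of the contradiction in terms of the positive probability of an open neighborhood under the full-support distribution is slightly cleaner than the paper's appeal to Lebesgue measure.
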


We now turn into empirical discussion. Similar to layer 2, we formulate layer 1 QP by its empirical objective with nonparametric estimation and linear constraints representing the nonnegativity of $r$:
\begin{equation}\label{equ:layer1-QP-obj}
\min_{\mA,\,\bm{\Phi}} \hat{G}^{\text{NPE}}_1\left(\mA, \bm{\Phi}\right) \coloneqq \frac{1}{2n}\sum_{i\in [n]}\norm{\bm{\phi}^{(i)} + \mA\vx^{(i)} - \vh^{(i)}}^2,\ \text{s.t.}\ \vphi^{(i)} \geq \vzero,\, \forall\,i\in [n]\text{.}
\end{equation}
where $\bm{\Phi} = \{\bm{\phi}^{(i)}\}_{i=1}^n$ is the function estimator of $\vx\mapsto\left(\mA^\ast\vx\right)^+ - \diag(\vk)\cdot\mA^\ast\vx$ where $\vk$ refers to the scaling equivalence. Similarly, the solution space of QP: \cref{equ:layer1-QP-obj} can be represented by a set of linear inequalities as constraints of the following efficiently solvable LP
\begin{equation}\label{equ:layer1-LP-noiseless}
\min_{\mA}\ \text{const}\text{,}\ \text{s.t.} \ \vh^{(i)} - \mA\vx^{(i)} \geq \vzero,\, \forall\,i\in[n]\text{.}
\end{equation}

\begin{algorithm}[t]
\centering
\caption{Learn a ReLU residual unit, layer 1.}\label{alg:preReLU-TLRN-QP-Layer1}
\begin{algorithmic}[1]
\STATE {\bfseries Input:}
$\{(\vx^{(i)}, \vh^{(i)})\}_{i=1}^n$, layer 1 samples.
\STATE {\bfseries Output:}
$\hat{\mA}$, a layer 1 estimate.
\STATE Solve QP: \cref{equ:layer1-QP-obj} or LP: \cref{equ:layer1-LP-noiseless} and obtain a $\hat{G}_1^{\text{NPE}}$ minimizer, denoted by $\hat{\mA}$, $\hat{\bm{\Phi}}$. \COMMENT{$\hat{\bm{\Phi}}$ is no longer needed.}
\FORALL{$j\in[d]$}
\STATE $\hat{k}_j \gets \LR\{\evh^{(i)}_j,\, \shat{\mA}_{j,:}\vx^{(i)}\}_{\evh^{(i)}_j > 0}$.
\STATE $\hat{\mA}_{j,:} \gets \hat{\mA}_{j,:} / \hat{k}_j$. \COMMENT{Rescale $\hat{\mA}$.}
\ENDFOR
\RETURN $\hat{\mA}$.
\end{algorithmic}
\end{algorithm}

\cref{alg:preReLU-TLRN-QP-Layer1} describes how a layer 1 is learned: First, a $G_1$ minimizer estimate $\hat{\mA}$ is obtained by solving the QP/LP. Then for the $j$-th row, the scale factor $\evk_j$ is estimated by running LR on $\evh^{(i)}_j$ and $\shat{\mA}_{j,:}\vx^{(i)}$ s.t. $\evh^{(i)}_j > 0$ to correct $\hat{\mA}$, as $\mA_{j,:}\rvx = \evk_j\ervh_j$ is an unbiased linear model given that $\ervh_j > 0$. By nonparametric learning, the value space of $\mA$ is reduced from $\R^{d\times d}$ to $\{\diag(\vk)\cdot\mA^\ast \mid 0 \leq \evk_j \leq 1\}$, where LR uses sampled data much more efficiently than vanilla LR layer 1 (\cref{sec:lr}).

\ignore{
\begin{algorithm}[tb]
\caption{Learn layer 1 of a ReLU residual unit.}\label{alg:preReLU-TLRN-QP-Layer1}
\begin{algorithmic}[1]
\STATE {\bfseries Input:}
$\{(\vx^{(i)}, \vh^{(i)})\}_{i=1}^n$, layer 1 samples.
\STATE {\bfseries Output:}
$\hat{\mA}$, a layer 1 estimate.
\STATE Solve QP: \cref{equ:layer1-QP-obj, equ:layer1-QP-constraints} or LP: \cref{equ:layer1-LP-noiseless} and obtain a $\hat{G}_1^{\text{NPE}}$ minimizer, denoted by $\hat{\mA}$. \COMMENT{$\hat{\bm{\Phi}}$ is not further needed.}
\FORALL{$j\in[d]$}
\STATE $\hat{k}_j \gets \LR\left\{\evh^{(i)}_j,\, \shat{\mA}_{j,:}\vx^{(i)}\right\}_{\evh^{(i)}_j > 0}$.
\STATE $\hat{\mA}_{j,:} \gets \hat{\mA}_{j,:} / \hat{k}_j$. \COMMENT{Scale the $j$-th row of $\hat{\mA}$ by $1/\hat{k}_j$.}
\ENDFOR
\RETURN $\hat{\mA}$.
\end{algorithmic}
\end{algorithm}
}

\section{Full Algorithm and Analysis}\label{sec:full-alg-as-conv}
The full algorithm concatenates \cref{alg:preReLU-TLRN-QP-Layer2,alg:preReLU-TLRN-QP-Layer1} in a layerwise fashion, with observations of input/output samples from the ground-truth network and follows these steps: \begin{enumerate*}[label=\itshape\alph*\upshape)] \item Estimates layer 2 and nonlinearity: $\vx\mapsto\left(\mA^\ast\vx\right)^+$ by \cref{alg:preReLU-TLRN-QP-Layer2}. \item Estimates layer 1 by running \cref{alg:preReLU-TLRN-QP-Layer1} on input samples and the nonlinearity estimate.\end{enumerate*} It has provable guarantees. For empirical analysis, we use $\hat{\rmC}_n$ to denote the estimation of $\mC^\ast$ from $n$ random samples. Similar notation is applied to other estimations. First of all, our methods to solve respective layers, \cref{alg:preReLU-TLRN-QP-Layer2,alg:preReLU-TLRN-QP-Layer1}, are strongly consistent if any convex QPs/LPs involved can be solved exactly.
\begin{lemma}[layer 2 strong consistency]\label{lem:layer-2-con}
Under \cref{cond:layer-2-unique}, $\hat{\rmC}_n\xrightarrow{\text{a.s.}}\mC^\ast$ and $\hat{\rmB}_n\xrightarrow{\text{a.s.}}\mB^\ast$, $n\to\infty$.
\end{lemma}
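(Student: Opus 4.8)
The plan is to show that the layer~2 LP (\autoref{equ:layer2-LP-noiseless}) has a feasible region that collapses almost surely to the singleton $\{\mC^\ast\}$, and then to transfer convergence through matrix inversion. Write $\vy(\vx) = \mB^\ast\left[(\mA^\ast\vx)^+ + \vx\right]$ for the noiseless output map and, for a candidate $\mC \in \R^{d\times d}$, set $\psi_\mC(\vx) = \mC\vy(\vx) - \vx$, which is continuous in $\vx$. Denote by $\gF_n = \{\mC : \psi_\mC(\vx^{(i)}) \geq \vzero,\ \forall\, i \in [n]\}$ the feasible polyhedron after $n$ samples and by $\gF_\infty = \{\mC : \psi_\mC(\vx) \geq \vzero,\ \forall\, \vx \in \R^d\}$ its population analogue. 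Since $\mC^\ast\vy(\vx) - \vx = (\mA^\ast\vx)^+ \geq \vzero$ holds identically, $\mC^\ast \in \gF_\infty \subseteq \gF_n$ for every $n$, so each LP is feasible and returns some $\hat{\rmC}_n \in \gF_n$.

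First I would identify $\gF_\infty = \{\mC^\ast\}$: combining \autoref{lem:layer2_nonzero} (the feasibility inequality characterizes the zero-minimizers of $G_2$) with \autoref{thm:layer2} (under \autoref{cond:layer-2-unique} the unique zero-minimizer is $\mC = [\mB^\ast]^{-1} = \mC^\ast$) gives exactly this. Next I would show $\bigcap_{n\geq 1}\gF_n = \gF_\infty$ almost surely. Because $\rvx$ has full support $\R^d$, every ball with rational center and rational radius has positive probability, so almost surely the samples $\{\vx^{(i)}\}_{i\geq 1}$ are dense in $\R^d$. On this event, any $\mC \in \bigcap_n \gF_n$ satisfies $\psi_\mC \geq \vzero$ on a dense set, and continuity of $\psi_\mC$ upgrades this to $\psi_\mC \geq \vzero$ everywhere, i.e.\ $\mC \in \gF_\infty$; the reverse inclusion is immediate. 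Hence $\bigcap_n \gF_n = \{\mC^\ast\}$ almost surely.

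The main obstacle is that each $\gF_n$ is an a priori unbounded polyhedron, so a singleton intersection does not by itself force the selected minimizer $\hat{\rmC}_n$ to converge. I would resolve this through the recession cones $\gR_n = \{\mD : \mD\vy(\vx^{(i)}) \geq \vzero,\ \forall\, i \in [n]\}$, which are nested, closed, and satisfy $\bigcap_n \gR_n = \{\vzero\}$: a common nonzero recession direction $\mD$ would give $\mC^\ast + t\mD \in \bigcap_n \gF_n$ for all $t \geq 0$, contradicting the singleton. A compactness argument on the unit sphere then shows the $\gR_n$ cannot all be nontrivial, since a subsequence of unit recession vectors would converge (using nestedness and closedness) to a nonzero element of $\bigcap_n \gR_n$. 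Thus $\gR_{n_0} = \{\vzero\}$ for some $n_0$, making $\gF_n$ compact for all $n \geq n_0$. For such $n$ the sets are nested nonempty compacta with intersection $\{\mC^\ast\}$, so $\operatorname{diam}(\gF_n) \to 0$; since both $\hat{\rmC}_n$ and $\mC^\ast$ lie in $\gF_n$, we obtain $\hat{\rmC}_n \to \mC^\ast$, and because this holds on the full-measure density event, $\hat{\rmC}_n \xrightarrow{\text{a.s.}} \mC^\ast$.

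Finally I would transfer the convergence to $\hat{\rmB}_n = \hat{\rmC}_n^{-1}$. As $\mC^\ast = [\mB^\ast]^{-1}$ is invertible and matrix inversion is continuous on the open set of invertible matrices, $\hat{\rmC}_n$ is invertible for all large $n$ and the continuous mapping theorem yields $\hat{\rmB}_n = \hat{\rmC}_n^{-1} \xrightarrow{\text{a.s.}} [\mC^\ast]^{-1} = \mB^\ast$. I expect the recession-cone/eventual-boundedness step to be the crux, since it is what rules out a solver returning minimizers that drift to infinity while the intersection has already become a single point.
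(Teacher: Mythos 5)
Your proposal is correct, but it reaches \autoref{lem:layer-2-con} by a genuinely different route than the paper. The paper (in \autoref{lem:QP-LP-consistency-layer-2} and \autoref{thm:layer-2-strong-consistency}) reuses the \emph{explicit witness points} from the contradiction arguments behind \autoref{lem:diag} and \autoref{thm:layer2-general}: it fixes $3d$ neighborhoods $\sN_1,\dots,\sN_{3d}$ of those points, argues that once each neighborhood contains a sample the empirical feasible set is forced into the population solution set $\sS^\ast$, and concludes by a union bound plus Borel--Cantelli, which additionally yields a geometric decay rate $3d\left[1-\min_i P(\sN_i)\right]^n$ on the failure probability and (via Hausdorff distance) covers the general case where $\sS^\ast$ is not a singleton. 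You instead argue topologically: almost-sure density of the samples plus continuity of $\vx\mapsto\mC\vy(\vx)-\vx$ gives $\bigcap_n\gF_n=\gF_\infty$, you identify $\gF_\infty=\{\mC^\ast\}$ from \autoref{lem:layer2_nonzero} and \autoref{thm:layer2} under \autoref{cond:layer-2-unique}, and you close the argument with a recession-cone analysis showing the polyhedra are eventually compact so that nested compacta with singleton intersection have vanishing diameter. Your recession-cone step is the right thing to worry about and is handled correctly (a nonzero common recession direction would contradict the singleton intersection, and the unit-sphere compactness argument forces some $\gR_{n_0}=\{\vzero\}$); it also has the virtue of being agnostic to which feasible point the solver returns. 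What your route gives up relative to the paper's is quantitative content --- no rate, and no finite-sample exact recovery (the paper's argument shows $\hat{\rmC}_n\in\sS^\ast$ exactly for all large $n$ a.s., whereas yours shows only $\norm{\hat{\rmC}_n-\mC^\ast}_{\rm F}\le\operatorname{diam}(\gF_n)\to 0$) --- and it leans on \autoref{cond:layer-2-unique} to make the limit a singleton, so it would need the Hausdorff-distance machinery anyway to match the paper's generalization in \autoref{app:layer2-generalization}. The final transfer to $\hat{\rmB}_n$ via continuity of matrix inversion is identical to the paper's.
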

For $\hat{\rmC}_n$: In \cref{app:strong-consistency}, we prove its more general a.s. (almost sure) convergence without satisfying \cref{cond:layer-2-unique}, where the solution space to layer 2 objective is a non-compact continuous set where all the elements are scaling equivalences. We use Hausdorff distance \citep{rockafellar2009variational} as metric and prove that the empirical solution space a.s.~converges to the theoretical solution space. Then the more general a.s.~convergence holds with the strong consistency of layer 2 scale factor estimator where we use LR to estimate the scale factors.

For $\hat{\rmB}_n$: Using the continuous mapping theorem \citep{mann1943stochastic}, we directly propagate the strong consistency of $\mC^\ast$ estimator to its inverse $\mB^\ast$'s estimator.

According to full algorithm description, layer 1 estimation uses $\hat{\rmC}_n\rvy^{(i)} - \rvx^{(i)}$ as the outputs where $i\in[n]$. Thus, the strong consistency of the hidden neuron estimator is also guaranteed by the continuous mapping theorem. Following the proof sketch of the $\hat{\rmC}_n$ a.s.~convergence, we obtain the strong consistency of layer 1 estimator (See \cref{lem:layer-1-con}).
\begin{lemma}[layer 1 strong consistency]\label{lem:layer-1-con}
$\hat{\rmA}_n\xrightarrow{\text{a.s.}}\mA^\ast$, $n\to\infty$.
\end{lemma}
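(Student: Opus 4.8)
The plan is to mirror the layer-2 argument behind \autoref{lem:layer-2-con}, while carefully carrying through the extra layer of randomness introduced by feeding \emph{estimated} hidden outputs into \autoref{alg:preReLU-TLRN-QP-Layer1}. First I would isolate the two sources of error: (i) the layer-2 estimate $\hat{\rmC}_n$ used to reconstruct the hidden signal, and (ii) the finite-sample error in solving the layer-1 LP (\autoref{equ:layer1-LP-noiseless}) together with the subsequent rescaling regressions. The entry point is that, by \autoref{lem:layer-2-con}, $\hat{\rmC}_n\xrightarrow{\text{a.s.}}\mC^\ast$, so the reconstructed hidden outputs $\hat{\vh}^{(i)}_n \coloneqq \hat{\rmC}_n\vy^{(i)}-\vx^{(i)}$ converge almost surely to the true hidden outputs $\vh^{(i)} = \mC^\ast\vy^{(i)}-\vx^{(i)} = (\mA^\ast\vx^{(i)})^+$ by the continuous mapping theorem \citep{mann1943stochastic}. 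Hence the ``inputs'' to the layer-1 subroutine are themselves strongly consistent, and the layer-1 problem is, asymptotically, the idealized single-layer problem governed by \autoref{thm:layer1} and \autoref{lem:layer1_nonzero}.

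Next I would establish set convergence of the LP feasible region. Following the $\hat{\rmC}_n$ proof sketch of \autoref{app:strong-consistency}, I would show that the empirical feasible set $\{\mA : \hat{\vh}^{(i)}_n - \mA\vx^{(i)} \ge \vzero,\ i\in[n]\}$ converges in Hausdorff distance to the theoretical solution space $\{\diag(\vk)\mA^\ast : 0\le \evk_j \le 1\}$ of \autoref{thm:layer1}. The inclusion that every limit of feasible points lies in the solution space follows from a Glivenko--Cantelli-type uniform argument over the support $\R^d$, exactly as for layer 2; the only genuinely new term is the perturbation $\hat{\vh}^{(i)}_n - \vh^{(i)}$, which I would absorb into the Hausdorff bound using the uniform smallness of $\hat{\rmC}_n - \mC^\ast$ from step one. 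Consequently any LP minimizer $\hat{\rmA}_n$ satisfies $\hat{\rmA}_{n,j,:} \to \evk_j\mA^\ast_{j,:}$ almost surely for some scale factors $\evk_j\in[0,1]$.

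It then remains to show that the rescaling step recovers the true scale. By \autoref{thm:rescale-layer1}, conditioned on $\ervh_j>0$ the relation $\hat{\rmA}_{n,j,:}\rvx = \evk_j\,\ervh_j$ is an unbiased, noiseless linear model, and since $\mA^\ast$ is full rank the event $\{\ervh_j>0\}$ has positive probability, so infinitely many samples fall into the conditioning set almost surely. The standard strong consistency of linear least squares then gives $\hat{\evk}_{n,j}\xrightarrow{\text{a.s.}}\evk_j$, and dividing row $j$ by $\hat{\evk}_{n,j}$ yields $\hat{\rmA}_{n,j,:}/\hat{\evk}_{n,j}\xrightarrow{\text{a.s.}}\mA^\ast_{j,:}$. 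Assembling the $d$ rows and invoking the continuous mapping theorem once more delivers $\hat{\rmA}_n\xrightarrow{\text{a.s.}}\mA^\ast$.

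The hard part will be the coupling inside the rescaling step: the regression estimating $\evk_j$ uses the \emph{same} reconstructed outputs $\hat{\vh}^{(i)}_n$ that already carry the layer-2 error, so the ``noiseless'' model of \autoref{thm:rescale-layer1} holds only in the limit, and I must control a vanishing-bias together with vanishing-variance rather than a clean least-squares fit. More delicate still is guaranteeing non-degeneracy: the origin is always feasible, so if the solver returns a row with $\evk_j=0$ then $\hat{\evk}_{n,j}\to 0$ and the division is ill-posed. I would address this by arguing the returned minimizer has scale factors bounded away from $0$ almost surely --- either by appealing to the analytic-center point produced by the interior-point solver or by adding a selection rule preferring a maximal-$\evk$ feasible point --- and by treating the $\evk_j=0$ configuration as an exceptional (measure-zero) case. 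Establishing this uniform lower bound on $\hat{\evk}_{n,j}$ is the crux of the argument.
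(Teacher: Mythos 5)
Your overall architecture matches the paper's: hidden-output consistency via \autoref{lem:layer-2-con} and the continuous mapping theorem, then Hausdorff convergence of the layer-1 solution set to $\sT^\ast=\{\diag(\vk)\cdot\mA^\ast \mid 0\le\evk_j\le1\}$, then strong consistency of the per-row scale-factor regressions, then one more application of the continuous mapping theorem. In two places you are actually more careful than the paper: the appendix proof treats $\vh^{(i)}$ as exactly observed and dispatches the propagated layer-2 error with a one-line continuous-mapping remark, whereas you track it explicitly; and the degenerate $\evk_j=0$ case, which you call the crux, is handled in the paper only by a remark assuming the solver returns a scale factor of $0$ with probability zero.

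The genuine gap is in the middle step, where you assert that Hausdorff convergence of the empirical feasible set follows ``exactly as for layer 2'' by a Glivenko--Cantelli-type argument. It does not, and the paper does something quite different there. The layer-2 proof works by exhibiting $3d$ fixed witness points whose neighborhoods, once sampled, force the empirical feasible set into $\sS^\ast$; this finite-certificate trick fails for layer 1 because $\sT^\ast$ is a $d$-parameter continuum and for any finite sample the feasible polyhedron $\{\mA:\vh^{(i)}-\mA\vx^{(i)}\ge\vzero\}$ strictly contains it. The paper instead reformulates $G_1$ as the single-variable stochastic program $f(\mA)=\frac12\E_{\rvx}[\norm{(\mA\rvx-\rvh)^+}^2]$ (\autoref{lem:consistency-equivalence}) and invokes the SAA consistency theorem of \citet{shapiro2014lectures}, whose hypotheses require the real work: \autoref{lem:wp1-in-compact-set} shows the empirical solution sets are eventually contained w.p.~$1$ in a compact set $\sC(\mA^\ast)$, via a recession-cone analysis of the feasible polyhedron using Farkas' lemma, conic hulls, the supporting hyperplane theorem, and Borel--Cantelli; \autoref{lem:uni-as-cvgc} then gives uniform a.s.\ convergence of $\hat{f}_n$ to $f$ on that compact set via the uniform law of large numbers with an explicit dominating function. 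Without the compactness step your uniform argument has nothing to be uniform over --- the feasible polyhedron could a priori be unbounded, and the deviation $D(\hat{\sT}_n,\sT^\ast)$ would be uncontrolled. That boundedness argument, not the rescaling, is the hardest part of the paper's proof, and your proposal does not contain it.
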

By the continuous mapping theorem, the strong consistency of the full algorithm (\cref{thm:strong-consistency-formal}),
which is commonly defined by a loss function that is continuous on network weights, is implied by the strong consistency of network weights estimators (\cref{lem:layer-2-con,lem:layer-1-con}). See \cref{app:strong-consistency} for proofs of strong consistency discussions in this section.
\begin{theorem}[strong consistency]\label{thm:strong-consistency-formal}
Define $L$ as $\normltwo$ output loss. $L(\hat{\rmA}_n, \hat{\rmB}_n)\xrightarrow{\text{a.s.}}0$, $n\to\infty$.
\end{theorem}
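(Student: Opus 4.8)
The plan is to reduce the claim to the joint strong consistency of the weight estimators together with a continuity argument for the output loss, exactly as suggested in the text. First I would make the loss explicit. Writing the ground-truth output as $\rvy = \mB^\ast\left[\left(\mA^\ast\rvx\right)^+ + \rvx\right]$, the $\normltwo$ output loss of a candidate pair $(\mA, \mB)$ is
\[
L(\mA, \mB) = \E_{\rvx}\left[\norm{\mB\left[\left(\mA\rvx\right)^+ + \rvx\right] - \mB^\ast\left[\left(\mA^\ast\rvx\right)^+ + \rvx\right]}^2\right],
\]
so that in particular $L(\mA^\ast, \mB^\ast) = 0$. The goal is then to show $L(\hat{\rmA}_n, \hat{\rmB}_n) \xrightarrow{\text{a.s.}} L(\mA^\ast, \mB^\ast) = 0$.

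Next I would combine \autoref{lem:layer-2-con} and \autoref{lem:layer-1-con}. Almost sure convergence $\hat{\rmB}_n \to \mB^\ast$ and $\hat{\rmA}_n \to \mA^\ast$ on a common probability space gives joint convergence $(\hat{\rmA}_n, \hat{\rmB}_n) \xrightarrow{\text{a.s.}} (\mA^\ast, \mB^\ast)$, since the intersection of two probability-one events again has probability one. This identifies the random sequence whose image under $L$ must be controlled.

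The main work is to verify that $L$ is continuous at $(\mA^\ast, \mB^\ast)$, after which the continuous mapping theorem \citep{mann1943stochastic} delivers the conclusion. For each fixed realization of $\rvx$, the integrand is a continuous function of $(\mA, \mB)$: the map $\mA \mapsto \left(\mA\rvx\right)^+$ is continuous because the ReLU is continuous and matrix–vector multiplication is continuous, and left-multiplying by $\mB$ and taking the squared norm preserve continuity. To pass continuity through the expectation I would invoke dominated convergence: along any sequence $(\mA_k, \mB_k) \to (\mA^\ast, \mB^\ast)$ the operator norms $\norm{\mA_k}$ and $\norm{\mB_k}$ are eventually bounded, and since the ReLU is $1$-Lipschitz with $\left(\vzero\right)^+ = \vzero$, the integrand is dominated by a fixed multiple of $1 + \norm{\rvx}^2$, which is integrable provided $\rvx$ has a finite second moment. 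Hence $L$ is continuous at $(\mA^\ast, \mB^\ast)$, and combining with the joint a.s.\ convergence above yields $L(\hat{\rmA}_n, \hat{\rmB}_n) \xrightarrow{\text{a.s.}} 0$.

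I expect the continuity-through-the-expectation step to be the main obstacle, for two reasons. The ReLU is only continuous and not differentiable, so the argument must rely on continuity rather than smoothness; this is harmless here but rules out any naive gradient-based estimate of the modulus of continuity. More substantively, interchanging the limit and the expectation requires an integrability hypothesis on the input distribution (a finite second moment suffices for the domination above); if one wants the result under the more general conditions of \autoref{app:strong-consistency}, where the layer 2 solution space is a noncompact set of scaling equivalences, the continuity must be checked at every point of that limiting set, and one must additionally confirm that the scale-factor correction step—itself built from linear regressions—composes continuously with the QP/LP output so that the overall map from samples to $(\hat{\rmA}_n, \hat{\rmB}_n)$ remains amenable to the continuous mapping theorem.
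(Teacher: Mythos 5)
Your proposal is correct and follows essentially the same route as the paper: invoke \autoref{lem:layer-2-con} and \autoref{lem:layer-1-con} for joint a.s.\ convergence of $(\hat{\rmA}_n,\hat{\rmB}_n)$ to $(\mA^\ast,\mB^\ast)$, then push this through the continuous mapping theorem using continuity of $L$ at the ground truth. Your dominated-convergence justification of that continuity (under a finite second moment on $\rvx$) is a detail the paper leaves implicit, consistent with the moment assumptions it makes elsewhere in \autoref{app:strong-consistency}.
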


\section{Experiments}\label{sec:exp}

In our experiments, we describe a first set of synthetic dataset experiments, that show that our algorithm identifies the true parameters from which the data were generated (\cref{section:synthetic}), and then a second set of experiments that use our algorithm on standard real-world benchmark datasets (\cref{section:benchmark}).
In the appendices we provide further experiments. For example, in \cref{app:nonnegative}, we describe what happens when the conditions required for the correctness of our algorithms are not satisfied.

\subsection{Synthetic Data Experiments}
\label{section:synthetic}

We provide experimental analysis to demonstrate the effectiveness and robustness of our approach in comparison to stochastic gradient descent (SGD) on $\normltwo$ output loss:
$L(\mA, \mB) = \frac12\E_{\rvx}\norm{\hat{\rvy} - \rvy}^2\text{,}$
where we parameterize the output prediction by $\hat{\rvy} = \mB\left[\left(\mA\rvx\right)^+ + \rvx\right]$. Our proposed methods outperform SGD in terms of sample efficiency and robustness to different network weights and noise strengths, which indicates a poor optimization landscape of $\normltwo$ output loss for ReLU residual units.

\textbf{Setup}: The ground-truth weights are generated through i.i.d.~folded standard Gaussian\footnote{A folded Gaussian is the absolute value of a Gaussian, with p.d.f.~$p(\abs{\rx})$ where $\rx\sim\mathcal{N}$, denoted as $\abs{\mathcal{N}}$. We use folded Gaussian to ensure layer 1 weights are nonnegative.} and standard Gaussian for layer 1 and 2 respectively, i.e. $\rmA^\ast\iidsim\abs{\mathcal{N}}(0, 1)$, $\rmB^\ast\iidsim\mathcal{N}(0, 1)$. The input distribution is set to be an i.i.d.~zero mean Gaussian-uniform equal mixture $\mathcal{N}\left(-0.1, 1\right)$ -- $\mathcal{U}\left(-0.9, 1.1\right)$. SGD is conducted on mini-batch empirical losses of $L(\mA, \mB)$ with batch size $32$ for $256$ epochs in each learning trial. We apply time-based learning rate decay $\eta = \eta_0 / \left(1 + \gamma\cdot T\right) $ with initial rate $\eta_0 = 10^{-3}$ and decay rate $\gamma = 10^{-5}$, where $T$ is the epoch number. The above hyperparameters are tuned to outperform other hyperparameters in learning ReLU residual units in terms of output errors.

\textbf{Evaluation}: We use relative errors to measure the accuracy of our vector/matrix estimates: For a network with weights $\mA$ and $\mB$ and its teacher network with weights $\mA^\ast$ and $\mB^\ast$, \begin{enumerate*}[label=\itshape\alph*\upshape)] \item layer 1 error refers to $\norm{\mA - \mA^\ast} / \norm{\mA^\ast}$, similar to layer 2. \item output error refers to $\hat{\E}\left[\norm{\shat{\rvy} - \rvy} / \norm{\rvy}\right]$ by test data. \end{enumerate*}
Due to the equivalence between the solution spaces of our QP and LP without label noise, we choose LP in noiseless experiments, referred to as ``ours''. In addition, to reduce variance, the results of learning the same ground-truths are computed as means across $16$ trials.



\textbf{Sample Efficiency:} Consider \cref{fig:exp-heatmaps}. It depicts the variation in the prediction errors (warmer color, larger error) as a function of $d$ (input dimension) and the number of samples the learning algorithm is using. We compare SGD against our algorithm. We observe that our approach to the estimating the neural network is more sample efficient. For SGD, the estimation is relatively easy with only up to 10 dimensions. As expected, once the dimension grows, the sample size required for the same level of error as our method is larger. Still, overall, our method is capable of learning robustly with small sample sizes and more efficiently than SGD even for larger sample sizes.

\ignore{
\begin{table}[t]
\floatbox[{\capbeside\thisfloatsetup{capbesideposition={left,top},capbesidewidth=4cm}}]{figure}[\FBwidth]
{\caption{\textbf{Experimental values of network estimate errors for different network weights}. Values are computed from the process of learning $128$ different ground-truth networks with $d = 16$. $512$ training samples are drawn for each learning trial.}
\label{tab:robustness-weights}}
{\begin{small}
\begin{sc}
\begin{tabular}{lrrrrrr}
\toprule
&\multicolumn{2}{c}{\textbf{Layer 1}}&\multicolumn{2}{c}{\textbf{Layer 2}}&\multicolumn{2}{c}{\textbf{Output}}
\\\cmidrule(r){2-3}\cmidrule(r){4-5}\cmidrule(r){6-7}   
&Mean&Std&Mean&Std&Mean&Std\\\midrule
SGD    & $0.715$ & $0.090$
       & $1.203$ & $0.134$
       & $0.431$ & $0.038$\\
Ours   & $0.039$ & $0.008$
       & $\approx 0$ & $\approx 0$
       & $0.055$ & $0.008$
\\\bottomrule
\end{tabular}
\end{sc}
\end{small}
}
\end{table}
}

\begin{figure}[t]
\begin{floatrow}
\ffigbox{
\includegraphics[width=.45\textwidth]{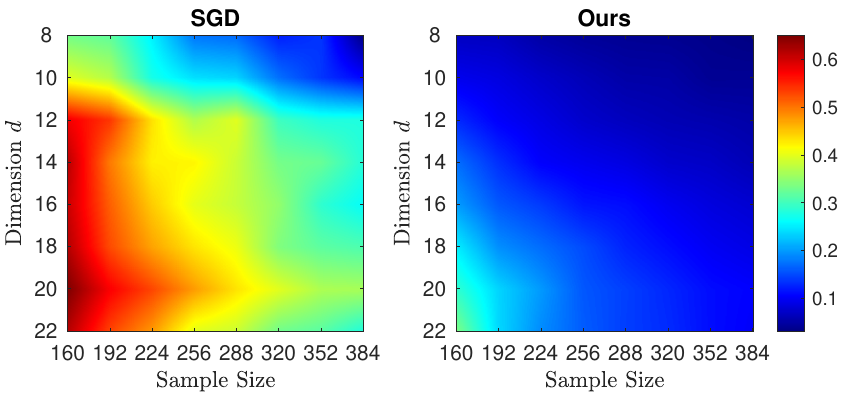}}{
\caption{\textbf{Output errors by SGD and our method for different dimensions and sample sizes}. For the same $d$, we fix the ground-truth network as sample size grows.\iffalse\shaycomment{can you please change the font in the figure itself (the heatmaps) to Times New Roman? axis, title, etc}\fi}\label{fig:exp-heatmaps}
}
\ttabbox{
\caption{\textbf{Means / Standard deviations of network estimate errors for different network weights}. Values are computed from the process of learning $128$ different ground-truth networks with $d = 16$. $512$ training samples are drawn for each learning trial.\label{tab:robustness-weights}}
}{
\begin{adjustbox}{max width=.45\textwidth}
\begin{tabular}{lrrrrrr}
\toprule
&\multicolumn{2}{c}{\textbf{Layer 1}}&\multicolumn{2}{c}{\textbf{Layer 2}}&\multicolumn{2}{c}{\textbf{Output}}
\\\cmidrule(r){2-3}\cmidrule(r){4-5}\cmidrule(r){6-7}   
&Mean&Std&Mean&Std&Mean&Std\\\midrule
SGD    & $0.715$ & $0.090$
       & $1.203$ & $0.134$
       & $0.431$ & $0.038$\\
Ours   & $0.039$ & $0.008$
       & $\approx 0$ & $\approx 0$
       & $0.055$ & $0.008$
\\\bottomrule
\end{tabular}
\end{adjustbox}
}
\end{floatrow}
\end{figure}


\textbf{Network Weight Robustness:} This experiment aims to verify whether our method can learn a broader class of residual units. 
In \cref*{tab:robustness-weights}, we see our method shows a light-tailed distribution with nearly-zero means and standard deviations for layers 1 and 2, and output errors across various ground-truth networks, whereas SGD is less robust in the same context. Our method shows strong robustness to network weight changes, indicating its applicability across the whole hypothesis class.

\begin{figure}[t]
\begin{center}
\minipage{0.3\columnwidth}%
  \includegraphics[width=0.9\linewidth]{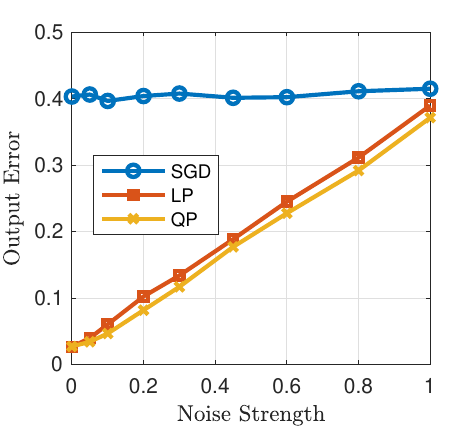}
\endminipage\hfill
\minipage{0.3\columnwidth}
  \includegraphics[width=0.9\linewidth]{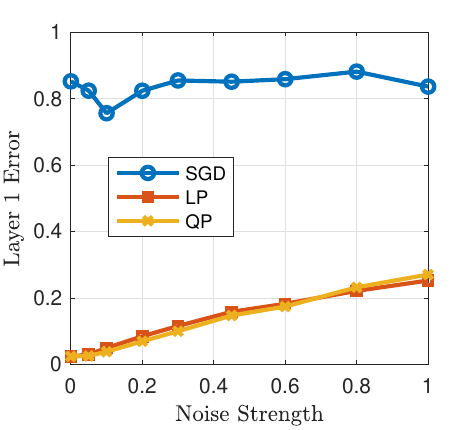}
\endminipage\hfill
\minipage{0.3\columnwidth}
  \includegraphics[width=0.9\linewidth]{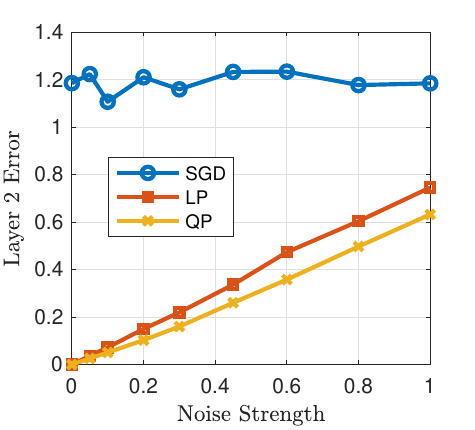}
\endminipage
\caption{\textbf{Respective errors of layers 1 and 2 and outputs of different label noise strengths for SGD, LP and QP}. We fix the ground-truth weights with $d = 10$ and only the noise strength varies. $512$ training samples are drawn for each learning trial.}
\label{fig:exp-noise-rob}
\end{center}
\end{figure}

\textbf{Noise Robustness:} \cref{fig:exp-noise-rob} confirms the robustness of our methods when output noise exists. Samples are generated by a ground-truth residual unit with output noise being i.i.d.~zero-mean Gaussian in different strengths (i.e. standard deviations). We try both QP and LP because in noisy setting the two approaches are not equivalent w.r.t. the solution space.\footnote{See \cref{app:noisy-model} for noisy model discussion.} First, SGD always gives larger errors than our methods, even though it is hardly affected by tuning the noise strength. For QP/LP, all the errors for layer 1, 2 and output grow almost linearly as noise strength increases, indicating that both QP/LP learn the optima robustly when output noise is present, where QP slightly outperforms LP.



\begin{figure}[t]
\begin{center}
\minipage{0.33\columnwidth}%
  \includegraphics[width=\linewidth]{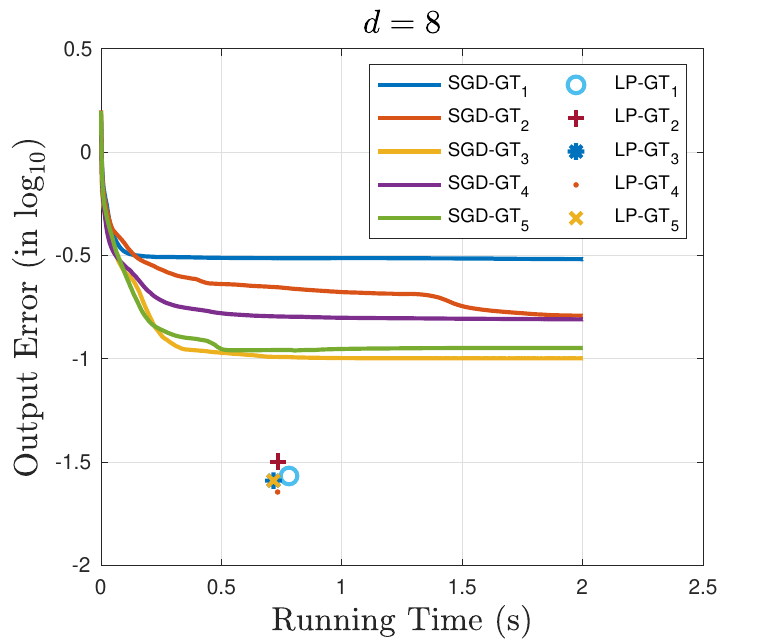}
\endminipage\hfill
\minipage{0.33\columnwidth}
  \includegraphics[width=\linewidth]{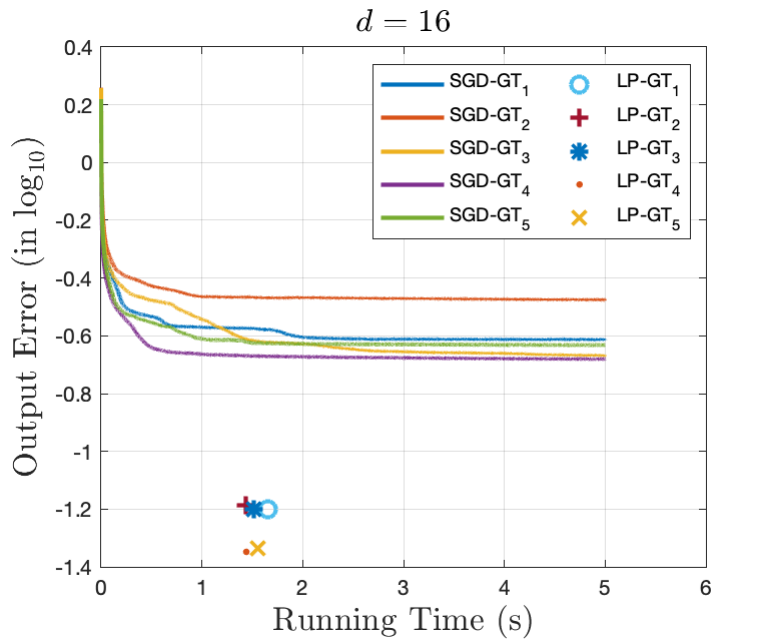}
\endminipage\hfill
\minipage{0.33\columnwidth}
  \includegraphics[width=\linewidth]{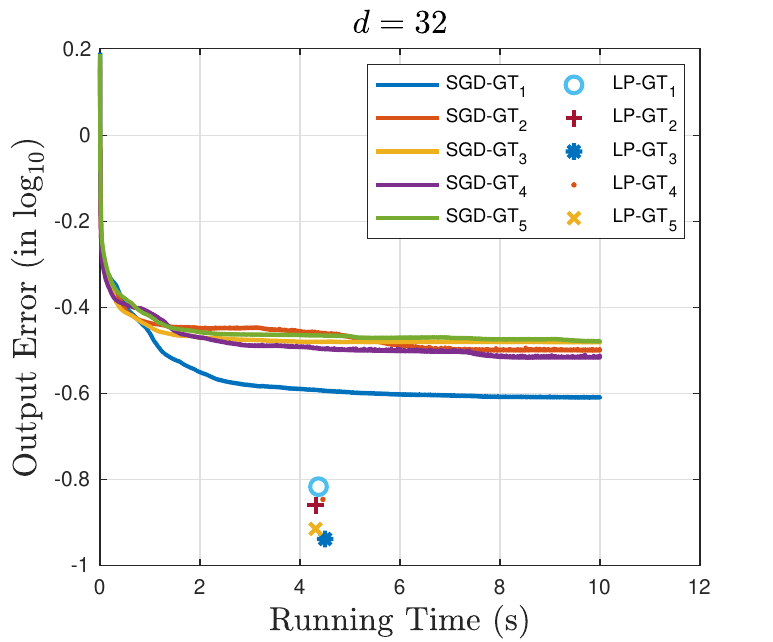}
\endminipage
\caption{\textbf{Output errors against running time by SGD and our LP algorithm for different dimensions of networks}. For each dimension size of $d=8$, $d=16$ and $d=32$, we report the learning curves of SGD and the final output error of our LP algorithm against time used on $5$ different ground-truth networks. We used $512$ training samples, drawn for learning each ground-truth network. SGD has different epochs, while the LP algorithm runs once. (CPU specification: 2.8 GHz Quad-Core Intel Core i7.)}
\label{fig:exp-running-time-error}
\end{center}
\end{figure}

\paragraph{Running Time Efficiency versus SGD}
In \cref{fig:exp-running-time-error}, we compare the running time of our algorithm against the running time of SGD for ground-truth networks sampled at random. The running time of our algorithm is significantly lower, with an error level which is also significantly lower. Furthermore, this is demonstrated across input dimensions, where the difference between the running time of SGD until convergence and our algorithm's running time increases as $d$ increases. In addition, the running time of our algorithm is fixed, and does not depend on the ground-truth network, while the running time of SGD varies based on the network sampled.


\begin{figure}[t]
\begin{center}
\minipage{0.5\columnwidth}%
\begin{center}

  \includegraphics[width=0.7\linewidth]{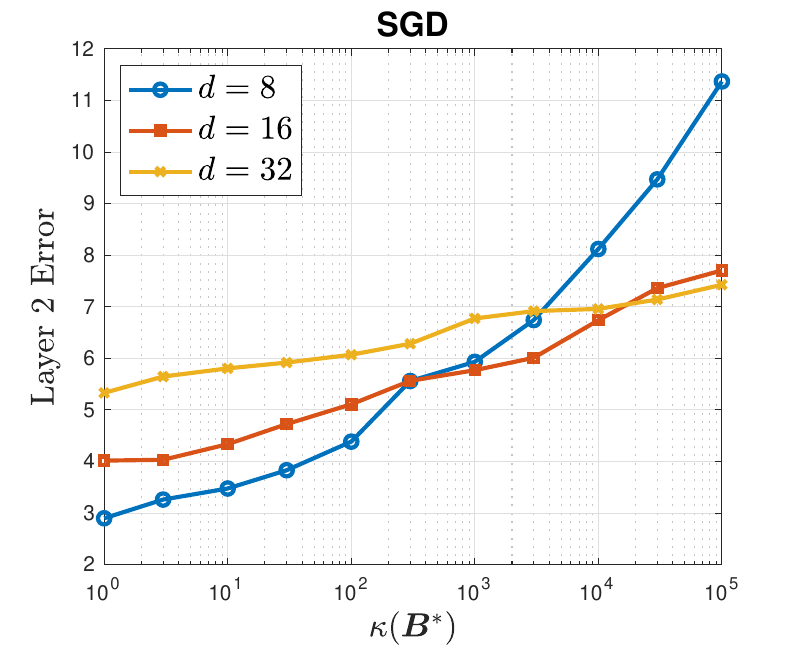}
  \end{center}
\endminipage\hfill
\minipage{0.5\columnwidth}
\begin{center}

  \includegraphics[width=0.7\linewidth]{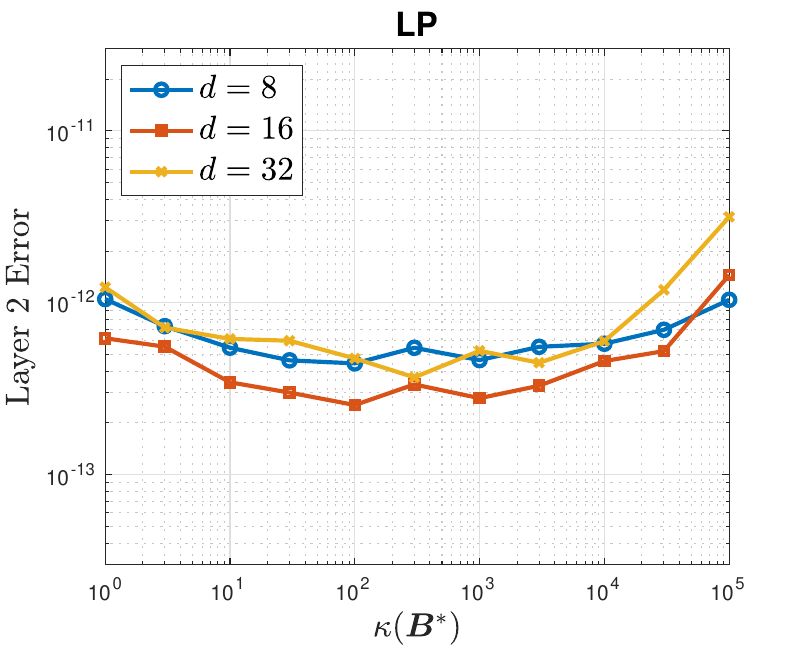}
  \end{center}
\endminipage
\caption{\textbf{Layer 2 errors against layer 2 ground-truth weight condition numbers for different dimensions of networks by SGD and our LP algorithm}. Each data point is the mean across learning $32$ different ground-truth networks with the same pair of $d$ and a condition number for $\mB^\ast$. The condition number is denoted by $\kappa(\mB^\ast)$. We used $512$ training samples, drawn for learning each ground-truth network.}
\label{fig:exp-layer-2-condition-number}
\end{center}
\end{figure}

\paragraph{Layer 2 Weights Condition Number Robustness}
Our algorithm depends on a matrix-inversion step to obtain the estimated parameters. This affects the estimation of $\mB^{\ast}$. Due to this, we further investigate the effect of the condition number of $\mB^\ast$ in the ground-truth parameters on the accuracy of estimation. For this experiment, to obtain ground-truth network parameters with different condition number, we follow the procedure of \citet{ge2019learning}, and multiply a diagonal matrix with exponentially-dropping values on the diagonal ($\lambda^{-i}$ for the $i$-th element) by two random orthonormal matrices, $\rmU$ and $\rmV^\top$, on the left and on the right respectively: $\rmB = \rmU\diag{\left(\lambda^{-1}, \dots, \lambda^{-d}\right)}\rmV^{\top}$.

In \cref{fig:exp-layer-2-condition-number}, we compare the effect of the condition number of $\mB^{\ast}$ on SGD and on our algorithm. 
SGD turns out to be quite sensitive to the condition number, especially for lower dimensions. Our algorithm, on the other hand, is almost not affected by the condition number, with a final estimation error consistently close to $0$, recovering the matrix $\mB^{\ast}$.

\subsection{Experiments with Regression Benchmark Datasets}
\label{section:rw-exps}
\label{section:benchmark}

\begin{table}[t]
    \centering
    \begin{tabular}{|l|ccc|}
    \hline
    \multirow{2}{*}{Dataset} & \multicolumn{3}{c|}{Root Mean Squared Error} \\
     & QP & SGD & SGD+QP \\
    \hline
\textsc{ Housing }  &  19.46  &  18.08 {\small / 459 }  &  12.43 {\small / 1379 } \\
\textsc{ DeltaElevators }  &  0.00240  &  0.01360 {\small / 8638 }  &  0.00209 {\small / 843 } \\
\textsc{ DeltaAilerons }  &  0.00030  &  0.00216 {\small / 16555 }  &  0.00030 {\small / 3 } \\
\textsc{ Ailerons }  &  0.00070  &  0.00193 {\small / 8039 }  &  0.00023 {\small / 1047 } \\
\textsc{ RedWine }  &  2.73  &  2.49 {\small / 5270 }  &  1.48 {\small / 5610 } \\
\textsc{ WhiteWine }  &  2.99  &  2.59 {\small / 8200 }  &  1.61 {\small / 3129 } \\
\textsc{ Jigsaw }  &  1.17  &  2.45 {\small / 6731 }  &  0.92 {\small / 2413 } \\
\hline
\end{tabular}
    \caption{\textbf{Results on regression benchmarks.} We give RMSE on several datasets with our algorithm (QP), backpropagation with SGD and backpropagation initialized with our algorithm estimates (SGD+QP). Numbers after slash denote the (average) number of epochs required for backpropagation to converge.\label{table:dataset-results}}
    \label{tab:my_label}
\end{table}

\ignore{
>> size(compactiv_X_atts)

ans =

     1    21

>> size(ailerons_X_atts)

ans =

     1    40

>> size(redwine_X_atts)

ans =

     1    11

>> size(whitewine_X_atts)

ans =

     1    11

>> size(housing_X_atts)

ans =

     1    13

>> size(elev_X_atts)

ans =

     1     6

>> size(delta_ailerons_X_atts)

ans =

     1     5
     }

In this set of experiments, we test our algorithm and compare it against stochastic gradient descent for four datasets: \textsc{Housing} (506 examples, 13 features), \textsc{DeltaElevators} (9,517 examples, 6 features), \textsc{DeltaAilerons} (7,129 examples, 5 features), \textsc{Ailerons} (7,154 examples, 41 features), \textsc{RedWine} (1,599 examples, 11 features), \textsc{WhiteWine} (4,898 examples, 11 features) and \textsc{Jigsaw} (47,991 examples, 100 features).
The first six datasets are standard benchmark datasets taken from Delve\footnote{\url{https://www.cs.toronto.edu/~delve/data/datasets.html}} and the UCI Machine Learning Repository.\footnote{\url{https://archive.ics.uci.edu/ml/index.php}}  The last dataset, \textsc{Jigsaw}, is bigger with its goal to use word FastText\footnote{\url{https://fasttext.cc/}} embeddings of tweets to predict their level of toxicity. For this set of experiments, with all datasets, we use the MOSEK solver with an academic license \citep{aps2019mosek}.

In all of our experiments, we report five-fold cross-validation results, where the first fold is used to tune the hyperparameters, and the last fold is used as both a validation set for early stopping (first half) and to report the results (second half). The models we use for backpropagation and our algorithm are identical, in the form
of \cref{equ:preReLU-TLRN-gt}. We report (the fold-average) Root Mean Square Error (RMSE), defined as the square-root of the average of the squared deviations between the predicted value and the true value. Each fold is run with five
different random seeds to initialize the backpropagation algorithm (results are averaged).

To satisfy the constraint on the length of $\rvy$, we duplicate the regression target multiple times, each time adding Gaussian noise as large as $0.1$ of the standard deviation of the regression target. More specifically, let $\vy^{(i)}$ be the $i$th example in a dataset. Then, we create a new $\vy^{\ast, (i)} \in \mathbb{R}^d$
such that $\vy_j^{\ast, (i)} = (1 + 0.1 \zeta_{i,j}) \vy_{1+(j-1 \mod d)}^{(i)}$ for $j \ge 2$ (for $i=1$, we add no noise, and just copy $\vy^{(i)}$), where $\zeta_{i,j}$ are Gaussians with the mean being the standard deviation of $\vy$ over the dataset.



With backpropagation, we use SGD with a learning rate of $0.000001$ and batch size of $500$. We run backpropagation until
the mean squared error does not change between epochs within a fraction of $1/10000$. We experiment with two types of initializations
for the backpropagation algorithm: one in which we initialize all the weights randomly with a standard Gaussian distribution,
and one in which we initialize it with the result of our quadratic programming algorithm. It has recently been shown that initializing a ResNet with positive values (or even just zero-one values) yields an improvement in estimating the network with backpropagation \cite{zhao2022zero}, supporting our use of the nonnegativity constraint for the first layer. An additional advantage of such an approach is that it removes the randomness that characterizes neural network learning.

\cref{table:dataset-results} gives the RMSE values, comparing our QP algorithm and the backpropagation
algorithm. We note that our algorithm is especially potent when used to initialize the backpropagation algorithm: not only then
it achieves lower RMSE on the test set, but it also does so in fewer iterations.

\section{Conclusion}
\label{section:conclusion}

In this paper, we address the problem of learning a general class of two-layer residual units and propose an algorithm based on landscape design and convex optimization: We demonstrate firstly that minimizers of our objective functionals can express the \textit{exact} ground-truth network. Then, we show that the corresponding ERM with nonparametric function estimation can be solved using \textit{convex} QP/LP, which indicates \textit{polynomial-time solvability} w.r.t. sample size and dimension. Moreover, our algorithms that are used to estimate both layers as well as the whole networks are \textit{strongly consistent}, with very weak conditions on input distributions.

\paragraph{Limitations and Future Work} The main limitation of our work is that the use of $L_2$ loss with the QP objective is not readily adaptable to other loss functions. Our preliminary experiments with classification datasets demonstrate that while our algorithm behaves better than $L_2$ backpropagation on such datasets, using the log-loss is more effective for these datasets. We leave it as future work to generalize our QP to a convex program with an arbitrary loss.

Our algorithm is not yet scalable for large datasets. We believe the learning algorithm could still be added to the ML toolkit for problems at a smaller scale for which we need a predictor, problems where linear regression is a good fit. Finally, we note that while we have nonnegativity constraints on the parameters ($\mA$), the class of networks we learn is expressive.
Previous work that has similar nonnegativity constraints includes binary neural networks \citep{courbariaux2015binaryconnect} and others, without a 0/1 constraint \citep{chorowski2014learning,hosseini2015deep}. We leave it for future work to alleviate this constraint. Recently, \cite{zhao2022zero} showed that initializing ResNets with 0/1 weights is as effective for training as random initialization.

\section*{Acknowledgments}

We thank Yftah Ziser, the anonymous reviewers and the action editor for their useful feedback and comments. Computational resources and software were partially provided by Edinburgh Parallel Computing Centre and through the Amazon Enterprise License for \texttt{MATLAB}.


\bibliography{tmlr}

\begin{thebibliography}{76}
\providecommand{\natexlab}[1]{#1}
\providecommand{\url}[1]{\texttt{#1}}
\expandafter\ifx\csname urlstyle\endcsname\relax
  \providecommand{\doi}[1]{doi: #1}\else
  \providecommand{\doi}{doi: \begingroup \urlstyle{rm}\Url}\fi

\bibitem[Allen-Zhu \& Li(2019)Allen-Zhu and Li]{allen2019can}
Zeyuan Allen-Zhu and Yuanzhi Li.
\newblock What can {ResNet} learn efficiently, going beyond kernels?
\newblock In \emph{Advances in Neural Information Processing Systems},
  volume~32, 2019.

\bibitem[Allen-Zhu et~al.(2019)Allen-Zhu, Li, and Liang]{allen2019learning}
Zeyuan Allen-Zhu, Yuanzhi Li, and Yingyu Liang.
\newblock Learning and generalization in overparameterized neural networks,
  going beyond two layers.
\newblock In \emph{Advances in Neural Information Processing Systems},
  volume~32, 2019.

\bibitem[ApS(2019)]{aps2019mosek}
Mosek ApS.
\newblock {MOSEK} optimization toolbox for {MATLAB}.
\newblock \emph{User’s Guide and Reference Manual, Version}, 4, 2019.

\bibitem[Arjevani \& Field(2021)Arjevani and Field]{arjevani2021analytic}
Yossi Arjevani and Michael Field.
\newblock Analytic study of families of spurious minima in two-layer {ReLU}
  neural networks: a tale of symmetry ii.
\newblock In \emph{Advances in Neural Information Processing Systems},
  volume~34, 2021.

\bibitem[Arora et~al.(2014)Arora, Bhaskara, Ge, and Ma]{arora2014provable}
Sanjeev Arora, Aditya Bhaskara, Rong Ge, and Tengyu Ma.
\newblock Provable bounds for learning some deep representations.
\newblock In \emph{Proceedings of the 31st International Conference on Machine
  Learning}, volume~32, pp.\  584--592. PMLR, 2014.

\bibitem[Ayer et~al.(1955)Ayer, Brunk, Ewing, Reid, and
  Silverman]{ayer1955empirical}
Miriam Ayer, H.~Daniel Brunk, George~M. Ewing, William~T. Reid, and Edward
  Silverman.
\newblock An empirical distribution function for sampling with incomplete
  information.
\newblock \emph{The Annals of Mathematical Statistics}, pp.\  641--647, 1955.

\bibitem[Belilovsky et~al.(2019)Belilovsky, Eickenberg, and
  Oyallon]{belilovsky2019greedy}
Eugene Belilovsky, Michael Eickenberg, and Edouard Oyallon.
\newblock Greedy layerwise learning can scale to {ImageNet}.
\newblock In \emph{Proceedings of the 36th International Conference on Machine
  Learning}, volume~97, pp.\  583--593. PMLR, 2019.

\bibitem[Bengio \& Delalleau(2011)Bengio and Delalleau]{bengio2011expressive}
Yoshua Bengio and Olivier Delalleau.
\newblock On the expressive power of deep architectures.
\newblock In \emph{Algorithmic Learning Theory}, pp.\  18--36. Springer, 2011.

\bibitem[Borel(1909)]{borel1909probabilites}
M.~{\'E}mile Borel.
\newblock Les probabilit{\'e}s d{\'e}nombrables et leurs applications
  arithm{\'e}tiques.
\newblock \emph{Rendiconti del Circolo Matematico di Palermo (1884-1940)},
  27\penalty0 (1):\penalty0 247--271, 1909.

\bibitem[Brunk(1955)]{brunk1955maximum}
Hugh~D. Brunk.
\newblock Maximum likelihood estimates of monotone parameters.
\newblock \emph{The Annals of Mathematical Statistics}, pp.\  607--616, 1955.

\bibitem[Brutzkus \& Globerson(2017)Brutzkus and
  Globerson]{brutzkus2017globally}
Alon Brutzkus and Amir Globerson.
\newblock Globally optimal gradient descent for a {ConvNet} with {Gaussian}
  inputs.
\newblock In \emph{Proceedings of the 34th International Conference on Machine
  Learning}, volume~70, pp.\  605--614. PMLR, 2017.

\bibitem[Chen \& Samworth(2016)Chen and Samworth]{chen2016generalized}
Yining Chen and Richard~J. Samworth.
\newblock Generalized additive and index models with shape constraints.
\newblock \emph{Journal of the Royal Statistical Society: Series B (Statistical
  Methodology)}, 78\penalty0 (4):\penalty0 729--754, 2016.

\bibitem[Chorowski \& Zurada(2014)Chorowski and Zurada]{chorowski2014learning}
Jan Chorowski and Jacek~M. Zurada.
\newblock Learning understandable neural networks with nonnegative weight
  constraints.
\newblock \emph{IEEE Transactions on Neural Networks and Learning Systems},
  26\penalty0 (1):\penalty0 62--69, 2014.

\bibitem[Courbariaux et~al.(2015)Courbariaux, Bengio, and
  David]{courbariaux2015binaryconnect}
Matthieu Courbariaux, Yoshua Bengio, and Jean-Pierre David.
\newblock Binaryconnect: Training deep neural networks with binary weights
  during propagations.
\newblock In \emph{Advances in Neural Information Processing Systems},
  volume~28, 2015.

\bibitem[Du \& Goel(2018)Du and Goel]{du2018improved}
Simon~S. Du and Surbhi Goel.
\newblock Improved learning of one-hidden-layer convolutional neural networks
  with overlaps.
\newblock \emph{arXiv preprint arXiv:1805.07798}, 2018.

\bibitem[Du et~al.(2018)Du, Lee, Tian, Singh, and Poczos]{du2018gradient}
Simon~S. Du, Jason~D. Lee, Yuandong Tian, Aarti Singh, and Barnabas Poczos.
\newblock Gradient descent learns one-hidden-layer {CNN}: Don’t be afraid of
  spurious local minima.
\newblock In \emph{Proceedings of the 35th International Conference on Machine
  Learning}, volume~80, pp.\  1339--1348. PMLR, 2018.

\bibitem[Ergen \& Pilanci(2021{\natexlab{a}})Ergen and
  Pilanci]{ergen2021implicit}
Tolga Ergen and Mert Pilanci.
\newblock Implicit convex regularizers of {CNN} architectures: Convex
  optimization of two- and three-layer networks in polynomial time.
\newblock In \emph{International Conference on Learning Representations},
  2021{\natexlab{a}}.

\bibitem[Ergen \& Pilanci(2021{\natexlab{b}})Ergen and
  Pilanci]{ergen2021revealing}
Tolga Ergen and Mert Pilanci.
\newblock Revealing the structure of deep neural networks via convex duality.
\newblock In \emph{Proceedings of the 38th International Conference on Machine
  Learning}, volume 139, pp.\  3004--3014. PMLR, 2021{\natexlab{b}}.

\bibitem[Farkas(1902)]{farkas1902theorie}
Julius Farkas.
\newblock Theorie der einfachen ungleichungen.
\newblock \emph{Journal f{\"u}r die reine und angewandte Mathematik (Crelles
  Journal)}, 1902\penalty0 (124):\penalty0 1--27, 1902.

\bibitem[Ge et~al.(2018)Ge, Lee, and Ma]{ge2018learning}
Rong Ge, Jason~D. Lee, and Tengyu Ma.
\newblock Learning one-hidden-layer neural networks with landscape design.
\newblock In \emph{International Conference on Learning Representations}, 2018.

\bibitem[Ge et~al.(2019)Ge, Kuditipudi, Li, and Wang]{ge2019learning}
Rong Ge, Rohith Kuditipudi, Zhize Li, and Xiang Wang.
\newblock Learning two-layer neural networks with symmetric inputs.
\newblock In \emph{International Conference on Learning Representations}, 2019.

\bibitem[Goel \& Klivans(2019)Goel and Klivans]{goel2019learning}
Surbhi Goel and Adam~R. Klivans.
\newblock Learning neural networks with two nonlinear layers in polynomial
  time.
\newblock In \emph{Proceedings of the 32nd Conference on Learning Theory},
  volume~99, pp.\  1470--1499. PMLR, 2019.

\bibitem[Goel et~al.(2018)Goel, Klivans, and Meka]{goel2018Learning}
Surbhi Goel, Adam~R. Klivans, and Raghu Meka.
\newblock Learning one convolutional layer with overlapping patches.
\newblock In \emph{Proceedings of the 35th International Conference on Machine
  Learning}, volume~80, pp.\  1783--1791. PMLR, 2018.

\bibitem[Grant \& Boyd(2008)Grant and Boyd]{gb08}
Michael Grant and Stephen Boyd.
\newblock Graph implementations for nonsmooth convex programs.
\newblock In V.~Blondel, S.~Boyd, and H.~Kimura (eds.), \emph{Recent Advances
  in Learning and Control}, Lecture Notes in Control and Information Sciences,
  pp.\  95--110. Springer-Verlag Limited, 2008.
\newblock \url{http://stanford.edu/~boyd/graph_dcp.html}.

\bibitem[Grant \& Boyd(2014)Grant and Boyd]{cvx}
Michael Grant and Stephen Boyd.
\newblock {CVX}: {MATLAB} software for disciplined convex programming, version
  2.1.
\newblock \url{http://cvxr.com/cvx}, March 2014.

\bibitem[Guntuboyina \& Sen(2018)Guntuboyina and
  Sen]{guntuboyina2018nonparametric}
Adityanand Guntuboyina and Bodhisattva Sen.
\newblock Nonparametric shape-restricted regression.
\newblock \emph{Statistical Science}, 33\penalty0 (4):\penalty0 568--594, 2018.

\bibitem[Hamilton(2020)]{hamilton2020time}
James~Douglas Hamilton.
\newblock \emph{Time series analysis}.
\newblock Princeton {U}niversity {P}ress, 2020.

\bibitem[He et~al.(2016{\natexlab{a}})He, Zhang, Ren, and Sun]{he2016deep}
Kaiming He, Xiangyu Zhang, Shaoqing Ren, and Jian Sun.
\newblock Deep residual learning for image recognition.
\newblock In \emph{Proceedings of the IEEE Conference on Computer Vision and
  Pattern Recognition}, pp.\  770--778, 2016{\natexlab{a}}.

\bibitem[He et~al.(2016{\natexlab{b}})He, Zhang, Ren, and Sun]{he2016identity}
Kaiming He, Xiangyu Zhang, Shaoqing Ren, and Jian Sun.
\newblock Identity mappings in deep residual networks.
\newblock In \emph{European Conference on Computer Vision}, pp.\  630--645.
  Springer, 2016{\natexlab{b}}.

\bibitem[Hosseini-Asl et~al.(2015)Hosseini-Asl, Zurada, and
  Nasraoui]{hosseini2015deep}
Ehsan Hosseini-Asl, Jacek~M. Zurada, and Olfa Nasraoui.
\newblock Deep learning of part-based representation of data using sparse
  autoencoders with nonnegativity constraints.
\newblock \emph{IEEE Transactions on Neural Networks and Learning Systems},
  27\penalty0 (12):\penalty0 2486--2498, 2015.

\bibitem[Janzamin et~al.(2015)Janzamin, Sedghi, and
  Anandkumar]{janzamin2015beating}
Majid Janzamin, Hanie Sedghi, and Anima Anandkumar.
\newblock Beating the perils of non-convexity: Guaranteed training of neural
  networks using tensor methods.
\newblock \emph{arXiv preprint arXiv:1506.08473}, 2015.

\bibitem[Jarre(1990)]{jarre1990convergence}
Florian Jarre.
\newblock On the convergence of the method of analytic centers when applied to
  convex quadratic programs.
\newblock \emph{Mathematical Programming}, 49\penalty0 (1-3):\penalty0
  341--358, 1990.

\bibitem[Jennrich(1969)]{jennrich1969asymptotic}
Robert~I. Jennrich.
\newblock Asymptotic properties of non-linear least squares estimators.
\newblock \emph{The Annals of Mathematical Statistics}, 40\penalty0
  (2):\penalty0 633--643, 1969.

\bibitem[Johnstone(2007)]{johnstone2007high}
Iain~M. Johnstone.
\newblock High dimensional statistical inference and random matrices.
\newblock In \emph{Proceedings of the International Congress of Mathematicians
  Madrid, August 22--30, 2006}, pp.\  307--333, 2007.

\bibitem[Kakade et~al.(2011)Kakade, Kanade, Shamir, and
  Kalai]{kakade2011efficient}
Sham~M. Kakade, Varun Kanade, Ohad Shamir, and Adam Kalai.
\newblock Efficient learning of generalized linear and single index models with
  isotonic regression.
\newblock In \emph{Advances in Neural Information Processing Systems},
  volume~24, 2011.

\bibitem[Karmarkar(1984)]{karmarkar1984new}
Narendra Karmarkar.
\newblock A new polynomial-time algorithm for linear programming.
\newblock In \emph{Proceedings of the 16th Annual ACM Symposium on Theory of
  Computing}, pp.\  302--311, 1984.

\bibitem[Kim(2014)]{kim2014convolutional}
Yoon Kim.
\newblock Convolutional neural networks for sentence classification.
\newblock In \emph{Proceedings of the 2014 Conference on Empirical Methods in
  Natural Language Processing ({EMNLP})}, pp.\  1746--1751. Association for
  Computational Linguistics, 2014.
\newblock \doi{10.3115/v1/D14-1181}.
\newblock URL \url{https://aclanthology.org/D14-1181}.

\bibitem[Kozlov et~al.(1980)Kozlov, Tarasov, and
  Khachiyan]{kozlov1980polynomial}
Mikhail~K. Kozlov, Sergei~P. Tarasov, and Leonid~G. Khachiyan.
\newblock The polynomial solvability of convex quadratic programming.
\newblock \emph{USSR Computational Mathematics and Mathematical Physics},
  20\penalty0 (5):\penalty0 223--228, 1980.

\bibitem[Krizhevsky et~al.(2012)Krizhevsky, Sutskever, and
  Hinton]{krizhevsky2012imagenet}
Alex Krizhevsky, Ilya Sutskever, and Geoffrey~E. Hinton.
\newblock {ImageNet} classification with deep convolutional neural networks.
\newblock In \emph{Advances in Neural Information Processing Systems},
  volume~25, 2012.

\bibitem[Kuchibhotla et~al.(2021)Kuchibhotla, Patra, and
  Sen]{kuchibhotla2021semiparametric}
Arun~K. Kuchibhotla, Rohit~K. Patra, and Bodhisattva Sen.
\newblock Semiparametric efficiency in convexity constrained single-index
  model.
\newblock \emph{Journal of the American Statistical Association}, pp.\  1--15,
  2021.

\bibitem[LeCun et~al.(1998)LeCun, Bottou, Bengio, Haffner,
  et~al.]{lecun1998gradient}
Yann LeCun, L{\'e}on Bottou, Yoshua Bengio, Patrick Haffner, et~al.
\newblock Gradient-based learning applied to document recognition.
\newblock \emph{Proceedings of the IEEE}, 86\penalty0 (11):\penalty0
  2278--2324, 1998.

\bibitem[Li \& Yuan(2017)Li and Yuan]{li2017convergence}
Yuanzhi Li and Yang Yuan.
\newblock Convergence analysis of two-layer neural networks with {ReLU}
  activation.
\newblock In \emph{Advances in Neural Information Processing Systems},
  volume~30, 2017.

\bibitem[Livni et~al.(2014)Livni, Shalev-Shwartz, and
  Shamir]{livni2014computational}
Roi Livni, Shai Shalev-Shwartz, and Ohad Shamir.
\newblock On the computational efficiency of training neural networks.
\newblock In \emph{Advances in Neural Information Processing Systems},
  volume~27, 2014.

\bibitem[Luenberger(1997)]{luenberger1997optimization}
David~G. Luenberger.
\newblock \emph{Optimization by vector space methods}.
\newblock John Wiley \& Sons, 1997.

\bibitem[Maas et~al.(2013)Maas, Hannun, and Ng]{maas2013rectifier}
Andrew~L. Maas, Awni~Y. Hannun, and Andrew~Y. Ng.
\newblock Rectifier nonlinearities improve neural network acoustic models.
\newblock In \emph{Proceedings of the 30th International Conference on Machine
  Learning Workshop on Deep Learning for Audio, Speech and Language
  Processing}, 2013.

\bibitem[Mann \& Wald(1943)Mann and Wald]{mann1943stochastic}
Henry~B. Mann and Abraham Wald.
\newblock On stochastic limit and order relationships.
\newblock \emph{The Annals of Mathematical Statistics}, 14\penalty0
  (3):\penalty0 217--226, 1943.

\bibitem[Meyer(2013)]{Meyer2013ASN}
Mary~C. Meyer.
\newblock A simple new algorithm for quadratic programming with applications in
  statistics.
\newblock \emph{Communications in Statistics - Simulation and Computation},
  42:\penalty0 1126--1139, 2013.

\bibitem[Mishkin et~al.(2022)Mishkin, Sahiner, and Pilanci]{mishkin2022fast}
Aaron Mishkin, Arda Sahiner, and Mert Pilanci.
\newblock Fast convex optimization for two-layer {ReLU} networks: Equivalent
  model classes and cone decompositions.
\newblock In \emph{Proceedings of the 39th International Conference on Machine
  Learning}, volume 162, pp.\  15770--15816. PMLR, 2022.

\bibitem[Monteiro \& Adler(1989)Monteiro and Adler]{monteiro1989interior}
Renato D.~C. Monteiro and Ilan Adler.
\newblock Interior path following primal-dual algorithms. part ii: Convex
  quadratic programming.
\newblock \emph{Mathematical Programming}, 44\penalty0 (1):\penalty0 43--66,
  1989.

\bibitem[Murty(1980)]{murty1980computational}
Katta~G. Murty.
\newblock Computational complexity of parametric linear programming.
\newblock \emph{Mathematical Programming}, 19\penalty0 (1):\penalty0 213--219,
  1980.

\bibitem[Nair \& Hinton(2010)Nair and Hinton]{nair2010rectified}
Vinod Nair and Geoffrey~E. Hinton.
\newblock Rectified linear units improve restricted {B}oltzmann machines.
\newblock In \emph{Proceedings of the 27th International Conference on Machine
  Learning}, pp.\  807--814, 2010.

\bibitem[Pilanci \& Ergen(2020)Pilanci and Ergen]{pilanci2020neural}
Mert Pilanci and Tolga Ergen.
\newblock Neural networks are convex regularizers: Exact polynomial-time convex
  optimization formulations for two-layer networks.
\newblock In \emph{Proceedings of the 37th International Conference on Machine
  Learning}, volume 119, pp.\  7695--7705. PMLR, 2020.

\bibitem[Rabusseau et~al.(2019)Rabusseau, Li, and
  Precup]{rabusseau2019connecting}
Guillaume Rabusseau, Tianyu Li, and Doina Precup.
\newblock Connecting weighted automata and recurrent neural networks through
  spectral learning.
\newblock In \emph{Proceedings of the 22nd International Conference on
  Artificial Intelligence and Statistics}, volume~89, pp.\  1630--1639. PMLR,
  2019.

\bibitem[Rockafellar \& Wets(2009)Rockafellar and
  Wets]{rockafellar2009variational}
R.~Tyrrell Rockafellar and Roger J-B Wets.
\newblock \emph{Variational analysis}, volume 317.
\newblock Springer Science \& Business Media, 2009.

\bibitem[Russakovsky et~al.(2015)Russakovsky, Deng, Su, Krause, Satheesh, Ma,
  Huang, Karpathy, Khosla, Bernstein, et~al.]{russakovsky2015imagenet}
Olga Russakovsky, Jia Deng, Hao Su, Jonathan Krause, Sanjeev Satheesh, Sean Ma,
  Zhiheng Huang, Andrej Karpathy, Aditya Khosla, Michael Bernstein, et~al.
\newblock {ImageNet} large scale visual recognition challenge.
\newblock \emph{International Journal of Computer Vision}, 115\penalty0
  (3):\penalty0 211--252, 2015.

\bibitem[Sahiner et~al.(2021)Sahiner, Ergen, Pauly, and
  Pilanci]{sahiner2021vector}
Arda Sahiner, Tolga Ergen, John~M. Pauly, and Mert Pilanci.
\newblock Vector-output {ReLU} neural network problems are copositive programs:
  Convex analysis of two layer networks and polynomial-time algorithms.
\newblock In \emph{International Conference on Learning Representations}, 2021.

\bibitem[Schmidt \& Mattheiss(1977)Schmidt and
  Mattheiss]{schmidt1977probability}
Brian~K. Schmidt and T.~H. Mattheiss.
\newblock The probability that a random polytope is bounded.
\newblock \emph{Mathematics of Operations Research}, 2\penalty0 (3):\penalty0
  292--296, 1977.

\bibitem[Seijo \& Sen(2011)Seijo and Sen]{seijo2011nonparametric}
Emilio Seijo and Bodhisattva Sen.
\newblock Nonparametric least squares estimation of a multivariate convex
  regression function.
\newblock \emph{The Annals of Statistics}, 39\penalty0 (3):\penalty0
  1633--1657, 2011.

\bibitem[Sen \& Singer(1994)Sen and Singer]{sen1994large}
Pranab~K. Sen and Julio~M. Singer.
\newblock \emph{Large sample methods in statistics: an introduction with
  applications}, volume~25.
\newblock CRC Press, 1994.

\bibitem[Shapiro et~al.(2014)Shapiro, Dentcheva, and
  Ruszczy{\'n}ski]{shapiro2014lectures}
Alexander Shapiro, Darinka Dentcheva, and Andrzej Ruszczy{\'n}ski.
\newblock \emph{Lectures on stochastic programming: modeling and theory}.
\newblock SIAM, 2014.

\bibitem[Soltanolkotabi(2017)]{Soltanolkotabi2017LearningRV}
Mahdi Soltanolkotabi.
\newblock Learning {ReLU}s via gradient descent.
\newblock In \emph{Advances in Neural Information Processing Systems},
  volume~30, 2017.

\bibitem[Sutskever et~al.(2014)Sutskever, Vinyals, and
  Le]{sutskever2014sequence}
Ilya Sutskever, Oriol Vinyals, and Quoc~V. Le.
\newblock Sequence to sequence learning with neural networks.
\newblock In \emph{Advances in Neural Information Processing Systems},
  volume~27, 2014.

\bibitem[Tian(2017)]{Tian2017AnAF}
Yuandong Tian.
\newblock An analytical formula of population gradient for two-layered {ReLU}
  network and its applications in convergence and critical point analysis.
\newblock In \emph{Proceedings of the 34th International Conference on Machine
  Learning}, volume~70, pp.\  3404--3413, 2017.

\bibitem[Toh et~al.(1999)Toh, Todd, and T{\"u}t{\"u}nc{\"u}]{toh1999sdpt3}
Kim-Chuan Toh, Michael~J. Todd, and Reha~H. T{\"u}t{\"u}nc{\"u}.
\newblock {SDPT}3 -- a {MATLAB} software package for semidefinite programming,
  version 1.3.
\newblock \emph{Optimization Methods and Software}, 11\penalty0 (1-4):\penalty0
  545--581, 1999.

\bibitem[Toth et~al.(2017)Toth, O'Rourke, and Goodman]{toth2017handbook}
Csaba~D. Toth, Joseph O'Rourke, and Jacob~E. Goodman.
\newblock \emph{Handbook of discrete and computational geometry}.
\newblock CRC Press, 2017.

\bibitem[Valiant(1984)]{valiant1984theory}
Leslie~G. Valiant.
\newblock A theory of the learnable.
\newblock \emph{Communications of the ACM}, 27\penalty0 (11):\penalty0
  1134--1142, 1984.

\bibitem[van Eeden(1956)]{vaneeden1956maximum}
Constance van Eeden.
\newblock Maximum likelihood estimation of ordered probabilities, 2.
\newblock \emph{Stichting Mathematisch Centrum. Statistische Afdeling},
  \penalty0 (S 196/56), 1956.

\bibitem[Vapnik(1991)]{vapnik1991principles}
Vladimir Vapnik.
\newblock Principles of risk minimization for learning theory.
\newblock In \emph{Advances in Neural Information Processing Systems},
  volume~4, 1991.

\bibitem[Vapnik(1999)]{vapnik1999nature}
Vladimir Vapnik.
\newblock \emph{The nature of statistical learning theory}.
\newblock Springer Science \& Business Media, 1999.

\bibitem[Wu et~al.(2019)Wu, Dimakis, and Sanghavi]{wu2019learning}
Shanshan Wu, Alexandros~G. Dimakis, and Sujay Sanghavi.
\newblock Learning distributions generated by one-layer {ReLU} networks.
\newblock In \emph{Advances in Neural Information Processing Systems},
  volume~32, 2019.

\bibitem[Yang et~al.(2021)Yang, Sadeghi, Wang, and Sun]{yang2021learning}
Qiuling Yang, Alireza Sadeghi, Gang Wang, and Jian Sun.
\newblock Learning two-layer {ReLU} networks is nearly as easy as learning
  linear classifiers on separable data.
\newblock \emph{IEEE Transactions on Signal Processing}, 69:\penalty0
  4416--4427, 2021.

\bibitem[Ye \& Tse(1989)Ye and Tse]{ye1989extension}
Yinyu Ye and Edison Tse.
\newblock An extension of {K}armarkar's projective algorithm for convex
  quadratic programming.
\newblock \emph{Mathematical Programming}, 44\penalty0 (1-3):\penalty0
  157--179, 1989.

\bibitem[Zhang et~al.(2019)Zhang, Yu, Wang, and Gu]{zhang2019learning}
Xiao Zhang, Yaodong Yu, Lingxiao Wang, and Quanquan Gu.
\newblock Learning one-hidden-layer {ReLU} networks via gradient descent.
\newblock In \emph{Proceedings of the 22nd International Conference on
  Artificial Intelligence and Statistics}, volume~89, pp.\  1524--1534. PMLR,
  2019.

\bibitem[Zhang et~al.(1992)Zhang, Tapia, and Dennis]{zhang1992superlinear}
Yin Zhang, Richard~A. Tapia, and John~E. Dennis, Jr.
\newblock On the superlinear and quadratic convergence of primal-dual interior
  point linear programming algorithms.
\newblock \emph{SIAM Journal on Optimization}, 2\penalty0 (2):\penalty0
  304--324, 1992.

\bibitem[Zhao et~al.(2022)Zhao, Schaefer, and Anandkumar]{zhao2022zero}
Jiawei Zhao, Florian~Tobias Schaefer, and Anima Anandkumar.
\newblock Zer{O} initialization: Initializing neural networks with only zeros
  and ones.
\newblock \emph{Transactions on Machine Learning Research}, 2022.
\newblock URL \url{https://openreview.net/forum?id=1AxQpKmiTc}.

\bibitem[Zhong et~al.(2017)Zhong, Song, Jain, Bartlett, and
  Dhillon]{Zhong2017RecoveryGF}
Kai Zhong, Zhao Song, Prateek Jain, Peter~L. Bartlett, and Inderjit~S. Dhillon.
\newblock Recovery guarantees for one-hidden-layer neural networks.
\newblock In \emph{Proceedings of the 34th International Conference on Machine
  Learning}, volume~70, pp.\  4140--4149, 2017.

\end{thebibliography}
\bibliographystyle{tmlr}

\appendix

\section*{Appendices}
We include an outline of the appendices:
\begin{itemize}
\item \cref{app:noisy-model}: A brief discussion of our learning algorithms in a noisy context, referring to the LP slack variable technique that is used in our experiments with noise in the main paper. Also, this section provides an insight to potential future direction of this work.
\item \cref{app:solving-convex-programs}: A detailed explanation of the computational complexity and the methods through which the QPs/LPs in this paper are solved.
\item \cref{app:multilayer}: A preliminarily proposed general optimization formula for the extension of learning multi-layer ResNets, to show the scalability of our algorithm.
\item \cref{app:vanilla-lr-more-details}: A formal result and an empirical validation of the example given in the main paper in the context of the vanilla linear regression method (\cref{sec:lr}).
\item \cref{app:layer2-generalization}: A generalization of layer 2 learning to the case where \cref{cond:layer-2-unique} is not satisfied.
\item \cref{app:exact-derivation}: Proofs regarding the minimizers of the  objective functions we use.
\item \cref{app:qp-conv}: Proofs that justify the convexity of our QPs.
\item \cref{app:strong-consistency}: Proofs that show our estimation algorithm is strongly consistent.
\item \cref{app:convergence-rate}: A sample complexity justification of how our QP/LP approaches improves the exponential convergence rate that the vanilla LR approach (\cref{sec:lr}) has.
\item \cref{app:nonnegative}: Experiments with negative entries in $\rmA$ and a demonstration of the practicalities of the nonnegative entries assumption.

\end{itemize}

\section{Discussion: Noisy Model}\label{app:noisy-model}
\label{subsec:noisy-model}
Here, we discuss our learning methods in the noisy case. We introduce output noise to our model, namely
\begin{equation}
\rvy = \mB^\ast\left[\left(\mA^\ast\rvx\right)^+ + \rvx\right] + \rvz\text{,}
\end{equation}
where the label noise $\rvz \in \R^d$ is an i.i.d.~random vector with respect to each component, satisfying $\E\left[\rz\right] = 0$. In addition $\rvz$ and $\rvx$ are statistically independent. 

Taking layer 2 as an example, our original objective functional does not reach zero by substituting the ground-truth: $G_2(\mC^\ast, \vx\mapsto\left(\mA^\ast\vx\right)^+) = \E_{\rvz}[\norm{\mC^\ast\rvz}^2] = \sigma^2\Tr\mC^\ast{\mC^\ast}^{\top}$ where $\sigma$ is the noise strength. However, if layer 2 is well conditioned, the ground-truth will still assign a value close to zero to the objective. In this sense, with $G_2$'s continuity, the ground-truth can approximately minimize $G_2$, which validates our QP approach in terms of learning noisy residual units.

Our original LP (\cref{equ:layer2-LP-noiseless}) fails to give feasible solutions due to possible violation of the inequality $\mC^\ast\vy - \vx\geq 0$, since $\mC^*\vy - \vx = \left(\mA^\ast\vx\right)^+ + \mC^\ast\vz$ is not necessarily an entrywise nonnegative vector because the term $\mC^\ast\vz$ might have negative entries. 
So we introduce slack variables $\vzeta^{(i)}$ to soften the constraints:
\begin{align}
\min_{\mC, \bm{\Zeta}}&\ \ \frac1n\sum_{i\in[n]}\vone^\top\cdot\vzeta^{(i)}\text{,} \label{equ:layer2-LP-obj-noisy}\\
\text{s.t.}&\ \ \mC\vy^{(i)} - \vx^{(i)} \geq -\vzeta^{(i)},\, \vzeta^{(i)} \geq \vzero\text{.} \label{equ:layer2-LP-constraints-noisy}
\end{align}
For a sample $(\vx^{(i)}, \vy^{(i)})$ with noise $ \vz^{(i)}$, if $\left(\mA^\ast\vx^{(i)}\right)^+ + \mC^\ast\vz^{(i)} < 0$, then its $\normlone$ norm would be added to the objective. This method remedies violations to the inequality $C^*\rvy - \rvx\geq 0$. With access to a sufficiently large sample and with the ``stability'' assumption, our solution $\hat{\mC}$ would be close to $\mC^*$ since large deviations seldom rise and their penalties are diluted in the objective.

\section{Discussion: Solving Convex Programs}\label{app:solving-convex-programs}

The theoretical foundation of solving convex QP and LP has been driven to maturity in terms of computational complexity \citep{kozlov1980polynomial, murty1980computational} and convergence analysis \citep{jarre1990convergence, zhang1992superlinear}. The time complexity for a convex QP/LP is analyzed in terms of the number of scalar variables $N$ and the number of bits $L$ in the input \citep{ye1989extension}. For example, under \cref{cond:layer-2-unique}, $N = d^2 + nd$ for the QP because the matrix variables have $d^2$ scalars and the function estimators have $nd$ scalars. Similarly we have $N = d^2$ for the noiseless LP and $N = d^2 + nd$ for the noisy LP. It is guaranteed that primal-dual interior point methods can solve the convex QP/LP in a polynomial number of iterations $\mathcal{O}(\sqrt{N}L)$, where each iteration costs at worst $\mathcal{O}(N^{2.5})$ arithmetic operations from the Cholesky decomposition for the needed matrix inversion \citep{monteiro1989interior, karmarkar1984new}. This indicates that our QPs/LPs are guaranteed to be solvable in $\mathcal{O}(N^3L)$ arithmetic operations, which is generally an $\mathcal{O}(\poly(n, m, d, L))$ complexity.

To solve convex QPs/LPs in this paper, we mostly use \texttt{CVX}, a commonly used package for specifying and solving convex programs \citep{cvx, gb08}. For both QPs and LPs, \texttt{CVX} calls a solver, \texttt{SDPT3} \citep{toh1999sdpt3}, which is specified for semidefinite-quadratic-linear programming and applies interior-point methods with the computational complexity mentioned above. Experimentally, \texttt{SDPT3} indeed specifies and solves our programs fast and makes our numerical results robust and stable.

\section{Discussion: Extension to Multi-Layer Networks}\label{app:multilayer}
In this appendix, we propose a generalized optimization formula of nonparametric learning of multi-layer residual networks to show that our algorithm is scalable. Assume we have an $L$-layer network of the form
\begin{align*}
\vr_0 &= \vx \\
\vr_{l} &= \left(\mW_{l-1}^\ast\vr_{l-1}\right)^+ + \vr_{l-1},\, l = 1, \dots, L - 1 \\
\vy &= \mW_{L-1}^\ast\vr_{L-1}\text{.}
\end{align*}

We can scale the objective to use nonparameter variables $\{\bm{\xi}_{l}^{(i)}\}_{l\in[L-1]}$ to estimate every ReLU activation $\left(\mW_{l}^\ast\vr_{l}\right)^+$ in this network. Generally, for the $l$-th ReLU we can have the outer layer objective $$\min_{\bm{\xi}_{[l]}^{(i)}, \mV_l}\norm{\bm{\xi}_{l}^{(i)} + \cdots + \bm{\xi}_{1}^{(i)} + \vx^{(i)} - \mV_l\hat{\vh}^{(i)}_l},\ \text{s.t.}\ \bm{\xi}_{[l]}^{(i)} \geq 0$$
where hat denotes estimated values and $\hat{\vh}^{(i)}_{L-1} = \vy^{(i)}$, and its corresponding inner layer objective $$\min_{\bm{\phi}_{l}^{(i)}, \mW_{l-1}, \bm{\xi}_l^{(i)}}\norm{\bm{\phi}_{l}^{(i)} + \mW_{l-1}\left(\hat{\bm{\xi}}_{l-1}^{(i)} + \cdots + \hat{\bm{\xi}}_{1}^{(i)} + \vx^{(i)} \right) - \bm{\xi}_l^{(i)}},\ \text{s.t.}\ \bm{\phi}_{l}^{(i)} \geq 0,\, \bm{\xi}_{l}^{(i)} \geq 0\text{.}$$
We can apply techniques, e.g., tune the combinations of those objectives with respect to layers or whether to have a hat or not, or even get them weighted, to keep its convex quadratic form and at the same time improve its optimization landscape. Specifically when $L=2$, this becomes the two-layer ResNet learning discussed in the main paper. 

\ignore{
NOTE:

Given a $j$, there is a non-zero probability to sample $x$ for the algorithm such that $A^{\ast}_{j,:}$
is orthogonal to it. \zhunxuancomment{No. Orthogonal means $= 0$ but here it is $\leq 0$ and filtered by ReLU so it is $0$.}

In that case, we have an equality constraint on $C_{j,:}$.

For a given $j$, let the event $A_{i,j}$ be the event that $x^{(i)}$ is not orthogonal to $A^{\ast}_{j,:}$.

Let $B_{j,M}$ be the event that in $M$ samples, we have less than the dimension of $C_{j,:}$ (call it $d$) orthogonal
samples to $A^{\ast}_{j,:}$. This is a binomial event.

We have the following upper bound.

Then $p(B_{j,M}) \le \frac{(M-d)\nu}{(M\nu - d)^2}$ if $M \ge d/\nu$ (from https://math.stackexchange.com/questions/1430692/binomial-distribution-upper-bound-and-lower-bound).

Now the probability of any of $B_{j,M}$ happening over $j$ is

$P(\cup_j B_{j,M}) \le d \frac{(M-d)\nu}{(M\nu - d)^2}$

If we want this to be smaller than $\varepsilon$ we say

$d \frac{(M-d)\nu}{(M\nu - d)^2} \le \varepsilon$


}

\section{Vanilla Linear Regression: More Details}\label{app:vanilla-lr-more-details}

\begin{algorithm}[tb]
\caption{Learn a ReLU residual unit by LR.}\label{alg:res-unit-vanilla-LR}
\begin{algorithmic}[1]
\STATE {\bfseries Input:} $\{(\vx^{(i)}, \vy^{(i)})\}_{i=1}^n$, $n$ samples drawn by \cref{equ:preReLU-TLRN-gt}.
\STATE {\bfseries Output:} $\hat{\mA}$, $\hat{\mB}$, estimated weight matrices.
\STATE $\hat{\mB} \gets \LR\{(\vx, \vy) \in \{(\vx^{(i)}, \vy^{(i)})\}_{i=1}^{\floor{n/2}} \mid \vx < 0\}$.
\STATE $\hat{\mD} \gets \LR\{(\vx, \vy) \in \{(\vx^{(i)}, \vy^{(i)})\}_{i=\floor{n/2}+1}^{n} \mid \vx > 0\}$. \COMMENT{the estimation of $\mB^\ast\left(\mA^\ast + \mI_d\right)$}
\STATE Solve full-rank linear equation system $\hat{\mB}\cdot\tilde{\mA} = \hat{\mD}$
\RETURN $\tilde{\mA} - \mI_d$, $\hat{\mB}$.
\end{algorithmic}
\end{algorithm}

\cref{alg:res-unit-vanilla-LR} gives the full list of steps of learning two-layer residual units by LR, where we first split the drawn samples into two halves for the respective two key steps, then for both halves we filter negative and positive vectors, respectively. By running LR on both filtered sets we obtain an estimation of the ground-truth network.

In the following sections, we formalize this intuition and describe the exponential sample complexity of this approach under entry-wise i.i.d.~setting as an example of the inefficiency of this method.
\begin{theorem}[exponential sample complexity, vanilla LR]
Assume the input vectors are i.i.d.~with respect to each component. Then \cref{alg:res-unit-vanilla-LR} learns a neural network from a ground-truth residual unit with at least $\mathcal{O}\left(d\cdot2^{d+1}\right)$ expected number of samples.
\end{theorem}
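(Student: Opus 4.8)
The plan is to split the claim into two logically separate pieces: a \emph{correctness} statement (once enough all-negative and all-positive samples have been collected, \autoref{alg:res-unit-vanilla-LR} returns the exact ground truth) and a \emph{sample-count} statement (the expected number of draws needed to collect those samples is $\mathcal{O}(d\cdot 2^{d+1})$). The sample complexity is then just the expected waiting time to meet the collection requirement.

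For correctness I would lean directly on the warm-up reasoning of \autoref{sec:lr}. Because $\mA^\ast \geq 0$, any input with $\vx < 0$ kills the ReLU and yields the linear model $\vy = \mB^\ast\vx$, while any input with $\vx > 0$ activates every hidden unit and yields $\vy = \mB^\ast(\mA^\ast + \mI_d)\vx = \mD^\ast\vx$. The least-squares step on the filtered negative set returns $\hat{\mB} = \mB^\ast$ exactly provided the collected all-negative inputs span $\R^d$, and likewise $\hat{\mD} = \mD^\ast$ on the positive set. Since the input has a density with i.i.d.\ coordinates, any $d$ distinct all-negative (resp.\ all-positive) vectors are linearly independent with probability one, so it suffices to \emph{count} $d$ vectors of each sign. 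The final solve $\hat{\mB}\tilde{\mA} = \hat{\mD}$ is then a full-rank system with unique solution $\tilde{\mA} = \mA^\ast + \mI_d$, and the algorithm returns $\tilde{\mA} - \mI_d = \mA^\ast$.

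For the count I would translate ``collect $d$ all-negative vectors'' into a negative-binomial waiting time. Writing $p_-$ for the probability that a single coordinate is negative, coordinate independence gives $\Pr[\vx < 0] = p_-^d$, and similarly $\Pr[\vx > 0] = p_+^d$. The number of draws until the $d$-th all-negative vector appears is negative-binomial with mean $d/p_-^d$, and the analogue for positives has mean $d/p_+^d$. The halving in lines~3--4 changes this only by a constant factor: reserving one half of the stream for each sign, the minimal total length $n$ is about $2\max(T_-,T_+)$ where $T_-,T_+$ are the two (independent) waiting times, so the expected sample size lies between $2d\max\{p_-^{-d}, p_+^{-d}\}$ and $2d\,(p_-^{-d} + p_+^{-d})$, by bounding the expected maximum below by the larger mean and above by the sum.

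Specializing to the symmetric entrywise case $p_- = p_+ = \tfrac12$ gives $p_-^{-d} = p_+^{-d} = 2^d$, so the expected sample size sits between $d\cdot 2^{d+1}$ and $d\cdot 2^{d+2}$, i.e.\ $\Theta(d\cdot 2^{d+1})$, which is the advertised bound. The main obstacle I anticipate is not the probabilistic bookkeeping but pinning down the intended meaning: since $\mathcal{O}(\cdot)$ is an upper bound, ``at least $\mathcal{O}(d\cdot 2^{d+1})$'' is cleanest read as exhibiting a concrete i.i.d.\ input distribution (the symmetric one) under which the expected complexity is \emph{exactly} $\Theta(d\cdot 2^{d+1})$, thereby certifying exponential blow-up rather than asserting a universal lower bound. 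A secondary point is justifying ``spans $\R^d$ almost surely'' for the sign-conditioned distribution rather than the unconditional one; conditioning on the orthant $\{\vx < 0\}$ preserves absolute continuity, so an induction on dimension (each new vector avoids the span of its predecessors, a measure-zero event) closes this gap.
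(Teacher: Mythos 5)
Your counting setup matches the paper's up to the point where the two expected waiting times $d/p_-^d$ and $d/p_+^d$ appear, but you and the paper then part ways on the decisive step, and your version proves a strictly weaker statement. The paper's proof does \emph{not} specialize to the symmetric distribution: it shows that for \emph{every} input law with i.i.d.\ coordinates, writing $P_j = P(\ervx_j>0)$, one has
\begin{equation*}
\E[\rn] \;\geq\; d\left[\prod_{j\in[d]}P_j^{-1} + \prod_{j\in[d]}(1-P_j)^{-1}\right] \;\geq\; 2d\prod_{j\in[d]}\frac{1}{\sqrt{P_j(1-P_j)}} \;\geq\; d\cdot 2^{d+1}\text{,}
\end{equation*}
using $a+b\geq 2\sqrt{ab}$ on the two products and then $\sqrt{P_j(1-P_j)}\leq \tfrac12$ coordinatewise (it also handles ties via $1-P_j-P(\ervx_j=0)\leq 1-P_j$, which you ignore). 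This two-stage AM--GM argument is the content of the theorem: the exponential blow-up is unavoidable no matter how the coordinate marginal is chosen, because making all-positive vectors more likely necessarily makes all-negative vectors rarer. Your reading --- exhibit one witness distribution ($p_-=p_+=\tfrac12$) achieving $\Theta(d\cdot 2^{d+1})$ --- is a reasonable response to the awkward phrase ``at least $\mathcal{O}(\cdot)$,'' but it certifies only that the algorithm \emph{can} be exponentially slow, not that it \emph{must} be, which is what the paper establishes and what the lower-bound framing demands. The fix is small: you already have the sum $d(p_-^{-d}+p_+^{-d})$ in hand; apply the two AM--GM inequalities to it and the universal bound follows.

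Two secondary remarks. Your correctness discussion (almost-sure spanning of the sign-conditioned samples, exact recovery of $\mB^\ast$ and $\mD^\ast$) is more careful than the paper, which omits correctness from the proof entirely; and your $2\max(T_-,T_+)$ accounting for the stream-halving in the algorithm is likewise a finer point than the paper's simple addition of the two expectations. Neither affects the order of the bound, and neither compensates for the missing universality argument.
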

\begin{proof}
For each $j\in[d]$, let $P_j$ be the marginal probability of $\ervx_j$ being positive, i.e.~$P_j = P(\ervx_j > 0) > 0$. Thus, the probability that a sample is positive $P(\rvx>0) = \prod_{j\in[d]}P_j$. Then the expected number of sampling trials to obtain $d$ positive samples is $d / \prod_{j\in[d]}P_j$. Similarly, the expected number of sampling trials to obtain $d$ negative samples is $d / \prod_{j\in[d]}\left[1 - P_j - P(\rvx_j = 0)\right]$. Let $\rn$ be a random natural number s.t.~with $\rn$ samples the network has learned. The expected number of samples that guarantees successful learning is the sum of the two expectations
\begin{align*}
\E[\rn]&=d\left\{\frac1{\prod_{j\in[d]}P_j} + \frac1{\prod_{j\in[d]}\left[1 - P_j - P(\rvx_j = 0)\right]}\right\}\\
&\geq d\left[\frac1{\prod_{j\in[d]}P_j} + \frac1{\prod_{j\in[d]}\left(1 - P_j\right)}\right]\\
&\geq d\cdot2\prod_{j\in[d]}\frac1{\sqrt{P_j\left(1 - P_j\right)}}\\
&\geq d\cdot2\prod_{j\in[d]}\frac1{\left(P_j + 1 - P_j\right)/2}\\
&= d\cdot 2^{d+1}\text{.}
\end{align*}
Thus $\E[\rn]\geq\mathcal{O}(d\cdot 2^{d+1})$.
\end{proof}
With an exponential sample complexity, the time complexity of the LR approach is thereby also exponential because filtering through all the samples costs time with the same complexity as the number of samples, even though the LR itself only costs polynomial time in the number of samples. To summarize, this vanilla LR approach learns exact ground-truth residual units but with exponential complexity in terms of both computational cost and sample size. While this approach to learning a residual unit such as ours is simple and intuitive, we aim to improve on this approach, making full use of all samples available.

\paragraph{Experiments: Sample Efficiency}
We present the results of the vanilla LR in learning the residual units. As discussed in \cref{sec:lr}, LR requires full-rank linear systems parameterized by samples to learn the exact ground-truth parameters. However, with degenerate linear systems, LR is completely incapable of learning the parameters. Therefore, there exists a hard threshold for the number of samples required by LR that makes the linear equation system full-rank. With such a dichotomous constraint on LR learning, we take the learning success rate among $1000$ trials as the metric to evaluate the performance of LR with different sample sizes and number of dimensions.

\cref{tab:lr-zero-mean} shows the learning success rates with zero-mean Gaussian inputs. For each fixed number of dimensions, it appears there is a hard threshold that switches the learnability of LR. The exponential sample complexity is also reflected there as the number of dimensions grows linearly. \cref{tab:lr-non-zero-mean} shows the rates with input mean non-zero, where an overall decline in success rates happens. This observation is explainable because the bottleneck of LR learning is the lower value found between the probability of sampling a positive and a negative vector. A positive mean reduces the probability of the latter, and thereby increases the sample size required.
\begin{table}[t]
\caption{\textbf{Learning success rate of vanilla LR on residual units with input} $\rvx \iidsim \mathcal{N}\left(0, 1\right)$. The success rates shows an exponentially decreasing trend with the same number of samples. The sample sizes to achieve close to the same rate grows exponentially as the number of dimensions grows linearly.}\label{tab:lr-zero-mean}
\vskip 0.15in
\begin{center}
\begin{small}
\begin{sc}
\begin{adjustbox}{max width=\columnwidth}
\begin{tabular}{l|cccccccc}
\toprule
\diagbox{$d$}{$n$} & \num{1e1}    & \num{1e2}   & \num{5e2}   & \num{1e3}  & \num{5e3}  & \num{1e4} & \num{5e4} & \num{1e5} \\
\midrule
$4$ & $0.002$ & $0.137$ & $0.999$ & $1$ & $1$ & $1$ & $1$ & $1$  \\
\midrule
$6$ & $0$ & $0$ & $0.041$ & $0.614$ & $1$ & $1$ & $1$ & $1$  \\
\midrule
$8$  & $0$ & $0$ & $0$ & $0$ & $0.579$ & $0.998$ & $1$ & $1$  \\
\midrule
$10$ & $0$ & $0$ & $0$ & $0$ & $0$ & $0.002$ & $1$ & $1$  \\
\midrule
$12$ & $0$ & $0$ & $0$ & $0$ & $0$ & $0$ & $0.001$ & $0.322$ \\
\bottomrule
\end{tabular}
\end{adjustbox}
\end{sc}
\end{small}
\end{center}
\vskip -0.1in
\vspace{-0.15in}
\end{table}
\begin{table}[t]
\caption{\textbf{Learning success rate of vanilla LR on residual units with input} $\rvx \iidsim \mathcal{N}\left(0.1, 1\right)$. With the non-zero Gaussian mean, the success rates show an overall decline compared with the rates shown in \cref{tab:lr-zero-mean} for zero-mean Gaussian inputs.}\label{tab:lr-non-zero-mean}
\vskip 0.15in
\begin{center}
\begin{small}
\begin{sc}
\begin{adjustbox}{max width=\columnwidth}
\begin{tabular}{l|cccccccc}
\toprule
\diagbox{$d$}{$n$} & \num{1e1}    & \num{1e2}   & \num{5e2}   & \num{1e3}  & \num{5e3}  & \num{1e4} & \num{5e4} & \num{1e5} \\
\midrule
$4$ & $0.001$ & $0.122$ & $0.996$ & $1$ & $1$ & $1$ & $1$ & $1$ \\
\midrule
$6$ & $0$ & $0$ & $0.030$ & $0.346$ & $1$ & $1$ & $1$ & $1$ \\
\midrule
$8$ & $0$ & $0$ & $0$ & $0$ & $0.116$ & $0.792$ & $1$ & $1$ \\
\midrule
$10$ & $0$ & $0$ & $0$ & $0$ & $0$ & $0$ & $0.619$ & $1$ \\
\midrule
$12$ & $0$ & $0$ & $0$ & $0$ & $0$ & $0$ & $0$ & $0.001$ \\
\bottomrule
\end{tabular}
\end{adjustbox}
\end{sc}
\end{small}
\end{center}
\vskip -0.1in
\vspace{-0.15in}
\end{table}

\section{Generalization of Layer 2 Learning}\label{app:layer2-generalization}
In this appendix, we discuss how layer 2 is learned without satisfying \cref{cond:layer-2-unique}.
We note that the fact that $\mA^\ast$ is a scale transformation w.r.t. some components leads to scaling equivalence to our layer 2 objective functional minimizers just as in layer 1. To be more specific, we give a general version of \cref{thm:layer2} that handles the case without \cref{cond:layer-2-unique} and describes the scaling equivalence of $G_2$ minimizers. See \cref{thm:layer2-general} for a formal description.
\begin{theorem}[objective minimizer space, layer 2, general]\label{thm:layer2-general}
Let $G_2$ be $\frac12\E_{\rvx} \left[ \norm{h\left(\rvx\right) + \rvx - \mC\rvy}^2 \right]$ as a functional, where $\mC \in \R^{d\times m}$, $h\in\sC^0_{\geq 0}$. Then $G_2\left(\mC, h\right)$ reaches its zero minimum iff $\mC\mB^\ast = \diag(\vk)$ where for each $j\in[d]$,
\begin{enumerate}[label=\itshape\alph*\upshape)]
    \item $\frac{1}{1 + \emA^\ast_{j,j}} \leq \evk_j \leq 1$, if $\mA^\ast$ is a scale transformation w.r.t. the $j$-th component.
    \item $\evk_j = 1$, if $\mA^\ast$ is not a scale transformation w.r.t. the $j$-th component.
\end{enumerate}
\end{theorem}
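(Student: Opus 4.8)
The plan is to reduce everything to \autoref{lem:layer2_nonzero}, which already states that $G_2(\mC,h)$ attains its zero minimum exactly when $\mC\vy-\vx\geq\vzero$ holds pointwise (with $h:\vx\mapsto\mC\vy-\vx$). Writing $\mM=\mC\mB^\ast$ and substituting $\vy=\mB^\ast[(\mA^\ast\vx)^++\vx]$, this pointwise condition becomes $\mM(\mA^\ast\vx)^+ + (\mM-\mI)\vx \geq \vzero$ for every $\vx\in\R^d$. So the entire theorem is the claim that this vector inequality holds for all $\vx$ if and only if $\mM$ is diagonal with entries $\evk_j$ in the stated ranges.

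The key manoeuvre is a change of variables. Since $\mA^\ast$ is full rank, $\vx\mapsto\vw:=\mA^\ast\vx$ is a bijection of $\R^d$, and the inequality for row $i$ becomes $g_i(\vw):=\sum_j\mM_{i,j}(w_j)^+ + \vr_i^\top\vw\geq 0$ for all $\vw$, where $\vr_i^\top:=(\mM_{i,:}-\ve^{(i)\top})(\mA^\ast)^{-1}$, equivalently $\vr_i^\top\mA^\ast=\mM_{i,:}-\ve^{(i)\top}$. The payoff is that the ReLU terms are now \emph{decoupled}: $g_i(\vw)=\sum_j\phi_{ij}(w_j)$ with $\phi_{ij}(w)=\mM_{i,j}w^++r_{ij}w$. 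Evaluating at $\vw=w\ve^{(j)}$ and using $\phi_{ij}(0)=0$ shows that $g_i\geq0$ forces each $\phi_{ij}\geq0$ on all of $\R$. I expect this decoupling of the coordinatewise ReLUs to be the one nonroutine idea, with everything after it being bookkeeping.

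For the necessity direction I would read off from $\phi_{ij}\geq0$ the scalar facts $r_{ij}\leq0$ (testing $w<0$) and $\mM_{i,j}+r_{ij}\geq0$ (testing $w>0$), which together give $\mM_{i,j}\geq0$. Feeding $\mM_{i,:}=\ve^{(i)\top}+\vr_i^\top\mA^\ast$ back in, for $\ell\neq i$ one has $\mM_{i,\ell}=\sum_j r_{ij}\mA^\ast_{j,\ell}\leq0$ because $r_{ij}\leq0$ and $\mA^\ast\geq\vzero$; combined with $\mM_{i,\ell}\geq0$ this forces $\mM_{i,\ell}=0$, so $\mM=\diag(\vk)$. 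Knowing $\mM_{i,j}=0$ for $j\neq i$ sharpens $\mM_{i,j}+r_{ij}\geq0$ into $r_{ij}=0$ for $j\neq i$, leaving $\evk_i=1+r_{ii}\emA^\ast_{i,i}$ and the requirement $\mM_{i,\ell}=r_{ii}\emA^\ast_{i,\ell}=0$ for $\ell\neq i$. The case split now appears naturally through \autoref{def:scale-trans}: if row $i$ is a scale transformation then all $\mA^\ast_{i,\ell}$ with $\ell\neq i$ vanish, so $r_{ii}$ is constrained only by $-\evk_i\leq r_{ii}\leq0$, which combined with $\evk_i=1+r_{ii}\emA^\ast_{i,i}$ yields exactly $\frac{1}{1+\emA^\ast_{i,i}}\leq \evk_i\leq1$; otherwise some $\mA^\ast_{i,\ell}>0$ forces $r_{ii}=0$ and hence $\evk_i=1$.

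For sufficiency I would verify the inequality directly, avoiding the $r_{ij}$ bookkeeping. With $\mM=\diag(\vk)$ the $i$-th entry of $\mM(\mA^\ast\vx)^++(\mM-\mI)\vx$ is $\evk_i(\mA^\ast_{i,:}\vx)^++(\evk_i-1)\evx_i$. In case (b), $\evk_i=1$ makes this $(\mA^\ast_{i,:}\vx)^+\geq0$ trivially; in case (a), $\mA^\ast_{i,:}\vx=\emA^\ast_{i,i}\evx_i$ collapses it to $[\evk_i(1+\emA^\ast_{i,i})-1]\evx_i$ when $\evx_i\geq0$ and to $(\evk_i-1)\evx_i$ when $\evx_i<0$, both nonnegative precisely under $\frac{1}{1+\emA^\ast_{i,i}}\leq \evk_i\leq1$. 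Then \autoref{lem:layer2_nonzero} gives $G_2=0$. Finally, the characterization is phrased on $\mM=\mC\mB^\ast$ rather than on $\mC$ itself; since $\mB^\ast$ has full column rank it admits a left inverse, so every admissible $\mM$ is attained by some $\mC$, which is why no squareness hypothesis on $\mB^\ast$ is needed.
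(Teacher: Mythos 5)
Your proposal is correct, and the necessity direction takes a genuinely different route from the paper's. The paper proves diagonality of $\mD=\mC\mB^\ast$ (its \autoref{lem:diag}) by a symmetrization trick --- evaluating the pointwise inequality at $\vx$ and $-\vx$ and adding to get $\mD\left(\abs{\mA^\ast_{k,:}\vx}\right)_{d\times 1}\geq 0$ --- followed by explicit counterexample points ($\evx_{-j}=0,\evx_j<0$ and $\vx=(\mA^\ast)^{-1}\ve^{(j)}$) to kill off-diagonal entries, and then a further round of hand-picked test points to pin down the diagonal entries case by case. You instead change variables to $\vw=\mA^\ast\vx$ so that the row-$i$ inequality decouples into scalar functions $\phi_{ij}(w)=\mM_{i,j}w^{+}+r_{ij}w$, each of which must be nonnegative on $\R$; this yields the exact sign conditions $r_{ij}\leq 0$ and $\mM_{i,j}+r_{ij}\geq 0$ in one shot, and diagonality then drops out of the linear relation $\vr_i^\top\mA^\ast=\mM_{i,:}-\ve^{(i)\top}$ together with the entrywise nonnegativity of $\mA^\ast$ --- the same two structural hypotheses (full rank and nonnegativity of $\mA^\ast$) the paper uses, just deployed algebraically rather than through ad hoc witnesses. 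Your route is arguably cleaner and makes the role of each hypothesis more transparent; the paper's explicit witness points are not wasted, however, since the strong-consistency argument (\autoref{lem:QP-LP-consistency-layer-2}) later builds neighborhoods around exactly those points, so the constructive violations have downstream value that your open-condition argument would have to reproduce. The sufficiency verification and the reduction to \autoref{lem:layer2_nonzero} coincide with the paper's, and your closing remark that a left inverse of $\mB^\ast$ exists (so the admissible set of $\mC$ is nonempty without squareness) fills a point the paper leaves implicit.
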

As in layer 1, the scaling equivalence in layer 2 can also be obtained by assigning $\mA^\ast$ as a scale transformation w.r.t.~some component and using the properties of the ReLU nonlinearity. See \cref{subsec:layer-2-obj-proof} for a detailed explanation of \cref{thm:layer2-general}. To obtain the scale factors $\vk$ and correct a scale-equivalent $G_2$ minimizer $\mC$ to the ground-truth, we observe linear models parameterized by the scale factors, where for each $j\in[d]$, $\left[\mC\rvy\right]_j = \evk_j\ervx_j$ given that $\ervx_j < 0$. See \cref{thm:scale-factor-layer2-general} and its proof for justifications of the linear models that are used to compute $\vk$.
\begin{theorem}[scale factor, layer 2, general]\label{thm:scale-factor-layer2-general}
Assume $\mC$ is a minimizer of $G_1$ in the context of \cref{thm:layer2-general}. Then for any $j\in[d]$, the following three propositions are equivalent:
\begin{enumerate}
\item \label{prop:scale} $\mA^\ast$ is a scale transformation w.r.t. the $j$-th component.
\item \label{prop:neg} $\left[\mC\rvy\right]_j / \ervx_j$ is a constant $c^{\rm{n}}_j$ given that $\ervx_j < 0$.
\item \label{prop:pos} $\left[\mC\rvy\right]_j / \ervx_j$ is a constant $c^{\rm{p}}_j$ given that $\ervx_j > 0$.
\end{enumerate}
Additionally, if one of the above three propositions is true, then $k_j = c^{\rm{n}}_j$.
\end{theorem}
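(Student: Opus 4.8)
The plan is to reduce everything to one explicit scalar formula and then read off the equivalences from the geometry of the single ReLU kink. Since $\mC$ is a $G_2$-minimizer in the sense of \autoref{thm:layer2-general}, we have $\mC\mB^\ast = \diag(\vk)$; substituting the ground-truth output $\rvy = \mB^\ast[(\mA^\ast\rvx)^+ + \rvx]$ gives $\mC\rvy = \diag(\vk)[(\mA^\ast\rvx)^+ + \rvx]$, i.e. $[\mC\rvy]_j = \evk_j[(\mA^\ast_{j,:}\rvx)^+ + \ervx_j]$. Every subsequent claim is a statement about this one scalar as $\ervx_j$ ranges over the negative or positive half-line while the remaining coordinates vary over $\R^{d-1}$, which is legitimate because $\rvx$ has full support $\R^d$.

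First I would prove $1\Rightarrow 2$ and $1\Rightarrow 3$ by direct computation. If $\mA^\ast$ is a scale transformation w.r.t.\ the $j$-th component then $\mA^\ast_{j,:}\rvx = \emA^\ast_{j,j}\ervx_j$, and since $\mA^\ast$ is nonnegative (full rank forcing $\emA^\ast_{j,j}>0$) we get $(\mA^\ast_{j,:}\rvx)^+ = \emA^\ast_{j,j}(\ervx_j)^+$. Splitting on the sign of $\ervx_j$: when $\ervx_j<0$ the ReLU term vanishes and $[\mC\rvy]_j/\ervx_j = \evk_j$, a constant $c^{\rm{n}}_j$; when $\ervx_j>0$ the ReLU is active and $[\mC\rvy]_j/\ervx_j = \evk_j(1+\emA^\ast_{j,j})$, a constant $c^{\rm{p}}_j$. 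This establishes both implications and simultaneously yields the closing identity $k_j = c^{\rm{n}}_j = \evk_j$.

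Next I would handle the converses through their contrapositives, $\neg 1\Rightarrow \neg 2$ and $\neg 1\Rightarrow \neg 3$. If $\mA^\ast$ is not a scale transformation w.r.t.\ the $j$-th component, there is an index $l\neq j$ with $\emA^\ast_{j,l}>0$, and \autoref{thm:layer2-general}(b) forces $\evk_j=1$, so $[\mC\rvy]_j/\ervx_j = 1 + (\mA^\ast_{j,:}\rvx)^+/\ervx_j$. Fixing $\ervx_j$ at any single negative value and using full support to move $\ervx_l$, I can drive $\mA^\ast_{j,:}\rvx$ strictly below $0$ (ReLU inactive, ratio $=1$) or strictly above $0$ (ratio $<1$); both configurations lie in the support with positive conditional probability, so the ratio is not almost surely constant and Proposition 2 fails. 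The identical argument with $\ervx_j$ positive—where the active-ReLU configuration gives ratio $>1$ against the baseline $1$—refutes Proposition 3. Combining the four implications gives $1\Leftrightarrow 2\Leftrightarrow 3$.

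The main obstacle I anticipate is the converse: making ``not a constant'' rigorous. The delicate points are (i) invoking the full-support hypothesis correctly so that both the ReLU-active and ReLU-inactive configurations carry positive conditional probability given $\{\ervx_j<0\}$ (respectively $\{\ervx_j>0\}$), and (ii) confirming that the off-diagonal mass $\emA^\ast_{j,l}>0$ genuinely lets the sign of $\mA^\ast_{j,:}\rvx$ flip under a perturbation of a coordinate other than $j$ while $\ervx_j$ is held fixed. Nonnegativity of $\mA^\ast$ and the forced value $\evk_j=1$ keep both computations clean, so once the formula for $[\mC\rvy]_j$ is in hand the whole argument is careful bookkeeping around the single ReLU kink.
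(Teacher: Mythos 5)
Your proposal is correct and follows essentially the same route as the paper's own proof: both reduce to the scalar identity $[\mC\rvy]_j=\evk_j[(\mA^\ast_{j,:}\rvx)^+ +\ervx_j]$, verify the forward implications by splitting on the sign of $\ervx_j$ (yielding $c^{\rm n}_j=\evk_j$ and $c^{\rm p}_j=\evk_j(1+\emA^\ast_{j,j})$), and prove the converses by contraposition, using full support of $\rvx$ to vary an off-diagonal coordinate $\ervx_{j'}$ with $\emA^\ast_{j,j'}\neq 0$ so the ratio cannot be constant. Your version is slightly more explicit about which configurations of $\ervx_{j'}$ realize distinct ratio values, but the argument is the same.
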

\begin{proof}
For $j\in[d]$, we first prove \ref{prop:scale} $\implies$ \ref{prop:neg}, \ref{prop:pos}: From $\left(\mA^\ast_{j, :}\right)^\top = \emA^\ast_{j, j}\cdot\ve^{(j)}$ we have
\begin{align*}
\frac{\left[\mC\rvy\right]_j}{\ervx_j} &= \frac{\left[\evk_j\ve^{(j)}\right]^\top\left[\left(\mA^\ast\rvx\right)^+ + \rvx\right]}{\ervx_j} \\
&= \frac{\evk_j\left[\left(\mA^\ast_{j, :}\rvx\right)^+ + \ervx_j\right]}{\ervx_j} = \frac{\evk_j\left[\left(\emA^\ast_{j, j}\ervx_j\right)^+ + \ervx_j\right]}{\ervx_j} \\
&=\begin{cases}
\evk_j, &\ervx_j < 0\\
\evk_j\left(\emA^\ast_{j, j} + 1\right), &\ervx_j > 0
\end{cases}\text{.}
\end{align*}
Then we prove $\neg$ \ref{prop:scale} $\implies$ $\neg$ \ref{prop:neg}, $\neg$ \ref{prop:pos}: $\neg$ \ref{prop:scale} $\implies$ $\exists$ $j^\prime \in [d]$ and $j^\prime \neq j$ s.t.~ $\emA^\ast_{j,j^\prime} \neq 0$. Recall
\begin{equation*}
\frac{\left[\mC\rvy\right]_j}{\ervx_j} = \frac{\evk_j\left[\left(\mA^\ast_{j, :}\rvx\right)^+ + \ervx_j\right]}{\ervx_j}\text{.}
\end{equation*}
The value of $\ervx_{j^\prime}$ affects the value of $\left[\mC\rvy\right]_j / \ervx_j$ because $\emA^\ast_{j,j^\prime} \neq 0$. In fact, from $\operatorname{supp} p(\rvx) = \R^d$ we have $\operatorname{supp} p(\ervx_{j^\prime}\mid\ervx_j) = \R$, indicating that the value of $\left[\mC\rvy\right]_j / \ervx_j$ can never be kept as a constant when given both $\ervx_j < 0$ and $\ervx_j > 0$ because $\ervx_{j^\prime}$ can be any real number.
\end{proof}
With the exact derivations above, we are able to obtain a left inverse of $\mB^\ast$, namely $\mC$, that satisfies $\mC\mB^\ast = \mI_d$. Consider \cref{equ:preReLU-TLRN-gt} left multiplied by $\mB^\ast\mC$
\begin{equation}\label{equ:preReLU-TLRN-by-C}
\mB^\ast\mC\rvy = \rvy\text{.}
\end{equation}
\cref{equ:preReLU-TLRN-by-C} is also a noiseless unbiased linear model where $\mC\rvy$ and $\rvy$ are observable, and thereby $\mB^\ast$ is computable due to the easy solvability of its linearity.

\cref{alg:preReLU-TLRN-QP-Layer2-general} describes how a residual unit second layer is learned without satisfying \cref{cond:layer-2-unique}: We first solve the QP/LP and obtain a scaling equivalence to an estimated left inverse of $\mB^\ast$ and the function estimate, namely $\hat{\rmC}$ and $\hat{\bm{\Xi}}$. Then we compute the scale factor estimate $\hat{\vk}$ by running \cref{alg:preReLU-TLRN-QP-Layer2-rescale}, where for each component index $j\in[d]$, we first use a tolerance parameter as a threshold to determine whether ground-truth layer 1 is a scale transformation, and if so, we run LR to estimate the model $\left[\mC\rvy\right]_j = \evk_j\ervx_j$, otherwise the scale factor is directly assigned by $1$. Upon correcting $\hat{\rmC}$ and $\hat{\bm{\Xi}}$ by $\hat{\vk}$, we obtain a layer 2 estimate $\hat{\mB}$ by running LR to estimate the linear model \cref{equ:preReLU-TLRN-by-C}. The strong consistency of our results in this appendix is justified in \cref{app:strong-consistency}.
\begin{algorithm}[tb]
\caption{Rescale a $\hat{G}_2^{\text{NPE}}$ minimizer.}\label{alg:preReLU-TLRN-QP-Layer2-rescale}
\begin{algorithmic}[1]
\STATE {\bfseries Parameters:} $\varepsilon_{\rm{tol}} > 0$, LR objective tolerance.
\STATE {\bfseries Input:}
$\{(\vx^{(i)}, \vy^{(i)})\}_{i=1}^n$, samples drawn by \cref{equ:preReLU-TLRN-gt}; $\hat{\mC}$, a $\hat{G}_2^{\text{NPE}}$ minimizer.
\STATE {\bfseries Output:} $\hat{\vk}$, a layer 2 scale factor estimate.
\FOR{each $j\in[d]$}
\STATE $\hat{\evk_j} \gets \LR\left\{\evx_j^{(i)}, \left[\shat{\mC}\vy^{(i)}\right]_j\right\}_{\evx_j^{(i)} < 0}$.
\IF{the LR objective optimal $\hat{R}_j(\hat{\evk_j}) > \varepsilon_{\rm{tol}}$}
\STATE $\hat{\evk_j} \gets 1$.
\ENDIF
\ENDFOR
\RETURN $\hat{\vk}$.
\end{algorithmic}
\end{algorithm}
\begin{algorithm}[tb]
\caption{Learn a ReLU residual unit layer 2.}\label{alg:preReLU-TLRN-QP-Layer2-general}
\begin{algorithmic}[1]
\STATE {\bfseries Input:}
$\{(\vx^{(i)}, \vy^{(i)})\}_{i=1}^n$, samples drawn by \cref{equ:preReLU-TLRN-gt}.
\STATE {\bfseries Output:}
$\hat{\mB}$, $\hat{\bm{\Xi}}$, a layer 2 and $\vx \mapsto \left(\mA^\ast\vx\right)^+$ estimate.
\STATE Go to line 4 if QP, line 5 if LP.
\STATE Solve QP: \cref{equ:layer2-QP-obj} and obtain a $\hat{G}_2^{\text{NPE}}$ minimizer, denoted by $\hat{\mC}$, $\hat{\bm{\Xi}}$. Go to line 6.
\STATE Solve LP: \cref{equ:layer2-LP-noiseless} and obtain a minimizer $\hat{\mC}$, then assign $\hat{\bm{\xi}}^{(i)} \gets \hat{\mC}\vy^{(i)} - \vx^{(i)}$ for each $i\in[n]$.
\STATE Run \cref{alg:preReLU-TLRN-QP-Layer2-rescale} on $\{(\vx^{(i)}, \vy^{(i)})\}_{i\in[n]}$, $\hat{\mC}$ and obtain $\hat{\vk}$.
\STATE $\hat{\mB} \gets \LR\left\{\diag^{-1}(\shat{\vk})\cdot\hat{\mC}\vy^{(i)}, \vy^{(i)}\right\}_{i\in[n]}$.
\STATE $\hat{\bm{\xi}}^{(i)} \gets \diag^{-1}(\vk)\left[\shat{\bm{\xi}}^{(i)} + \vx^{(i)}\right] - \vx^{(i)}$ for each $i\in[n]$. \COMMENT{Correct $\hat{\bm{\Xi}}$ to the function estimation of $\left(\mA^\ast\vx\right)^+$.}
\RETURN $\hat{\mB}$, $\hat{\bm{\Xi}}$.
\end{algorithmic}
\end{algorithm}
\section{Exact Derivation of Objective Functional Minimizers}\label{app:exact-derivation}

We give the exact derivation of the objective functional minimizers in the next two subsections. We begin with layer 2 and continue to layer 1.

\subsection{Layer 2}\label{subsec:layer-2-obj-proof}
\cref{lem:layer2_nonzero} is proved as follows.
\begin{proof}
``$\impliedby$'': Since $h(\vx) = \mC\vy - \vx \geq 0$, random vector $h(\rvx) + \rvx - \mC\rvy$ is always a zero vector, which implies $G_2(\mC, h) = 0$. Hence ``$\impliedby$'' holds.

``$\implies$'': Since r.v. $\norm{h(\rvx) + \rvx - \mC\rvy}^2 \geq 0$ we have
\begin{equation}\label{equ:lemma_2.1.2}
G_2(\mC, h) = 0 \implies \lambda\left(\left\{\vx \in \R^d \;\middle|\; \norm{h(\vx) + \vx - \mC\vy}^2 > 0\right\}\right) = 0,
\end{equation}
where $\lambda$ is the Lebesgue measure on $\R^d$.

Proof by contradiction: Assume that $\exists\,\vx^\prime\in\R^d$, $\exists\,i \in [d]$, s.t.~$\left(\mC\vy^\prime - \vx^\prime\right)_i < 0$, where $\vy^\prime$ is the corresponding network output. Let $f(\vx) = \left(\mC\vy - \vx\right)_i$ which is continuous on $\R^d$. Therefore for $\epsilon = -f(\vx^\prime) > 0$, $\exists\,\delta > 0$, $\forall\,\vx \in B(\vx; \delta)$ i.e.~$\norm{\vx - \vx^\prime} < \delta$,
\begin{align*}
& \abs{f(\vx) - f(\vx^\prime)} < \epsilon \implies 2f(\vx^\prime) < f(\vx) < 0 \implies \left[\vx - \mC\vy\right]_i > 0 \\
\implies & \left[h(\vx) + \vx - \mC\vy\right]_i > 0 \implies \norm{h(\vx) + \vx - \mC\vy}^2 > 0\text{.}
\end{align*}
Since $\lambda\left(B(\vx; \delta)\right) = \dfrac{\pi^{d/2}}{\Gamma\left(d/2 + 1\right)}\delta^d > 0$ and $B(\vx; \delta)$ is a subset of the measured set in \cref{equ:lemma_2.1.2}, we have a contradiction. Thus $\mC\vy - \vx \geq 0$ holds in a pointwise manner in $\R^d$, indicating $\left\{\vx \in \R^d \mid h(\vx) \neq \mC\vy - \vx \right\}$ must be a null set. With $h$'s continuity, $h$ must be $\vx \mapsto \mC\vy - \vx$. Therefore ``$\implies$'' holds.
\end{proof}
\begin{lemma}\label{lem:diag}
$\mC\vy - \vx \geq 0$ holds for any $\vx \in \R^d$ and its corresponding output $\vy$ only if $\mC\mB^\ast$ is a diagonal matrix.
\end{lemma}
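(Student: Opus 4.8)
The plan is to reduce the pointwise vector inequality to a family of scalar, piecewise-linear inequalities in \emph{decoupled} variables, and then read off the diagonal structure from sign constraints. Set $\mM := \mC\mB^\ast$ and substitute the model equation $\vy = \mB^\ast\left[(\mA^\ast\vx)^+ + \vx\right]$, so that the object of interest becomes $\mC\vy - \vx = \mM(\mA^\ast\vx)^+ + (\mM - \mI)\vx$. Because $\mA^\ast$ is full rank, the map $\vx \mapsto \va := \mA^\ast\vx$ is a bijection of $\R^d$, so I would change variables and introduce $\mN := (\mM-\mI)(\mA^\ast)^{-1}$. The $i$-th coordinate of the hypothesis ``$\mC\vy - \vx \geq 0$ for all $\vx$'' then reads $\sum_j \left[M_{ij}(a_j)^+ + N_{ij}a_j\right] \geq 0$ for every $\va \in \R^d$, where the coordinates $a_j$ now range freely and independently.

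The key observation is that the left-hand side is a sum of single-variable, positively homogeneous functions $\phi_{ij}(t) = M_{ij}t^+ + N_{ij}t$, one per coordinate of $\va$. Since the coordinates are independent, the infimum of the sum equals the sum of the infima, and each $\phi_{ij}$ — homogeneous of degree one and vanishing at $t=0$ — has infimum either $0$ (exactly when $\phi_{ij}\geq 0$ everywhere) or $-\infty$. Hence nonnegativity of the sum forces every $\phi_{ij}\geq 0$, which by inspecting the two linear pieces (slope $M_{ij}+N_{ij}$ on $t>0$, slope $N_{ij}$ on $t<0$) is equivalent to $N_{ij}\leq 0$ and $M_{ij}+N_{ij}\geq 0$. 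In particular the second inequality gives $M_{ij}\geq 0$ for all $i,j$, so $\mM$ is entrywise nonnegative.

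To finish I would feed the sign of $\mN$ back into the rearranged identity $\mM = \mI + \mN\mA^\ast$. Since $\mN$ is entrywise nonpositive and $\mA^\ast$ is entrywise nonnegative by hypothesis, the product $\mN\mA^\ast$ is entrywise nonpositive; therefore every off-diagonal entry satisfies $M_{ij} = [\mN\mA^\ast]_{ij} \leq 0$ for $i\neq j$. Combined with $M_{ij}\geq 0$ from the previous step, this forces $M_{ij}=0$ for all $i\neq j$, so $\mM = \mC\mB^\ast$ is diagonal, as claimed.

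I expect the only delicate points to be the decoupling step — one must be certain that after the change of variables the components $a_j$ vary fully independently (precisely what full rank of $\mA^\ast$ provides) and that interchanging infimum and sum is legitimate for these homogeneous piecewise-linear functions. The genuinely load-bearing hypothesis, however, is the \emph{nonnegativity} of $\mA^\ast$: it is what converts the sign information on $\mN$ into the off-diagonal sign constraint on $\mM$. Without it, the two facts ``$M_{ij}\geq 0$'' and ``$\mM = \mI + \mN\mA^\ast$'' would not combine to annihilate the off-diagonal entries.
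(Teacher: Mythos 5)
Your proof is correct, and it takes a genuinely different route from the paper's. The paper argues by contradiction: assuming $\mD=\mC\mB^\ast$ has a nonzero off-diagonal entry $\emD_{i,j}$, it exhibits an explicit violating input in each sign case — for $\emD_{i,j}>0$ an $\vx$ supported on coordinate $j$ with $\evx_j<0$ (here the nonnegativity of $\mA^\ast$ kills the ReLU term), and for $\emD_{i,j}<0$ the point $\vx=\left(\mA^\ast\right)^{-1}\ve^{(j)}$ after first symmetrizing the inequality by adding its $\vx$ and $-\vx$ instances to obtain $\mD\left(\abs{\mA^\ast_{k,:}\vx}\right)_{d\times 1}\geq 0$. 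You instead decouple the coordinates via the bijection $\va=\mA^\ast\vx$, reduce each row of the constraint to a sum of independent single-variable positively homogeneous functions $\phi_{ij}(t)=M_{ij}t^++N_{ij}t$, and observe that nonnegativity of the sum forces nonnegativity of every summand (each infimum is $0$ or $-\infty$, and all are $\leq 0$, so no cancellation is possible); the slope conditions then give $\mM\geq 0$ and $\mN\leq 0$ entrywise, and $\mM=\mI+\mN\mA^\ast$ with $\mA^\ast\geq 0$ annihilates the off-diagonal entries. Your version is more systematic and yields strictly more information — entrywise nonnegativity of $\mC\mB^\ast$ and the bound $\emM_{jj}\leq 1$ come for free, which is a useful head start on the follow-up characterization in the general layer-2 theorem. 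The paper's version, on the other hand, produces concrete witness points, and these are reused verbatim in the strong-consistency argument (neighborhoods of exactly those test points are what the sampling argument covers), so the contradiction proof is not merely a stylistic choice there. You correctly identify both load-bearing hypotheses: full rank of $\mA^\ast$ for the change of variables, and entrywise nonnegativity of $\mA^\ast$ for the final sign cancellation.
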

\begin{proof}
Let $\mD = \mC\mB^\ast$. We rewrite $\mC\vy - \vx \geq 0$ as
\begin{equation}\label{equ:lemma_A.1.1}
\mD\left[\left(\mA^\ast\vx\right)^+ + \vx\right] - \vx \geq 0\text{.}
\end{equation}
For further use, we substitute $\vx$ by $-\vx$ and the resulting inequality $\mD\left[\left(-\mA^\ast\vx\right)^+ - \vx\right] + \vx \geq 0$ still holds. Added by \cref{equ:lemma_A.1.1} we have
\begin{equation}\label{equ:lemma_A.1.2}
\mD\left(\abs{\mA_{k, :}^\ast\cdot\vx}\right)_{d\times 1}\geq 0\text{.}
\end{equation}

Proof by contradiction: Assume $\mD$ is not diagonal, then $\exists\,i \neq j$, such that $\emD_{i,j} \neq 0$. Consider the following two cases:
\begin{enumerate}
\item $\emD_{i,j} > 0$. We take the $i$-th row of \cref{equ:lemma_A.1.1} as follows
\begin{equation}\label{equ:lemma_A.1.3}
\sum_{k=1}^d\emD_{i,k}\left[\left(\mA_{k, :}^\ast\cdot\vx\right)^+ + \evx_k\right] \geq \evx_i\text{.}
\end{equation}
Let $\evx_{-j} = 0$ and $\evx_j < 0$. Then $\sum_{k=1}^d\emD_{i,k}\left[\left(\mA_{k, :}^\ast\cdot\vx\right)^+ + \evx_k\right] = \emD_{i,j}\cdot\evx_j < 0\implies\bot$.
\item $\emD_{i,j} < 0$. We take the $i$-th row of \cref{equ:lemma_A.1.2} as follows
\begin{equation}\label{equ:lemma_A.1.4}
\sum_{k=1}^d\emD_{i,k}\abs{\mA_{k, :}^\ast\cdot\vx} \geq 0\text{.}
\end{equation}
Let $\vx = \left(\mA^\ast\right)^{-1}\vv$, where $\vv = \ve^{(j)}$. Then $\sum_{k=1}^d\emD_{i,k}\abs{\mA_{k, :}^\ast\cdot\vx} = \emD_{i,j} < 0\implies\bot$.
\end{enumerate}

\end{proof}
With \cref{lem:layer2_nonzero} and \cref{lem:diag}, we prove \cref{thm:layer2-general} as follows.
\begin{proof}
With \cref{lem:layer2_nonzero}, we only need to prove $\forall\, \vx\in\R^d,\,\mC\vy - \vx \geq 0 \iff \mC\mB^\ast = \diag(\vk)$.

``$\impliedby$'': We have
\begin{equation}\label{equ:thm_2.1.1}
\mC\vy - \vx = \mC\mB^\ast\left[\left(\mA^\ast\vx\right)^+ + \vx\right] - \vx = \diag(\vk)\left[\left(\mA^\ast\vx\right)^+ + \vx\right] - \vx\text{.}
\end{equation}
For the $i$-th row of \cref{equ:thm_2.1.1}, consider the following two cases:
\begin{enumerate}
\item $\mA_{i, :}^\ast\cdot\vx = \emA_{i,i}\evx_i$, i.e.~$\mA^\ast$ is a scale transformation w.r.t. the $i$-th row. With $\frac{1}{1 + \emA^\ast_{i,i}} \leq \evk_i \leq 1$ we have
\begin{equation*}
\left[\mC\vy - \vx\right]_i = \evk_i\left[\left(\emA_{i,i}^\ast\evx_i\right)^+ + \evx_i\right] - \evx_i = \evk_i\left(\emA_{i,i}^\ast\evx_i\right)^+ + \left(\evk_i - 1\right)\evx_i\text{.}
\end{equation*}
For $\evx_i \geq 0$, $\left[\mC\vy - \vx\right]_i = \left(\evk_i\emA_{i,i}^\ast + \evk_i - 1\right)\evx_i \geq 0$.

For $\evx_i < 0$, $\left[\mC\vy - \vx\right]_i = \left(\evk_i - 1\right)\evx_i \geq 0$.
\item $\mA_{i, :}^\ast\cdot\vx \neq \emA_{i,i}\evx_i$. With $\evk_i = 1$ we have
$$\left[\mC\vy - \vx\right]_i = \evk_i\left[\left(\mA^\ast_{i, :}\vx\right)^+ + \evx_i\right] - \evx_i = \left(\mA^\ast_{i, :}\vx\right)^+ \geq 0\text{.}$$
\end{enumerate}
Hence ``$\impliedby$'' holds.

``$\implies$'': Let $\mD = \mC\mB^\ast$. With \cref{lem:diag}, $\mD$ is diagonal. Consider the following two cases:
\begin{enumerate}
\item $\mA_{i, :}^\ast\cdot\vx = \emA_{i,i}\evx_i$. The $i$-th inequality can be written as
\begin{align}
\left[\mC\vy - \vx\right]_i &= \emD_{i,i}\left[\left(\emA_{i,i}^\ast\evx_i\right)^+ + \evx_i\right] - \evx_i \nonumber\\
&= \emD_{i,i}\left(\emA_{i,i}^\ast\evx_i\right)^+ + \left(\emD_{i,i} - 1\right)\evx_i \geq 0\text{.} \label{equ:thm_2.1.2}
\end{align}
Proof by contradiction: we need to find $\vx \in \R^d$ which contradicts with \cref{equ:thm_2.1.2} in the following three cases:
\begin{enumerate}
    \item If $\emD_{i,i} \leq 0$, then, $\evx_i > 0\implies \emD_{i,i}\left(\emA_{i,i}^\ast\evx_i\right)^+ + \left(\emD_{i,i} - 1\right)\evx_i < 0  \implies \bot$.
    \item If $\emD_{i,i} > 1$, then, $\evx_i < 0 \implies \bot$.
    \item If $0 < \emD_{i,i} < \frac1{1 + \emA^\ast_{i,i}}$, then $\exists\, a > 0$, s.t.~$\emD_{i,i} = \frac1{1 + \emA^\ast_{i,i} + a}$. Letting $\evx_i > 0$, we have
    $$\emD_{i,i}\left(\emA_{i,i}^\ast\evx_i\right)^+ + \left(\emD_{i,i} - 1\right)\evx_i = \frac{\left(\emA_{i,i}^\ast\evx_i\right)^+}{1 + \emA^\ast_{i,i} + a} - \frac{\left(\emA_{i,i}^\ast + a\right)\evx_i}{1 + \emA^\ast_{i,i} + a} < 0 \implies \bot\text{.}$$
\end{enumerate}
Hence $\frac{1}{1 + \emA^\ast_{i,i}} \leq \emD_{i,i} \leq 1$.
\item $\mA_{i, :}^\ast\cdot\vx \neq \emA_{i,i}\evx_i$, i.e.~$\exists\, j \neq i$, s.t.~$\mA_{i, j}^\ast > 0$. The $i$-th inequality can be written as
\begin{align}
\left[\mC\vy - \vx\right]_i &= \emD_{i,i}\left[\left(\mA^\ast_{i, :}\vx\right)^+ + \evx_i\right] - \evx_i \nonumber \\
&= \emD_{i,i}\left(\mA^\ast_{i, :}\vx\right)^+ + \left(\emD_{i,i} - 1\right)\evx_i \geq 0\text{.} \label{equ:thm_2.1.3}
\end{align}
Proof by contradiction: we need to find $\vx \in \R^d$ which contradicts with \cref{equ:thm_2.1.3} in the following three cases:
\begin{enumerate}
    \item If $\emD_{i,i} \leq 0$, then, $\evx_i > 0 \implies \emD_{i,i}\left(\mA^\ast_{i, :}\vx\right)^+ + \left(\emD_{i,i} - 1\right)\evx_i < 0 \implies \bot$.
    \item If $\emD_{i,i} > 1$, then, $\evx_i < 0 \land \evx_{-i} \leq 0 \implies \bot$.
    \item If $0 < \emD_{i,i} < 1$, then, $\evx_i > 0 \land \evx_j \leq -\frac{\emA^\ast_{i,i}}{\emA^\ast_{i,j}}\evx_i \land \evx_{k} \leq 0 \implies \bot$, where $k \neq i,\, j$.
\end{enumerate}
Hence $\emD_{i,i} = 1$.
\end{enumerate}
Hence $\mD = \mC\mB^\ast = \diag(\vk)$, and thereby ``$\implies$'' holds.
\end{proof}
\subsection{Layer 1}
The proof of \cref{lem:layer1_nonzero} is similar to that of \cref{lem:layer2_nonzero} because the two lemmas follow the same idea, which is to link objective functional minimization with always-hold inequalities. 
With \cref{lem:layer1_nonzero}, we prove \cref{thm:layer1} as follows.
\begin{proof}
With \cref{lem:layer1_nonzero}, we only need to prove $\forall\, \vx\in\R^d,\,\left(\mA^\ast\vx\right)^+ - \mA\vx \geq 0 \iff \forall\, i\in[d], \mA_{i,:} = k_i\mA^\ast_{i,:}$, where $0 \leq k_i \leq 1$. This is equivalent to what it is for a single row, i.e.~$\forall\, \vx\in\R^d,\,\left({\va^\ast}^\top\vx\right)^+ - \va^\top\vx \geq 0 \iff \va = k\va^\ast$, where $0 \leq k \leq 1$.

``$\impliedby$'': The case where $\va^\ast = 0$ is clear. If $\va^\ast$ is not a zero vector and $\va = k\va^\ast$, we have
\begin{enumerate}
\item If ${\va^\ast}^\top\vx \geq 0$, $\left({\va^\ast}^\top\vx\right)^+ - \va^\top\vx = {\va^\ast}^\top\vx - k{\va^\ast}^\top\vx = \left(1-k\right){\va^\ast}^\top\vx \geq 0$.
\item If ${\va^\ast}^\top\vx < 0$, $\left({\va^\ast}^\top\vx\right)^+ - \va^\top\vx = -k{\va^\ast}^\top\vx \geq 0$.
\end{enumerate}
Hence ``$\impliedby$'' holds.

``$\implies$'': If $\va^\ast = 0$, $\va$ must be a zero vector, otherwise let $\vx = \va$, then $-\va^\top\vx < 0 \implies \bot$. If $\va^\ast$ is not a zero vector, consider two cases below:
\begin{enumerate}
\item If $\va \neq k\va^\ast$ where $k \geq 0$. Let $\vx = \frac{\norm{\va^\ast}}{\norm{\va}}\cdot\va - \va^\ast$, then $\vx$ is not a zero vector, and
\begin{align*}
\left.\begin{array}{r@{\mskip\thickmuskip}l}
{\va^\ast}^\top\vx = \norm{\va^\ast}^2\left(\cos\theta - 1\right) < 0&,\\
\va^\top\vx = \norm{\va}\norm{\va^\ast}\left(1 - \cos\theta\right) > 0&
\end{array} \right\} \implies \left({\va^\ast}^\top\vx\right)^+ - \va^\top\vx < 0 \implies \bot
\end{align*}
where $\theta$ denotes the angle between $\va$ and $\va^\ast$.
\item If $\va = k\va^\ast$ where $k > 1$, then let $\vx = \va^\ast$ we have
$$\left({\va^\ast}^\top\vx\right)^+ - \va^\top\vx = \left(1-k\right)\norm{\va^\ast}^2 < 0 \implies \bot\text{.}$$
\end{enumerate}
Hence $\va = k\va^\ast$ where $0 \leq k \leq 1$ if $\va^\ast$ is not zero, and thereby ``$\implies$'' holds.
\end{proof}

\section{QP Convexity}\label{app:qp-conv}
The LPs in the main paper are trivially convex. So in this appendix, we only justify the convexity of our QPs: We first prove the convexity of single-sample objectives, then the convexity of the empirical objectives with nonparametric estimation, i.e.~$\hat{G}_1^{\text{NPE}}$ and $\hat{G}_2^{\text{NPE}}$ is obtained by the convexity of convex function summations.
\begin{lemma}\label{lem:L2-linearity-convex}
Suppose $f(\vu) = \frac12\norm{\mT\vu - \vb}^2$ where $\vu$ is a real matrix. Then $f$ is convex w.r.t.~$\vu$.
\end{lemma}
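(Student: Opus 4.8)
The plan is to recognize $f$ as the composition of an affine map with the (convex) squared Euclidean/Frobenius norm, so that convexity follows from the standard fact that precomposing a convex function with an affine map preserves convexity. Concretely, I would write $f = g \circ A$ where $A(\vu) = \mT\vu - \vb$ is affine and $g(\vw) = \frac12\norm{\vw}^2$. The only content that genuinely needs checking is the convexity of $g$ together with the affine-composition step; everything else is bookkeeping.

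First I would verify the affine-reduction step. For matrices $\vu_1,\vu_2$ and $t\in[0,1]$, set $\vw_1 = \mT\vu_1 - \vb$ and $\vw_2 = \mT\vu_2 - \vb$. Because the coefficients $t$ and $1-t$ sum to $1$, the constant $-\vb$ is reproduced exactly, giving $A(t\vu_1 + (1-t)\vu_2) = t\vw_1 + (1-t)\vw_2$. Thus it suffices to show $g(t\vw_1 + (1-t)\vw_2) \le t\,g(\vw_1) + (1-t)\,g(\vw_2)$.

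Next I would establish convexity of $g$ by a direct expansion. Using the trace inner product $\langle \mX,\mY\rangle = \Tr(\mX^\top\mY)$ that induces $\norm{\cdot}$, a one-line computation gives $t\,g(\vw_1) + (1-t)\,g(\vw_2) - g(t\vw_1 + (1-t)\vw_2) = \tfrac12 t(1-t)\norm{\vw_1 - \vw_2}^2 \ge 0$. This nonnegativity is exactly the convexity inequality for $g$, and combined with the previous step it yields convexity of $f$.

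Alternatively, and perhaps more in keeping with the paper's emphasis on positive semidefinite quadratic coefficient matrices, I would vectorize: left-multiplication $\vu\mapsto\mT\vu$ equals $(\mI\otimes\mT)\mvec(\vu)$, so $f(\vu) = \frac12\norm{\mM\mvec(\vu) - \vb'}^2$ for $\mM = \mI\otimes\mT$ and a suitable $\vb'$. Expanding this quadratic shows its Hessian equals $\mM^\top\mM$, which is positive semidefinite, giving convexity immediately. The main (and only mild) obstacle is purely notational: being careful that $\norm{\cdot}$ denotes the Frobenius norm on the matrix argument and handling the Kronecker-product identity correctly; the analytic content is elementary.
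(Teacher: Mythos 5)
Your proposal is correct, and your second (vectorized) variant is essentially the paper's own proof: the paper disposes of this lemma in one line by observing that the Hessian is $\mT^\top\mT$, which is positive semidefinite. Your first route --- convexity of $\vw \mapsto \frac12\norm{\vw}^2$ composed with an affine map, verified via the identity $t\,g(\vw_1)+(1-t)\,g(\vw_2)-g(t\vw_1+(1-t)\vw_2)=\frac12 t(1-t)\norm{\vw_1-\vw_2}^2$ --- is a sound, slightly more elementary alternative that avoids any Kronecker-product bookkeeping, but it buys nothing beyond what the Hessian computation already gives.
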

\cref{lem:L2-linearity-convex} is easily obtained since the Hessian $f''(\mT) = \mT^\top\mT$ is positive semidefinite. In the following, we demonstrate and justify the convexity of the QPs of both layers by rewriting their single-sample objectives into the formulation of $f$ and summing them without loss of convexity.
\begin{theorem}
QP: \cref{equ:layer2-QP-obj}, and QP: \cref{equ:layer1-QP-obj} are convex optimization problems.
\end{theorem}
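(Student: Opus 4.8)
The plan is to reduce both statements to \autoref{lem:L2-linearity-convex} by recognizing each per-sample summand as a squared Euclidean norm precomposed with an affine map, stacking all decision variables into a single long vector, and then invoking the two standard facts that a nonnegatively weighted sum of convex functions is convex and that linear inequality constraints cut out a convex (polyhedral) feasible set.

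First, for layer 2, I would collect all optimization variables into one vector $\vu = [\mvec(\mC)^\top, \bm{\xi}^{(1)\top}, \dots, \bm{\xi}^{(n)\top}]^\top$. Using the vectorization identity $\mC\vy^{(i)} = (\vy^{(i)\top}\otimes\mI_d)\mvec(\mC)$, each summand of the objective rewrites as
\begin{equation*}
\tfrac12\norm{\bm{\xi}^{(i)} + \vx^{(i)} - \mC\vy^{(i)}}^2 = \tfrac12\norm{\mT_i\vu - \vb_i}^2,
\end{equation*}
where $\mT_i$ carries $-(\vy^{(i)\top}\otimes\mI_d)$ on the $\mvec(\mC)$ block and $\mI_d$ on the $\bm{\xi}^{(i)}$ block (zeros on all other blocks), and $\vb_i = -\vx^{(i)}$. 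By \autoref{lem:L2-linearity-convex} each such summand is convex in $\vu$, so the objective $\hat{G}_2^{\text{NPE}}$, being $\frac1n$ times their sum, is convex. The constraints $\bm{\xi}^{(i)}\geq\vzero$ are linear inequalities in the entries of $\vu$, hence the feasible set is polyhedral and convex; minimizing a convex function over a convex set is a convex optimization problem, which settles the layer 2 case.

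Second, the layer 1 case follows verbatim after relabeling: I would stack $\vu = [\mvec(\mA)^\top, \bm{\phi}^{(1)\top}, \dots, \bm{\phi}^{(n)\top}]^\top$, use $\mA\vx^{(i)} = (\vx^{(i)\top}\otimes\mI_d)\mvec(\mA)$ to write each summand $\tfrac12\norm{\bm{\phi}^{(i)} + \mA\vx^{(i)} - \vh^{(i)}}^2$ in the form $\tfrac12\norm{\mT_i'\vu - \vh^{(i)}}^2$, apply \autoref{lem:L2-linearity-convex}, sum, and note that $\bm{\phi}^{(i)}\geq\vzero$ are once again linear constraints defining a convex feasible region.

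I do not anticipate a genuine obstacle: the entire content is that each squared-residual term is an affine-precomposed quadratic already covered by \autoref{lem:L2-linearity-convex}, while convexity under summation and convexity of polyhedral sets are standard. The only point needing care is the bookkeeping of the stacked variable $\vu$ and the block structure of $\mT_i$ and $\mT_i'$ — in particular ensuring the Kronecker factor acts on the $\mvec$ block with consistent dimensions — but this is routine dimension counting rather than a conceptual difficulty.
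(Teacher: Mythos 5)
Your proposal is correct and follows essentially the same route as the paper: both reduce each per-sample squared residual to the form $\frac12\norm{\mT\vu - \vb}^2$ covered by \autoref{lem:L2-linearity-convex}, then conclude by summing convex functions over a polyhedral feasible set. The only cosmetic difference is that you build one large $\mT_i$ over the fully stacked variable vector (with zero blocks for the other samples' nonparametric estimators), whereas the paper writes the per-sample block matrix with repeated $\vx^\top$ (resp.\ $-\vy^\top$) rows and handles the extension to all of $\bm{\Phi}$ (resp.\ $\bm{\Xi}$) by a short remark; these are the same argument up to a permutation of variables.
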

\begin{proof}
First of all, constraints of both QPs are trivially linear and convex. Thus, we only need to justify the convexity of the two empirical objectives, $\hat{G}_1^{\text{NPE}}$ and $\hat{G}_2^{\text{NPE}}$ (see \cref{equ:layer1-QP-obj,equ:layer2-QP-obj}). Consider the single-sample version of $\hat{G}_1^{\text{NPE}}$, namely $g_1^{\text{NPE}}(\mA, \bm{\phi}; \vx, \vh) = \frac12 \norm{\bm{\phi} + \mA\vx - \vh}^2$, which, in the formulation of $f$, can be rewritten with
\begin{equation}
\mT = \begin{bmatrix}
\vx^\top & & & & 1 & & & \\
& \vx^\top & & & & 1 & & \\
& & \ddots & & & & \ddots & \\
& & & \vx^\top & & & & 1
\end{bmatrix},\, \vu = \begin{bmatrix}
\mA_{1, :}^\top\\
\mA_{2, :}^\top\\
\vdots\\
\mA_{d, :}^\top\\
\bm{\phi}
\end{bmatrix}, \vb = \vh
\end{equation}
which guarantees the convexity of $g_1^{\text{NPE}}$ w.r.t. $\mA$ and $\bm{\phi}$ by \cref{lem:L2-linearity-convex}. For $g_2^{\text{NPE}}(\mC, \bm{\xi}; \vx, \vy) = \frac12 \norm{\bm{\xi} + \vx - \mC\vy}^2$, we have
\begin{equation}
\mT = \begin{bmatrix}
-\vy^\top & & & & 1 & & & \\
& -\vy^\top & & & & 1 & & \\
& & \ddots & & & & \ddots & \\
& & & -\vy^\top & & & & 1
\end{bmatrix},\, \vu = \begin{bmatrix}
\mC_{1, :}^\top\\
\mC_{2, :}^\top\\
\vdots\\
\mC_{m, :}^\top\\
\bm{\xi}
\end{bmatrix}, \vb = -\vx
\end{equation}
which guarantees the convexity of $g_2^{\text{NPE}}$ w.r.t. $\mC$ and $\bm{\xi}$ by \cref{lem:L2-linearity-convex}. Now we consider the summation. Taking layer 1 as an example, by definition we have 
\begin{equation}
\hat{G}_1^{\text{NPE}}(\mA, \bm{\Phi}) = \sum_{i\in[n]}g_1^{\text{NPE}}(\mA, \bm{\phi}^{(i)}; \vx^{(i)}, \vh^{(i)})\text{.}
\end{equation}
For each $i\in[n]$, equivalently, we take $\bm{\Phi}$ as a variable instead of $\bm{\phi}^{(i)}$ in $g_1^{\text{NPE}}$, but with only $\bm{\phi}^{(i)} \in \bm{\Phi}$ determining the value of $g_1^{\text{NPE}}$. In this sense, $g_1^{\text{NPE}}$ is convex w.r.t.~$\mA$ and $\bm{\Phi}$ for each $i\in[n]$. Thus, the sum $\hat{G}_1^{\text{NPE}}$ is convex w.r.t.~$\mA$ and $\bm{\Phi}$. Similarly, $\hat{G}_2^{\text{NPE}}$ is convex w.r.t.~$\mC$ and $\bm{\Xi}$.
\end{proof}

\section{Strong Consistency}\label{app:strong-consistency}
In this appendix, we justify the strong consistency of our estimators for the residual unit layer 1/2 learning and the whole network.

According to \cref{thm:layer2-general} and \cref{thm:layer1}, the solutions to our objective functionals are continuous sets. In addition, we also present intermediate results 
in terms of continuous sets, e.g.~possible left-inverse matrices for $\mB^\ast$ in the results for layer 2. 
Thus, to analyze the consistency of our learning algorithm, we define distances between sets, so that the convergence of sets can be well defined. Further point convergence results, i.e. layer 1/2 estimator strong consistency, are based on the set convergence we define. 

\begin{definition}[deviation]
Let $\sA \subseteq \sM$ and $\sB \subseteq \sM$ be two non-empty sets from a metric space $(\sM, d)$. The deviation of set $\sA$ from the set $\sB$, denoted by $D(\sA,\sB)$, is
\begin{equation}
D(\sA, \sB) = \sup_{\va \in \sA} d(\va, \sB) = \adjustlimits\sup_{\va \in \sA}\inf_{\vb \in \sB}d(\va, \vb)\text{,}
\end{equation}
where $\sup$ and $\inf$ represent supremum and infimum, respectively.
\end{definition}
\begin{definition}[Hausdorff distance]
Let $\sA \subseteq \sM$ and $\sB \subseteq \sM$ be two non-empty sets from a metric space $(\sM, d)$. The Hausdorff distance between $\sA$ and $\sB$, denoted by $D_{\rm{H}}(\sA,\sB)$, is
\begin{equation}
D_{\rm{H}}(\sA,\sB) = \max \{D(\sA, \sB), D(\sB, \sA)\}\text{.}
\end{equation}
\end{definition}
\begin{remark}
In the following, we use the Frobenius norm to define the distance between matrices, i.e.~$d(\mX, \mY) = \norm{\mX - \mY}_{\rm F}$.
\end{remark}
\subsection{Layer 2}
In this subsection, we prove the strong consistency of the layer 2 estimator.
\subsubsection{Objective Minimizer Space Estimator}
For simplicity of notation, we use $\hat{\sS}_n$ to denote our layer 2 QP/LP solution space by $n$ random samples\footnote{Here, we take samples as random variables for empirical analysis.} as a random set
\begin{equation}\label{equ:estimated-space-layer-2}
\hat{\sS}_n \coloneqq \{\mC \in \R ^{d\times m}: \mC \rvy^{(i)} - \rvx^{(i)}\geq 0,\, \forall\, i \in [n] \},
\end{equation}
and $\sS^\ast$ to denote the value space of $\mC$ in \cref{lem:layer2_nonzero} which minimizes layer 2 objective functional
\begin{equation}
\sS^\ast \coloneqq \{\mC \in \R ^{d\times m}: \mC \vy - \vx\geq 0,\,\forall\,\vx\in\R^d \text{ and its corresponding output } \vy\}\text{.}
\end{equation}
For further use, we name $\hat{\sP}_n$ as the set of $n$ sampled inputs which define $\hat{\sS}_n$, i.e.~$\hat{\sP}_n \coloneqq \{\rvx^{(i)}\}_{i\in[n]}$ where each $\rvx^{(i)}$ is the same random variable in \cref{equ:estimated-space-layer-2}. By the definitions above, we describe the strong consistency of our QP/LP as \cref{lem:QP-LP-consistency-layer-2}.




\begin{lemma}[QP/LP strong consistency, layer 2]\label{lem:QP-LP-consistency-layer-2}
$D_{\rm H}(\hat{\sS}_n, \sS^\ast) \xrightarrow{\text{a.s.}} 0$ as $n \to \infty$.
\end{lemma}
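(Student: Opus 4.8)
The plan is to first remove the easy half of the Hausdorff distance. Every constraint defining $\hat{\sS}_n$ is one of the (uncountably many) constraints defining $\sS^\ast$, so $\sS^\ast \subseteq \hat{\sS}_n$ for every $n$; hence $D(\sS^\ast,\hat{\sS}_n)=0$ and $D_{\rm H}(\hat{\sS}_n,\sS^\ast)=D(\hat{\sS}_n,\sS^\ast)=\sup_{\mC\in\hat{\sS}_n}\inf_{\mC'\in\sS^\ast}\norm{\mC-\mC'}_{\rm F}$. Moreover the sets are nested, $\hat{\sS}_1\supseteq\hat{\sS}_2\supseteq\cdots$, so $D(\hat{\sS}_n,\sS^\ast)$ is non-increasing and it suffices to drive its limit to $0$ almost surely. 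Next I would record the density input: since $\rvx$ has support $\R^d$, a standard Borel--Cantelli plus separability argument shows that almost surely the sample sequence $\{\rvx^{(i)}\}_{i\ge1}$ is dense in $\R^d$. Combining density with the continuity of $\vx\mapsto\mC\vy-\vx$ and the ``$\implies$'' direction of \autoref{lem:layer2_nonzero}, any $\mC$ feasible on a dense set is feasible everywhere, so $\bigcap_n\hat{\sS}_n=\sS^\ast$ almost surely.

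\textbf{The main obstacle: noncompactness.} When $m>d$ the set $\sS^\ast$ is unbounded: by \autoref{thm:layer2-general} it equals $\{\mC:\mC\mB^\ast=\diag(\vk),\,\vk\in\mathcal{K}\}$ for a bounded box $\mathcal{K}$, and its lineality space is $\gN\coloneqq\{\mV:\mV\mB^\ast=\vzero\}$ of dimension $d(m-d)$. The difficulty is that for small $n$ the polyhedron $\hat{\sS}_n$ has a strictly larger recession cone, in which case moving to infinity along an extra recession direction makes $D(\hat{\sS}_n,\sS^\ast)=\infty$; so set-theoretic convergence alone does not give Hausdorff convergence. The crux of the proof is to show that almost surely, for all large $n$, the recession cone of $\hat{\sS}_n$, namely $\{\mV:\mV\rvy^{(i)}\ge\vzero,\ \forall i\le n\}$, collapses exactly to $\gN$. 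Writing $\mW=\mV\mB^\ast$ and $\vu^{(i)}=\left(\mA^\ast\rvx^{(i)}\right)^+ + \rvx^{(i)}$, the recession condition becomes $\mW\vu^{(i)}\ge\vzero$, i.e.\ each row of $\mW$ lies in the dual cone of $\{\vu^{(i)}\}_{i\le n}$. I would then argue that the attainable values of $\vu$ positively span $\R^d$ (the whole negative orthant is realized by $\rvx<0$, while $\rvx>0$ contributes $(\mA^\ast+\mI_d)\cdot(\text{positive orthant})$), so that almost surely finitely many samples already positively span $\R^d$; the dual cone is then $\{\vzero\}$, forcing $\mW=\vzero$ and hence the recession cone of $\hat{\sS}_n$ equal to $\gN$. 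In particular $D(\hat{\sS}_n,\sS^\ast)$ is finite for all large $n$.

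\textbf{Quotienting out lineality and compactness.} Once both sets share lineality $\gN$, let $\Pi$ be the orthogonal projection onto $\gN^\perp$. Because $\hat{\sS}_n=\hat{\sS}_n+\gN$ and $\sS^\ast=\sS^\ast+\gN$, distances are unchanged under $\Pi$, so $D(\hat{\sS}_n,\sS^\ast)=D(\Pi\hat{\sS}_n,\Pi\sS^\ast)$. The images are now \emph{pointed}: $\Pi\sS^\ast$ is compact (it is the continuous image of the bounded box $\mathcal{K}$ under $\vk\mapsto\diag(\vk)(\mB^\ast)^\dagger$), and $\Pi\hat{\sS}_n$ is bounded since its recession cone is $\{\vzero\}$. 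We are left with a decreasing sequence of compact convex sets $\Pi\hat{\sS}_n\downarrow\Pi\sS^\ast$, for which the standard variational-analysis fact \citep{rockafellar2009variational} gives $D_{\rm H}(\Pi\hat{\sS}_n,\Pi\sS^\ast)\to0$: if not, pick $\epsilon>0$ and $\mC_n\in\Pi\hat{\sS}_n$ with $d(\mC_n,\Pi\sS^\ast)\ge\epsilon$; by compactness a subsequence converges to some $\mC_\star$ which, lying in every $\Pi\hat{\sS}_m$ (closed), belongs to $\bigcap_m\Pi\hat{\sS}_m=\Pi\sS^\ast$, contradicting $d(\mC_\star,\Pi\sS^\ast)\ge\epsilon$. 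This yields $D(\hat{\sS}_n,\sS^\ast)\xrightarrow{\text{a.s.}}0$, hence $D_{\rm H}(\hat{\sS}_n,\sS^\ast)\xrightarrow{\text{a.s.}}0$.

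I expect the recession-cone collapse in the second paragraph to be the real work; everything else is either immediate (the inclusion $\sS^\ast\subseteq\hat{\sS}_n$ and monotonicity) or a reuse of the density/continuity argument already used for \autoref{lem:layer2_nonzero} together with a textbook compactness statement. Under \autoref{cond:layer-2-unique} the picture degenerates pleasantly: $\gN=\{\vzero\}$ and $\sS^\ast=\{[\mB^\ast]^{-1}\}$ is a single point, so the quotient step is vacuous and the argument reduces to showing that $\hat{\sS}_n$ eventually becomes bounded (again via positive spanning of $\{\vu^{(i)}\}$) and shrinks to that point.
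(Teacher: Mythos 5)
Your proposal is correct, but it reaches the conclusion by a genuinely different route than the paper. The paper's proof is constructive and quantitative: it extracts from the proofs of \autoref{lem:diag} and \autoref{thm:layer2-general} a fixed family of $3d$ witness points (the $-\ve^{(j)}$, the $\left(\mA^\ast\right)^{-1}\ve^{(j)}$, and $d$ further points handling the diagonal entries of $\mD=\mC\mB^\ast$), takes neighborhoods $\sN_i$ on which the corresponding strict violations persist, bounds the probability that some neighborhood is never sampled by $3d\left[1-\min_i P(\sN_i)\right]^n$, and concludes via Borel--Cantelli that almost surely $\hat{\sS}_n$ eventually coincides with $\sS^\ast$ exactly, so the Hausdorff distance is eventually zero. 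You instead argue softly: the inclusion $\sS^\ast\subseteq\hat{\sS}_n$ and monotonicity reduce everything to the one-sided deviation; almost-sure density of the samples plus continuity gives $\bigcap_n\hat{\sS}_n=\sS^\ast$; the recession cone of $\hat{\sS}_n$ collapses to the lineality space $\{\mV:\mV\mB^\ast=\vzero\}$ once the attainable $\vu^{(i)}$ positively span $\R^d$ (your dual-cone computation using the nonnegativity of $\mA^\ast$ checks out); and a decreasing sequence of compact convex sets converges in Hausdorff distance to its intersection. The paper's version buys an explicit geometric failure rate and the stronger ``eventually exact'' conclusion; yours buys robustness. In particular, you confront head-on the unboundedness of $\sS^\ast$ when $m>d$, which the paper never has to face only because it asserts exact set equality, and you sidestep a uniformity question latent in the paper's argument: the violation margin at each witness point degenerates as $\mD$ approaches the boundary of the feasible set, so a single fixed neighborhood does not visibly exclude \emph{all} infeasible $\mD$ at once, whereas your compactness step needs no such uniformity. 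It is also worth noting that your recession-cone/Farkas/positive-spanning machinery is essentially the technique the paper itself deploys for layer 1 in \autoref{lem:wp1-in-compact-set}; in effect you have transported the paper's layer-1 argument to layer 2.
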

\begin{proof}
First we prove that $D_{\rm H}(\hat{\sS}_n, \sS^\ast) \xrightarrow{\text{p}} 0$ as $n \to \infty$. Recall \cref{thm:layer2-general}, $\mC \rvy -\rvx \geq 0$ only if $\mC \mB^* = \diag(\vk)$. We inherit the notation as defining $\mD = \mC \mB^\ast$. Theorem \ref{thm:layer2-general} is based on \cref{lem:diag}, and we prove them by raising points that show contradiction, i.e.~violate the inequality that holds in a pointwise fashion:
\begin{enumerate}
\item In the proof of \cref{lem:diag}, we use $d$ points: $-\ve^{(i)}$, for $i \in [d]$, to make \\$\sum_{k=1}^d \emD_{i,k} \left[ \left( \mA_{k, :}^\ast \cdot\vx\right)^+ + \evx_k\right] < \evx_i$, so that $\emD_{i,j}$ ($i \neq j$) cannot be positive; and another $d$ points: $(\mA^*)^{-1}\ve^{(i)}$, to show $\sum_{k=1}^d\emD_{i,k}\abs{\mA_{k, :}^\ast\cdot\vx} < 0$, so that $D_{i,j}$ ($i \neq j$) cannot be negative. For each $-\ve^{(i)}$ we use here, since the violations follow strict inequalities, we know there exists a neighborhood of $-\ve^{(i)}$, $\sN_i = \sN(-\ve^{(i)})$, such that $\forall \vz \in \sN_i$,
$\sum_{k=1}^d\emD_{i,k}\left[\left(\mA_{k, :}^\ast\cdot\vz\right)^+ + \evz_k\right] < \evz_i$. We can similarly find such neighborhood of each $(\mA^*)^{-1}\ve^{(i)}$ that the strict inequality holds within the neighborhood respectively. We index them as $\sN_{d+1}$ to $\sN_{2d}$.
\item In the proof of \cref{thm:layer2-general}, we further construct $d$ points: for each $i \in [d]$, we take a point $\evx$ such that $\evx_i > 0 \land \evx_j \leq -\frac{\emA^\ast_{i,i}}{\emA^\ast_{i,j}}\evx_i \land \evx_{k} \leq 0$, where $k \neq i,\, j$. This counterexample shows $\left[\mC\vy - \vx\right]_i<0$, and eliminates the possibility of $0 < \emD_{i,i} < 1$ when $\mA_{i, :}^\ast$ is not a scale transformation. We can similarly find neighborhood of each point and index them as $\sN_{2d+1}$ to $\sN_{3d}$. Note that we omit some cases in the proof of \cref{thm:layer2-general}, because the first $2d$ points are sufficient to use in those cases to show contradiction.
\end{enumerate}

In the sampling procedure, if we sample at least one point in each neighborhood $\sN_i$, \cref{thm:layer2-general} assures the solution we get $\hat{\rmC}_n$ would lie in the true optimal set $\sS^\ast$. The probability that the sampling procedure ``omits'' any of the neighborhoods is
\begin{align}\label{equ:inequ-layer-2}
\begin{split}
P\left(\hat{\sP}_n \bigcap \sN_1 = \emptyset \text{ or } \hat{\sP}_n \bigcap \sN_2 = \emptyset \text{ or } \dots \text{ or } \hat{\sP}_n \bigcap \sN_{3d} = \emptyset\right) &\leq \sum_{i = 1}^{3d} P\left(\hat{\sP}_n \bigcap \sN_i = \emptyset \right) \\&\leq 3d [1 - \min_{i \in [3d]}P(\sN_i)]^n.
\end{split}
\end{align}
Since the measure on each neighborhood $P(\sN_i) = \int_{\vx\in\sN_i} p(\vx) > 0$,
\begin{equation}
P\left(\hat{\sP}_n \bigcap \sN_1 = \emptyset \text{ or } \hat{\sP}_n \bigcap \sN_2 = \emptyset \text{ or } \dots \text{ or } \hat{\sP}_n \bigcap \sN_{3d} = \emptyset\right) \to 0 \text{, as } n \to \infty\text{.}
\end{equation}
Here we obtain $D_{\rm H}(\hat{\sS}_n, \sS^\ast) \xrightarrow{\text{p}} 0$ as $n \to \infty$. Now we take the infinite sum over the both sides of \cref{equ:inequ-layer-2}
\begin{align*}
&\sum_{n\in[\infty]}P\left(\hat{\sP}_n \bigcap \sN_1=\emptyset \text{ or } \hat{\sP}_n \bigcap \sN_2 = \emptyset \text{ or } \dots \text{ or } \hat{\sP}_n \bigcap \sN_{3d} = \emptyset\right) \\
&\leq 3d \sum_{n\in[\infty]}\left[1 - \min_{i \in [3d]}P(\sN_i)\right]^n = 3d\left[\frac{1}{\min_{i \in [3d]}P(\sN_i)} - 1\right] < +\infty\text{.}
\end{align*}
By the Borel-Cantelli lemma \citep{borel1909probabilites}, $D_{\rm H}(\hat{\sS}_n, \sS^\ast) \xrightarrow{\text{a.s.}} 0$ as $n\to\infty$.
\end{proof}

\subsubsection{Scale Factor Estimator}
To avoid ambiguity, we use $n_{\text{sf}}$ to denote the number of samples used in \cref{alg:preReLU-TLRN-QP-Layer2-rescale}. The samples pairs are $\{(\rvx^{(i)}, \rvy^{(i)})\}_{i = 1}^{n_{\text{sf}}}$. Without loss of generality, the following discussion focuses on some fixed index $j \in [d]$. In \cref{alg:preReLU-TLRN-QP-Layer2-rescale}, we plug in our estimator $\hat{\rmC}_n$ and use LR to estimate $k_j$ given that $\ervx_j^{(i)} < 0$
\begin{equation}\label{equ:lr-k-layer-2}
\rk_{n_{\text{sf}}}(\hat{\rmC}_n) = \argmin_{k}\ \  \frac{1}{2n_{\text{sf}}} \sum_{i \in[n_{\text{sf}}]} \norm{[\shat{\rmC}_n \rvy^{(i)}]_j - k\ervx^{(i)}_j}^2 = \frac{\sum_{i \in[n_{\text{sf}}]} \ervx_j^{(i)}[\hat{\rmC}_n \rvy^{(i)}]_j }{\sum_{i\in[n_{\text{sf}}]} \left(\ervx_j^{(i)} \right)^2}\text{.}
\end{equation}
We first give the strong consistency of layer 2 scale factor estimator for as $n_{\text{sf}}\to\infty$, as described in \cref{lem:k-est-consistency}.
\begin{lemma}[scale factor estimator strong consistency, layer 2]\label{lem:k-est-consistency}
Suppose $\mA^\ast$ is a scale transformation w.r.t. the $j$-th component, and $\rk_{n_{\text{sf}}}(\hat{\rmC}_n)$ is the $n_{\text{sf}}$-sample estimator of $k_j$ via LR: \cref{equ:lr-k-layer-2} given that $\rvx_j^{(i)} < 0$. Define sets
\begin{equation}
\sU_{n_{\text{sf}},n} \coloneqq \{\rk_{n_{\text{sf}}}(\mC): \mC \in \hat{\sS}_n\} \text{, and } \sU^\ast \coloneqq \left[\frac{1}{1 + \emA^\ast_{j,j}}, 1\right]\text{.}
\end{equation}
Then $\lim\limits_{n_{\text{sf}}\to\infty}\lim\limits_{n\to\infty}D_{\rm H}(\sU_{n_{\text{sf}},n}, \sU^\ast) \aseq 0$.
\end{lemma}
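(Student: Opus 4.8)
The plan is to split the double limit via the triangle inequality by inserting the intermediate set $\sU_{n_{\text{sf}}} \coloneqq \{\rk_{n_{\text{sf}}}(\mC) : \mC \in \sS^\ast\}$, writing
\[
D_{\rm H}(\sU_{n_{\text{sf}},n}, \sU^\ast) \leq D_{\rm H}(\sU_{n_{\text{sf}},n}, \sU_{n_{\text{sf}}}) + D_{\rm H}(\sU_{n_{\text{sf}}}, \sU^\ast),
\]
and to treat the two terms separately. I expect the second term to vanish \emph{exactly} for every $n_{\text{sf}}$, which is what makes the inner limit the only substantive part and renders the outer limit a formality.

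First I would show that the layer 2 LR is noiseless on the true solution space. Fix $\mC \in \sS^\ast$; by \autoref{thm:layer2-general}, $\mC\mB^\ast = \diag(\vk)$, so for any sample $(\vx^{(i)}, \vy^{(i)})$ we have $[\mC\vy^{(i)}]_j = k_j[(\mA^\ast_{j,:}\vx^{(i)})^+ + \evx^{(i)}_j]$. Since $\mA^\ast$ is a scale transformation w.r.t.\ the $j$-th component, $\mA^\ast_{j,:}\vx^{(i)} = \emA^\ast_{j,j}\evx^{(i)}_j$, and conditioning on $\evx^{(i)}_j < 0$ together with $\emA^\ast_{j,j} \geq 0$ kills the ReLU term, leaving $[\mC\vy^{(i)}]_j = k_j \evx^{(i)}_j$. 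Substituting into \autoref{equ:lr-k-layer-2} collapses the ratio to $\rk_{n_{\text{sf}}}(\mC) = k_j$ for every realization and every $n_{\text{sf}}$. Hence $\rk_{n_{\text{sf}}}$ restricted to $\sS^\ast$ agrees with the map $\mC \mapsto [\mC\mB^\ast]_{jj}$, whose image over $\sS^\ast$ is exactly the interval $[\tfrac{1}{1+\emA^\ast_{j,j}}, 1] = \sU^\ast$ (the $j$-th diagonal entry ranges freely over this interval as $\mC$ varies in $\sS^\ast$, the constraints of \autoref{thm:layer2-general} being coordinatewise). Therefore $\sU_{n_{\text{sf}}} = \sU^\ast$, the second term is identically zero, and the outer limit in $n_{\text{sf}}$ is trivial once the inner limit is in hand.

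For the inner limit I would transfer the set convergence of \autoref{lem:QP-LP-consistency-layer-2} through the estimator map. For fixed samples, $\mC \mapsto \rk_{n_{\text{sf}}}(\mC)$ is linear in the row $\mC_{j,:}$, hence globally Lipschitz in Frobenius norm with constant $\Lambda_{n_{\text{sf}}} = \norm{\sum_{i:\evx^{(i)}_j<0} \evx^{(i)}_j \vy^{(i)}} \big/ \sum_{i:\evx^{(i)}_j<0} (\evx^{(i)}_j)^2$. A Lipschitz map contracts one-sided deviations, $D(f(\sA),f(\sB)) \leq \Lambda\,D(\sA,\sB)$, and therefore Hausdorff distances, so (using also $\sS^\ast \subseteq \hat{\sS}_n$, which makes the distance purely one-sided)
\[
D_{\rm H}(\sU_{n_{\text{sf}},n}, \sU_{n_{\text{sf}}}) = D_{\rm H}\big(\rk_{n_{\text{sf}}}(\hat{\sS}_n),\, \rk_{n_{\text{sf}}}(\sS^\ast)\big) \leq \Lambda_{n_{\text{sf}}}\, D_{\rm H}(\hat{\sS}_n, \sS^\ast).
\]
Because $\Lambda_{n_{\text{sf}}}$ is a fixed constant once the $n_{\text{sf}}$ samples are drawn (finite a.s.\ as soon as one sample has $\evx_j < 0$, which occurs a.s.\ since $P(\ervx_j<0)>0$) while $D_{\rm H}(\hat{\sS}_n,\sS^\ast)\xrightarrow{\text{a.s.}}0$ by \autoref{lem:QP-LP-consistency-layer-2}, the right side tends to $0$ a.s.\ as $n\to\infty$. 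Combining the two terms yields $\lim_{n\to\infty} D_{\rm H}(\sU_{n_{\text{sf}},n},\sU^\ast)\aseq 0$ for each fixed $n_{\text{sf}}$, and the outer limit then gives the claim.

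The main obstacle I anticipate is the bookkeeping when $m>d$, where both $\hat{\sS}_n$ and $\sS^\ast$ are unbounded (the equations $\mC_{j,:}\mB^\ast = k_j(\ve^{(j)})^\top$ leave $m-d$ free directions per row). I must verify that the Lipschitz-to-Hausdorff contraction survives for unbounded sets — it does, since the deviation is a pointwise $\sup\inf$ and the bound is applied pointwise — and that these free directions do not inflate $\sU_{n_{\text{sf}}}$. The latter is the crux: because $\vy^{(i)} \in \Img(\mB^\ast)$, the quantity $\mC_{j,:}\vy^{(i)}$ depends only on $\mC_{j,:}\mB^\ast = k_j(\ve^{(j)})^\top$ and not on the null-space component of $\mC_{j,:}$, so $\rk_{n_{\text{sf}}}$ is constant along the unbounded directions and $\sU_{n_{\text{sf}}}$ stays equal to the bounded interval $\sU^\ast$. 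A secondary point is to intersect the two probability-one events (finiteness of $\Lambda_{n_{\text{sf}}}$ and the convergence from \autoref{lem:QP-LP-consistency-layer-2}) on a common full-measure set before passing to the limit.
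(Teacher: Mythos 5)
Your proof is correct and shares its core with the paper's: both reduce $D_{\rm H}$ to the one-sided deviation $D(\sU_{n_{\text{sf}},n},\sU^\ast)$ using $\sS^\ast\subseteq\hat{\sS}_n$ together with the identity $\rk_{n_{\text{sf}}}(\sS^\ast)=\sU^\ast$, and both transfer the a.s.\ set convergence of \autoref{lem:QP-LP-consistency-layer-2} through a Lipschitz bound on the LR map $\mC\mapsto\rk_{n_{\text{sf}}}(\mC)$. You differ in two ways worth noting. First, you establish $\sU_{n_{\text{sf}}}=\sU^\ast$ by direct computation — the LR is exactly noiseless on $\sS^\ast$ since $(\emA^\ast_{j,j}\evx_j)^+$ vanishes when $\evx_j<0$ and $\emA^\ast_{j,j}\geq0$, so $\rk_{n_{\text{sf}}}(\mC)=k_j$ identically — whereas the paper cites \autoref{thm:layer2-general} and \autoref{thm:scale-factor-layer2-general} for the same fact. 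Second, and more substantively, you handle the double limit in the order the statement prescribes: for fixed $n_{\text{sf}}$ your Lipschitz constant $\Lambda_{n_{\text{sf}}}$ is just a fixed (a.s.\ finite, once a sample with $\evx_j^{(i)}<0$ has appeared) random number, so the inner limit $n\to\infty$ already annihilates the bound and the outer limit is vacuous. The paper instead lets $n_{\text{sf}}\to\infty$ inside its (factored, looser) constant and invokes the strong law of large numbers, which costs an extra moment assumption that your route does not need. Your observation for $m>d$ — that $\rk_{n_{\text{sf}}}$ is constant along the $m-d$ unbounded directions of $\sS^\ast$ because $\rvy\in\Img(\mB^\ast)$, so neither $\sU_{n_{\text{sf}}}$ nor the contraction argument is affected by the sets being unbounded — is a point the paper leaves implicit; the only residual technicality is that for a fixed finite $n_{\text{sf}}$ the denominator of the LR can vanish with positive probability, which is harmless only after the outer limit is taken.
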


\begin{proof}
Following our notation, \cref{thm:layer2-general} and \cref{thm:scale-factor-layer2-general} ensure that if $\mA^\ast$ is a scale transformation w.r.t. the $j$-th component, for any $\mC$ belonging to the true optimal set $\sS^\ast$, $\rk_{n_{\text{sf}}}(\mC) \in \left[\frac{1}{1 + \emA^\ast_{j,j}}, 1\right]$. And the ``iff'' statement strengthens that $\sU^\ast = \{\rk_{n_{\text{sf}}}(\mC): \mC \in \sS^\ast\}$ for any $n_{\text{sf}}\in\mathbb{Z}^+$. Note that since $\sS^\ast \subset \hat{\sS}_n$, we have $\sU^\ast \subset \sU_{n_{\text{sf}},n}$. We only need to prove $D(\sU_{n_{\text{sf}},n}, \sU^\ast) \xrightarrow{\text{a.s.}} 0$ as $n \to \infty$.

$\forall\,\hat{\rmC}_n \in \hat{\sS}_n$ and $\forall\,\mC \in \sS^\ast$,

\begin{align*}
&\abs{\rk_{n_{\text{sf}}}(\hat{\rmC}_n) - \rk_{n_{\text{sf}}}(\mC)} \\&= \frac{1}{\sum_{i\in[n_{\text{sf}}]} \left(\ervx_j^{(i)} \right)^2} \abs{\sum_{i \in[n_{\text{sf}}]} \ervx_j^{(i)}\left[(\ve^{(j)})^\top \left(\hat{\rmC}_n - \mC\right)\mB^\ast \left[\left(\mA^\ast \rvx^{(i)}\right)^+ + \rvx^{(i)}\right]\right] }\\
&\leq \frac{1}{\sum_{i\in[n_{\text{sf}}]} \left(\ervx_j^{(i)} \right)^2} \left[\sum_{i \in[n_{\text{sf}}]} \abs{\ervx_j^{(i)}} \norm{(\ve^{(j)})^\top \left(\hat{\rmC}_n - \mC\right)\mB^\ast}_2\left(\norm{\mA^\ast \rvx^{(i)}}_2 + \norm{\rvx^{(i)}}_2\right) \right]\\
&\leq \frac{1}{\sum_{i\in[n_{\text{sf}}]} \left(\ervx_j^{(i)} \right)^2} \left[\sum_{i \in[n_{\text{sf}}]} \abs{\ervx_j^{(i)}} \norm{\hat{\rmC}_n - \mC}_{\rm F} \norm{\mB^\ast}_{\rm F}  \left(\norm{\mA^\ast}_{\rm F} + 1 \right)\norm{\rvx^{(i)}}_2 \right]\\
&= \frac{\sum_{i \in[n_{\text{sf}}]} \left[\abs{\ervx_j^{(i)}} \norm{\mB^\ast}_{\rm F}  \left(\norm{\mA^\ast}_{\rm F} + 1 \right)\norm{\rvx^{(i)}}_2\right] }{\sum_{i \in[n_{\text{sf}}]} \left(\ervx_j^{(i)} \right)^2} \norm{\hat{\rmC}_n - \mC}_{\rm F}\text{.}
\end{align*}
Then,
\begin{align}
&\adjustlimits\sup_{\hat{\rmC}_n \in \hat{\sS}_n} \inf_{\mC \in \sS^\ast}\abs{\rk_{n_{\text{sf}}}(\hat{\rmC}_n) - \rk_{n_{\text{sf}}}(\mC)} \\&\leq \frac{\sum_{i \in[n_{\text{sf}}]} \left[\abs{\ervx_j^{(i)}} \norm{\mB^\ast}_{\rm F}  \left(\norm{\mA^\ast}_{\rm F} + 1 \right)\norm{\rvx^{(i)}}_2\right] }{\sum_{i\in[n_{\text{sf}}]} \left(\ervx_j^{(i)} \right)^2} \adjustlimits\sup_{\hat{\rmC}_n \in \hat{\sS}_n} \inf_{\mC \in \sS^\ast} \norm{\hat{\rmC}_n - \mC}_{\rm F},
\end{align}
which implies 
\begin{equation}\label{equ:k-est-bound}
D(\sU_{n_{\text{sf}},n}, \sU) \leq \frac{\sum_{i \in[n_{\text{sf}}]} \left[\abs{\ervx_j^{(i)}} \norm{\mB^\ast}_{\rm F}  \left(\norm{\mA^\ast}_{\rm F} + 1 \right)\norm{\rvx^{(i)}}_2\right] }{\sum_{i\in[n_{\text{sf}}]} \left(\ervx_j^{(i)} \right)^2} D(\hat{\sS}_{n}, \sS^\ast)\text{.}
\end{equation}
Take the $n_{\text{sf}}\to\infty$ limit over both sides of \cref{equ:k-est-bound}. With the strong law of large numbers\footnote{Here we assume that the Kolmogorov's strong law assumption on moments \citep{sen1994large} is met as is commonly done in similar analysis.} we have
\begin{equation}
\lim_{n_{\text{sf}}\to\infty}D(\sU_{n_{\text{sf}},n}, \sU) \leq \frac{\E\left[\abs{\ervx_j^{(i)}} \norm{\mB^\ast}_{\rm F}  \left(\norm{\mA^\ast}_{\rm F} + 1 \right)\norm{\rvx^{(i)}}_2\right] }{\E\left[\ervx_j^{(i)} \right]^2} D(\hat{\sS}_{n}, \sS^\ast)\text{, w.p.~$1$.}
\end{equation}
Since $D(\hat{\sS}_n, \sS^\ast) \xrightarrow{\text{a.s.}} 0$ as $n\to\infty$, we have $D(\sU_{n_{\text{sf}},n}, \sU^\ast) \xrightarrow{\text{a.s.}} 0$ as $n \to \infty$ then $n_{\text{sf}}\to\infty$.
\end{proof}

\subsubsection{Layer 2 Weights Estimator}
In \cref{alg:preReLU-TLRN-QP-Layer2-general}, we solve $\mB$ via LR. Let $\hat{\rvz}^{(i)} = \diag^{-1}(\shat{\rvk})\cdot\hat{\rmC}_n\rvy^{(i)} \in \R^d$, where $\shat{\rvk}$ is obtained through \cref{alg:preReLU-TLRN-QP-Layer2-rescale} with input $\hat{\rmC}_n$. Assume we are using sample size of ${n_{\text{w}}}$ to do the LR. The optimization problem is:
\begin{equation}\label{equ:B-lr}
\min_{\mB} \sum_{i \in[n_{\text{w}}]}\norm{\rvy^{(i)} - \mB \hat{\rvz}^{(i)}}^2.
\end{equation}
Now we present the strong consistency of layer 2 estimator, as described in \cref{thm:layer-2-strong-consistency}.
\begin{theorem}[strong consistency, layer 2]\label{thm:layer-2-strong-consistency}
Suppose $\hat{\rmB}_{n_{\text{sf}}}$ is the solution to \cref{equ:B-lr}. Then $\hat{\rmB}_{n_{\text{w}}} \xrightarrow{\text{a.s.}} \mB^\ast$ as $n, n_{\text{sf}}, n_{\text{w}}\to\infty$.
\end{theorem}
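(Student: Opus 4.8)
The plan is to recognize that, once the layer-2 left inverse has been recovered up to the diagonal rescaling, the remaining problem \autoref{equ:B-lr} is an exactly identified \emph{noiseless} linear regression, and then to push the two convergences already established (\autoref{lem:QP-LP-consistency-layer-2} and \autoref{lem:k-est-consistency}) through the closed-form least-squares map via the continuous mapping theorem. Write $\mM_n \coloneqq \diag^{-1}(\shat{\rvk})\,\hat{\rmC}_n$ for the rescaled regressor used to form $\hat{\rvz}^{(i)} = \mM_n\rvy^{(i)}$, and let $\sL \coloneqq \{\mM\in\R^{d\times m} : \mM\mB^\ast = \mI_d\}$ be the set of exact left inverses of $\mB^\ast$. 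The ordinary least-squares solution of \autoref{equ:B-lr} has the closed form
\begin{equation*}
\hat{\rmB}_{n_{\text{w}}} = \Psi\!\left(\mM_n, \hat{\mSigma}_y\right), \qquad \Psi(\mM, \mSigma) \coloneqq \mSigma\mM^\top\left(\mM\mSigma\mM^\top\right)^{-1},
\end{equation*}
where $\hat{\mSigma}_y = \frac{1}{n_{\text{w}}}\sum_{i}\rvy^{(i)}(\rvy^{(i)})^\top$; here I have used $\hat{\rvz}^{(i)}=\mM_n\rvy^{(i)}$ to write $\frac1{n_{\text{w}}}\sum_i \rvy^{(i)}(\hat{\rvz}^{(i)})^\top=\hat{\mSigma}_y\mM_n^\top$ and $\frac1{n_{\text{w}}}\sum_i \hat{\rvz}^{(i)}(\hat{\rvz}^{(i)})^\top=\mM_n\hat{\mSigma}_y\mM_n^\top$.

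The key algebraic observation is that $\Psi(\cdot,\mSigma_y)$ collapses the entire set $\sL$ to the single point $\mB^\ast$, where $\mSigma_y = \E[\rvy\rvy^\top]$. Indeed, in the noiseless model $\rvy = \mB^\ast\rvz$ with $\rvz = (\mA^\ast\rvx)^+ + \rvx$, so $\mSigma_y = \mB^\ast\mSigma_z{\mB^\ast}^\top$ with $\mSigma_z = \E[\rvz\rvz^\top]$. For any $\mM\in\sL$ we have $\mM\mB^\ast = \mI_d$, whence $\mM\mSigma_y\mM^\top = \mSigma_z$ and $\mSigma_y\mM^\top = \mB^\ast\mSigma_z$, giving $\Psi(\mM,\mSigma_y) = \mB^\ast\mSigma_z\mSigma_z^{-1} = \mB^\ast$. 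This uses only $\mM\mB^\ast=\mI_d$ and the invertibility of $\mSigma_z$; the latter holds because $\rvx$ has full support $\R^d$ and $\mA^\ast$ is full rank, so $\rvz$ cannot be annihilated by any nonzero vector (on the positive-measure region $\mA^\ast\rvx\le\vzero$ one has $\rvz=\rvx$). Note that the rescaling in \autoref{thm:scale-factor-layer2-general} is designed precisely so that every element of $\sS^\ast$, after division by its scale factors, becomes an exact left inverse, i.e.\ $\diag^{-1}(\vk)\,\sS^\ast \subseteq \sL$.

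With this in hand I take the limits iteratively, matching the preceding lemmas. First let $n\to\infty$: \autoref{lem:QP-LP-consistency-layer-2} gives $D_{\rm H}(\hat{\sS}_n,\sS^\ast)\aseq0$, so any QP/LP solution $\hat{\rmC}_n\in\hat{\sS}_n$ satisfies $\inf_{\mC\in\sS^\ast}\norm{\hat{\rmC}_n-\mC}_{\rm F}\aseq0$; combined with the scale-factor convergence of \autoref{lem:k-est-consistency} as $n_{\text{sf}}\to\infty$, the rescaled regressor satisfies $D(\mM_n,\sL)\aseq0$. Next let $n_{\text{w}}\to\infty$: the strong law of large numbers yields $\hat{\mSigma}_y\aseq\mSigma_y$. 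The map $\Psi$ is continuous at every $(\mM,\mSigma_y)$ with $\mM\in\sL$, since there $\mM\mSigma_y\mM^\top=\mSigma_z\succ\vzero$ is nonsingular and matrix inversion is continuous on the open set of nonsingular matrices; hence $\Psi$ is continuous on an open neighborhood of $\sL\times\{\mSigma_y\}$. Combining $D(\mM_n,\sL)\to0$, $\hat{\mSigma}_y\to\mSigma_y$, and the fact that $\Psi(\cdot,\mSigma_y)\equiv\mB^\ast$ on $\sL$, the continuous mapping theorem \citep{mann1943stochastic} delivers $\hat{\rmB}_{n_{\text{w}}}=\Psi(\mM_n,\hat{\mSigma}_y)\aseq\mB^\ast$ as $n,n_{\text{sf}},n_{\text{w}}\to\infty$.

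The hard part will be the set-valued nature of the convergence: the QP/LP returns not a single matrix but an arbitrary element of the shrinking set $\hat{\sS}_n$, so $\mM_n$ need not converge to any particular matrix—it converges only to the set $\sL$, which is generally unbounded when $m>d$. The crux that makes this harmless is that $\Psi(\cdot,\mSigma_y)$ is \emph{constant} on $\sL$, so non-uniqueness of the limiting regressor does not affect the limit of the estimate; what must be checked with care is that the continuity of $\Psi$ is uniform enough along the approach to $\sL$ to turn $D(\mM_n,\sL)\to0$ into $\Psi(\mM_n,\mSigma_y)\to\mB^\ast$. Since any QP/LP solution $\hat{\rmC}_n$ is bounded and the scale factors are bounded below (by $\tfrac{1}{1+\emA^\ast_{j,j}}$ from \autoref{thm:layer2-general}) and above by $1$, one may restrict attention to a compact neighborhood of the relevant bounded portion of $\sL$, on which $\Psi(\cdot,\mSigma_y)$ is uniformly continuous, closing the gap. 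The remaining bookkeeping—that the perturbations $\hat{\mSigma}_y-\mSigma_y$ and of $\mM_n$ from its nearest exact left inverse enter $\Psi$ continuously—is routine given the nonsingularity of $\mSigma_z$.
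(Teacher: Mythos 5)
Your proof is correct and rests on the same pillars as the paper's: the closed form of the least-squares solution to \autoref{equ:B-lr}, the set convergence of \autoref{lem:QP-LP-consistency-layer-2}, the scale-factor convergence of \autoref{lem:k-est-consistency}, and a continuity argument pushing these through the estimator. The packaging, however, is genuinely different. The paper vectorizes $\mB$ via Kronecker products, writes the normal-equations solution explicitly, and compares the perturbed regressors $\hat{\rvz}^{(i)}$ entrywise to the \emph{true} hidden outputs $\rvz^{(i)}=(\mA^\ast\rvx^{(i)})^++\rvx^{(i)}$; because the regression on the true regressors is exactly noiseless, the finite-sample least squares with those regressors already returns $\mB^\ast$ exactly, and a first-order expansion $(\mP+\Delta\mP)^{-1}=\mP^{-1}-\mP^{-1}\Delta\mP\,\mP^{-1}+\mathcal{O}(\varepsilon^2)$ gives $\norm{\hat{\bm{\beta}}_{n_{\text{w}}}-\bm{\beta}^\ast}_{\rm F}=\mathcal{O}(\varepsilon)$ for each fixed $n_{\text{w}}$, with no appeal to a population moment. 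You instead factor the estimator through $\Psi(\mM,\mSigma)=\mSigma\mM^\top(\mM\mSigma\mM^\top)^{-1}$ and observe that $\Psi(\cdot,\mSigma_y)$ is \emph{constant} equal to $\mB^\ast$ on the set $\sL$ of exact left inverses, since everything enters through $\mM\mB^\ast=\mI_d$ and the invertibility of $\mSigma_z=\E[\rvz\rvz^\top]$; this is a cleaner structural explanation of why non-uniqueness of the limiting regressor when $m>d$ is harmless, and it replaces the explicit perturbation bookkeeping by continuity of $\Psi$ near $\sL\times\{\mSigma_y\}$. The price is the extra limit $\hat{\mSigma}_y\to\mSigma_y$ via the strong law, which the paper's finite-sample identity avoids. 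The one point you flag yourself --- controlling the modulus of continuity of $\Psi$ along the approach to the possibly unbounded set $\sL$ --- deserves a sharper resolution than ``restrict to a compact portion'' (boundedness of the QP/LP solution set is not obvious when $m>d$): the correct observation, implicit in the paper's bounds, is that the perturbation enters $\Psi$ only through $\mM\mB^\ast\approx\mI_d$, $\mSigma_y\mM^\top\approx\mB^\ast\mSigma_z$ and $\norm{\mM_n-\mM}_{\rm F}$, never through $\norm{\mM}_{\rm F}$ itself, so no compactness is needed. Your explicit verification that $\mSigma_z$ is nonsingular (via the positive-probability region where $\rvz=\rvx$) is a detail the paper leaves implicit and is a welcome addition.
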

\begin{proof}
Let $\bm{\bm{\beta}}$ denote $\mvec\mB$ (flattening $\mB$ into a vector), then $\mB \hat{\rvz}^{(i)} = \left(\hat{\rvz}^{(i)}\right)^\top \otimes {\mI_m}$. Here the operation $\otimes$ denotes the Kronecker product. Then we can define an equivalent optimization problem
\begin{equation}\label{equ:beta-lr}
\min_{\bm{\beta}}\frac1{2n_{\text{w}}}\sum_{i \in[n_{\text{w}}]}\norm{\rvy^{(i)} - \left[\left(\hat{\rvz}^{(i)}\right)^\top \otimes {\mI_m} \right] \bm{\beta}}^2\text{.}
\end{equation}

Taking the derivatives of $\bm{\beta}$, we obtain 
\begin{equation}
-2\sum_{i \in[n_{\text{w}}]}\left[\left[\left(\hat{\rvz}^{(i)}\right)^\top \otimes {\mI_m} \right]^\top \left(\rvy^{(i)} - \left[\left(\hat{\rvz}^{(i)}\right)^\top \otimes {\mI_m} \right] \bm{\beta}\right)\right] = 0.
\end{equation}
Then the optimal solution $\hat{\bm{\beta}}_{n_{\text{w}}}$ of this optimization can be written in closed form:
\begin{align*}
\hat{\bm{\beta}}_{n_{\text{w}}} &= \left[ \sum_{i \in[n_{\text{w}}]} \left[ \left( \hat{\rvz}^{(i)} \right)^\top \otimes {\mI_m} \right]^\top \left[\left(\hat{\rvz}^{(i)}\right)^\top \otimes {\mI_m} \right]\right]^{-1} \left(\sum_{i \in[n_{\text{w}}]} \left[\left(\hat{\rvz}^{(i)}\right)^\top \otimes {\mI_m} \right]^\top \rvy^{(i)}\right)\\
    &=  \left[\sum_{i \in[n_{\text{w}}]} \left[\hat{\rvz}^{(i)}\otimes {\mI_m} \right] \left[\left(\hat{\rvz}^{(i)}\right)^\top \otimes {\mI_m} \right]\right]^{-1}\left(\sum_{i \in[n_{\text{w}}]} \left[\hat{\rvz}^{(i)}\otimes {\mI_m} \right] \rvy^{(i)}\right)\\
    &= \left[ \sum_{i \in[n_{\text{w}}]} \left[\hat{\rvz}^{(i)} \left(\hat{\rvz}^{(i)}\right)^\top \right]\otimes {\mI_m} \right]^{-1}\left( \left[\sum_{i \in[n_{\text{w}}]} \hat{\rvz}^{(i)}\otimes {\mI_m} \right] \rvy^{(i)}\right)\\
    &=\left[  \left[ \sum_{i \in[n_{\text{w}}]} \hat{\rvz}^{(i)} \left(\hat{\rvz}^{(i)}\right)^\top \right]\otimes {\mI_m} \right]^{-1} \left( \left[\sum_{i \in[n_{\text{w}}]} \hat{\rvz}^{(i)}\otimes {\mI_m} \right] \rvy^{(i)}\right).
\end{align*}

We inherit the notation from the last two subsections. By \cref{lem:QP-LP-consistency-layer-2}, $d(\hat{\rmC}_n, \sS^\ast) \xrightarrow{\text{a.s.}} 0$ as $n\to\infty$. Thus $\forall\,\varepsilon>0$, $\exists\,N$ such that $\forall\,n \geq N$, $d(\hat{\rmC}_n, \sS^\ast) \leq \varepsilon$ w.p.~$1$, i.e.~$\exists\,\mC_n \in \sS^\ast$ s.t.~$d(\hat{\rmC}_n, \mC_n)\leq \varepsilon$ w.p.~$1$. Then by \cref{lem:k-est-consistency}, $\exists\,K > 0$, for $\varepsilon$ that is small enough, $\exists\,N_{\text{sf}}$ such that $\forall\,n_{\text{sf}} \geq N_{\text{sf}}$, we have $\abs{\rk_{n_{\text{sf}}}(\hat{\rmC}_n) - \rk_{n_{\text{sf}}}(\mC_n)} \leq K \varepsilon$ w.p.$1$. 
For simplicity, we omit the under-script $n_{\text{sf}}$ of $\rk_{n_{\text{sf}}}$ in the following discussion. In fact,
\begin{align}
    &\abs{\hat{\rvz}^{(i)}_j - \rvz ^{(i)}_j}  = \abs{\frac{1}{\rk(\hat{\rmC}_n)}\left(\ve^{(i)}\right)^\top \hat{\rmC}_n\rvy^{(i)} - \frac{1}{\rk(\mC_n)}\left(\ve^{(i)}\right)^\top \mC_n\rvy^{(i)}} \nonumber\\
    &= \abs{\frac{1}{\rk(\hat{\rmC}_n)\rk(\mC_n)}\left[\rk(\mC_n)\left(\ve^{(i)}\right)^\top \hat{\rmC}_n- \rk(\hat{\rmC}_n)\left(\ve^{(i)}\right)^\top \mC_n \right]\rvy^{(i)}} \nonumber \\
    &\leq \abs{\frac{1}{\rk(\hat{\rmC}_n)\rk(\mC_n)}} \abs{ \rk(\mC_n)\left(\ve^{(i)}\right)^\top \hat{\rmC}_n\rvy^{(i)} - \rk(\mC_n)\left(\ve^{(i)}\right)^\top \mC_n \rvy^{(i)}} \nonumber \\
    & \quad + \abs{\frac{1}{\rk(\hat{\rmC}_n)\rk(\mC_n)}} \abs{\rk(\mC_n)\left(\ve^{(i)}\right)^\top \mC_n \rvy^{(i)} - \rk(\hat{\rmC}_n)\left(\ve^{(i)}\right)^\top \mC_n \rvy^{(i)}} \nonumber\\
    &\leq \frac{\left(1+\mA^\ast_{j,j}\right)^2}{1 - K\varepsilon\left(1+\mA^\ast_{j,j}\right)} \left(\norm{\rvy^{(i)}}\norm{\hat{\rmC}_n - \mC_n}_{\rm F} + \abs{\rk(\hat{\rmC}_n) - \rk(\mC_n)}\norm{\mC_n\mB^\ast}_{\rm F}\left(\norm{\mA^\ast}_{\rm F} + 1\right)\norm{\rvx^{(i)}}\right) \nonumber\\
    &\leq  \frac{\left(1+\mA^\ast_{j,j}\right)^2}{1 - K\varepsilon\left(1+\mA^\ast_{j,j}\right)} \left(\norm{\rvy^{(i)}} + d K\left(\norm{\mA^\ast}_{\rm F}+1\right)\norm{\rvx^{(i)}}\right) \varepsilon. \label{z-perturbation}
\end{align}
Then,
\begin{align*}
    &\norm{ \hat{\rvz}^{(i)} \left(\hat{\rvz}^{(i)}\right)^\top -  \rvz^{(i)} \left(\rvz^{(i)}\right)^\top}_{\rm F} \\&= \norm{\hat{\rvz}^{(i)} \left(\hat{\rvz}^{(i)}\right)^\top - \rvz^{(i)} \left(\hat{\rvz}^{(i)}\right)^\top + \rvz^{(i)} \left(\hat{\rvz}^{(i)}\right)^\top -  \rvz^{(i)} \left(\rvz^{(i)}\right)^\top}_{\rm F} \\
    &\leq \norm{\left[\hat{\rvz}^{(i)} -\rvz^{(i)} \right]  \left(\hat{\rvz}^{(i)}\right)^\top}_{\rm F} + \norm{\rvz^{(i)} \left[\left(\hat{\rvz}^{(i)}\right)^\top - \left(\rvz^{(i)}\right)^\top \right]}_{\rm F} \\
    &\leq \norm{\left[\hat{\rvz}^{(i)} -\rvz^{(i)} \right]  \left[\hat{\rvz}^{(i)} -\rvz^{(i)} \right]^\top}_{\rm F} + 2\norm{\rvz^{(i)} \left[\left(\hat{\rvz}^{(i)}\right)^\top - \left(\rvz^{(i)}\right)^\top \right]}_{\rm F} \\
    &\leq \sum_{j\in[d]} \left(\hat{\rvz}^{(i)}_j - \rvz ^{(i)}_j\right)^2 + 2 \norm{\rvz^{(i)}}\norm{\hat{\rvz}^{(i)} - \rvz^{(i)}}.
\end{align*}
It follows that $\norm{ \hat{\rvz}^{(i)} \left(\hat{\rvz}^{(i)}\right)^\top -  \rvz^{(i)} \left(\rvz^{(i)}\right)^\top}_{\rm F}$ is also bounded by $\mathcal{O}(\varepsilon)$. With similar techniques we can prove
\begin{equation}
\norm{\sum_{i \in[n_{\text{w}}]} \hat{\rvz}^{(i)} \left(\hat{\rvz}^{(i)}\right)^\top - \sum_{i \in[n_{\text{w}}]} \rvz^{(i)} \left(\rvz^{(i)}\right)^\top}_{\rm F} \leq \mathcal{O}(\varepsilon),
\end{equation}
and
\begin{equation}
\norm{\hat{\rvz}^{(i)}\otimes {\mI_m} - \rvz^{(i)}\otimes {\mI_m}}_{\rm F} \leq \mathcal{O}(\varepsilon).
\end{equation}
Denote $\left[ \sum_{i \in[n_{\text{w}}]} \rvz^{(i)} \left(\rvz^{(i)}\right)^\top \right]$ as $\mP$ and $\sum_{i \in[n_{\text{w}}]} \rvz^{(i)}$ as $\mQ$. Substitute $\rvz^{(i)}$ with $\hat{\rvz}^{(i)}$ in the above expression we have $\hat{\mP}$ and $\hat{\mQ}$. Hence,
\begin{align*}
    \norm{\hat{\bm{\beta}}_{n_{\text{w}}} - \bm{\beta}^\ast}_{\rm F} &= \norm{\left[ \hat{\mP} \otimes {\mI_m} \right]^{-1} \left( \left[\hat{\mQ}\otimes {\mI_m} \right] \rvy^{(i)}\right) - \left[ \mP \otimes {\mI_m} \right]^{-1} \left( \left[\mQ\otimes {\mI_m} \right] \rvy^{(i)}\right)}_{\rm F}\\
    &\leq  \norm{\left[ \hat{\mP} \otimes {\mI_m} \right]^{-1} \left( \left[\hat{\mQ}\otimes {\mI_m} \right] \rvy^{(i)}\right) - \left[ \hat{\mP} \otimes {\mI_m} \right]^{-1} \left( \left[\mQ\otimes {\mI_m} \right] \rvy^{(i)}\right)}_{\rm F} \\
    &\quad + \norm{\left[ \hat{\mP} \otimes {\mI_m} \right]^{-1} \left( \left[\mQ \otimes {\mI_m} \right] \rvy^{(i)}\right) - \left[ \mP \otimes {\mI_m} \right]^{-1} \left( \left[\mQ\otimes {\mI_m} \right] \rvy^{(i)}\right)}_{\rm F}\\
    &= \norm{\left[ \hat{\mP} \otimes {\mI_m} \right]^{-1}}_{\rm F}\norm{ \left[\left(\hat{\mQ} - \mQ\right)\otimes {\mI_m} \right] \rvy^{(i)}}_{\rm F} + \norm{\left[ \hat{\mP} \otimes {\mI_m} \right]^{-1} - \left[ \mP \otimes {\mI_m} \right]^{-1}}_{\rm F} \\&\quad\norm{ \left[\mQ \otimes {\mI_m} \right] \rvy^{(i)}}_{\rm F}.
\end{align*}
In the first part, by the triangle inequality,
\begin{equation}
\norm{\left[ \hat{\mP} \otimes {\mI_m} \right]^{-1}}_{\rm F} \leq \norm{\left[ \hat{\mP} \otimes {\mI_m} \right]^{-1} -  \left[ \mP \otimes {\mI_m} \right]^{-1}}_{\rm F} + \norm{\left[ \mP \otimes {\mI_m} \right]^{-1}}_{\rm F}.
\end{equation}
So we only need to prove $\norm{\left[ \hat{\mP} \otimes {\mI_m} \right]^{-1} -  \left[ \mP \otimes {\mI_m} \right]^{-1}}_{\rm F} \leq \mathcal{O}(\varepsilon)$ to claim $\norm{\hat{\bm{\beta}}_{n_{\text{w}}} - \bm{\beta}^\ast}_{\rm F} \leq \mathcal{O}(\varepsilon)$. Denote $\hat{\mP} - \mP= \Delta\mP $. From \cref{z-perturbation}, we know every entry of $\Delta\mP \otimes {\mI_m}$ can be bounded by $\mathcal{O}(\varepsilon)$. By simple calculation we have 
\begin{equation}
(\mP + \Delta \mP)^{-1} = \mP^{-1} - \mP^{-1}\Delta\mP\mP^{-1} + \mathcal{O}(\varepsilon^2)\text{.}
\end{equation}
Then we have
\begin{equation}
\mP^{-1} - \hat{\mP}^{-1} = \mP^{-1}\Delta\mP\mP^{-1} + \mathcal{O}(\varepsilon^2) = \mathcal{O}(\varepsilon)\text{.}
\end{equation}
\end{proof}
\subsection{Layer 1}
In this subsection, we justify the strong consistency of layer 1 objective functional minimizer estimator in detail, i.e.~the layer 1 QP/LP solution space. We will omit the detailed proof of \cref{alg:preReLU-TLRN-QP-Layer1} line 4 to 7 strong consistency since it is similar to the proof of \cref{lem:k-est-consistency}. Additionally, we also omit the strong consistency of the $\vx\mapsto\left(\mA^\ast\vx\right)^+$ function estimator because it can be directly obtained by \cref{lem:QP-LP-consistency-layer-2,lem:k-est-consistency} and the continuous mapping theorem.

We use a new optimization problem equivalent to the optimization of $G_1$. Before that, we first define the equivalence between two optimization problems as follows.
\begin{definition}
Let \texttt{opt1} and \texttt{opt2} be two optimization problems, with $f_1$, $f_2$ as the respective objective functions. Then \texttt{opt1} and \texttt{opt2} are said to be equivalent if given a feasible solution to \texttt{opt1}, namely $\vx_1$, a feasible solution to \texttt{opt2} is uniquely corresponded, namely $\vx_2$, such that $f_1(\vx_1) = f_2(\vx_2)$, and vice versa.
\end{definition}
The new optimization problem and its equivalence to the optimization of $G_1$ is described in \cref{lem:consistency-equivalence}.
\begin{lemma}\label{lem:consistency-equivalence}
The optimization of $G_1$ (\cref{equ:obj-layer1})
is equivalent to
\begin{equation}\label{equ:layer2-LP-obj-qp-2-2}
\min_{\mA} f(\mA) = \frac12\E_\rvx \left[\norm{\left(\mA\rvx - \rvh\right)^+}^2 \right]\text{.}
\end{equation}
\end{lemma}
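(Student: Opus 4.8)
The plan is to recognize that the objective $f$ in \autoref{equ:layer2-LP-obj-qp-2-2} is exactly the \emph{partial minimization} of $G_1$ over the function variable $r$: I would show that $f(\mA) = \min_{r \in \sC^0_{\geq 0}} G_1(\mA, r)$ for every fixed $\mA \in \R^{d\times d}$. Once this identity is established, the claimed equivalence of the two optimization problems follows by profiling out $r$, so essentially all the work is in the partial minimization.

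First I would fix $\mA$ and minimize $G_1(\mA, r)$ pointwise in $\rvx$. Writing $\vv = \rvh - \mA\rvx$, the integrand is $\norm{r(\rvx) - \vv}^2$, and since $r$ ranges over the nonnegative function class this decouples coordinatewise into $\min_{s \geq 0}(s - \evv_i)^2$ for each component $i$ (abusing notation for the $i$-th entry of $\vv$). The elementary solution is $s = \evv_i^+$, with residual $\evv_i^2$ when $\evv_i < 0$ and $0$ otherwise; summing over coordinates gives minimal pointwise value $\norm{(-\vv)^+}^2 = \norm{(\mA\rvx - \rvh)^+}^2$, attained at $r^\ast(\vx) = (\rvh - \mA\rvx)^+$.

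The second step is to verify that this pointwise optimizer is an admissible element of the feasible class, i.e.~$r^\ast \in \sC^0_{\geq 0}$: it is nonnegative by the outer $(\cdot)^+$, and continuous as the composition $\vx \mapsto \bigl((\mA^\ast\vx)^+ - \mA\vx\bigr)^+$ of affine maps with the continuous ReLU. Because the pointwise optimum is realized by a single feasible function $r^\ast$, the pointwise minimization and the expectation may be interchanged: for every feasible $r$ one has $G_1(\mA, r) \geq \tfrac12\E_\rvx\bigl[\norm{(\mA\rvx - \rvh)^+}^2\bigr] = f(\mA)$, with equality at $r = r^\ast$. Hence $f(\mA) = \min_r G_1(\mA, r)$.

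To finish, I would spell out the value-preserving correspondence demanded by the definition of equivalence. The map $\mA \mapsto (\mA, r^\ast_\mA)$ sends each feasible point of \autoref{equ:layer2-LP-obj-qp-2-2} to a feasible point of \autoref{equ:obj-layer1} with identical objective value and is injective; conversely, profiling out $r$ sends $(\mA, r)$ to $\mA$, and the inequality $G_1(\mA, r) \geq f(\mA)$ (equality iff $r = r^\ast_\mA$ up to a null set) guarantees that the minimizers of one problem are exactly the minimizers of the other and that the optimal values agree. I expect the main obstacle to be precisely this last bookkeeping: the two problems do not have literally identical variable sets, so care is needed to match the definition of optimization equivalence — in particular to justify that the pointwise-optimal $r^\ast$ lies in $\sC^0_{\geq 0}$ (continuity) so that the interchange of pointwise minimization with the expectation is legitimate and the correspondence is genuinely value-preserving rather than merely value-dominating.
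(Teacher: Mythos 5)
Your proposal is correct and follows essentially the same route as the paper: the key construction in both is the feasible function $r^\ast(\vx) = (\vh - \mA\vx)^+$ together with the coordinatewise observation that any nonnegative $r$ can only increase the squared residual on coordinates where $\vh - \mA\vx < 0$. You package this as the partial-minimization identity $f(\mA) = \min_{r} G_1(\mA, r)$, whereas the paper argues via two inequalities between the optimal values of the two problems, but the mathematical content is the same.
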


\begin{proof}
To see this, suppose $\mA_1$ is one optimal solution to
\cref{equ:layer2-LP-obj-qp-2-2}, then we can construct $r_1(\vx) = (\mA_1\vx-\vh)^+$ so that $G_1\left(\mA_1, r_1\right) = f(\mA_1)$ and the optimality implies
\[
\min_{\mA,\,r}G_1\left(\mA, r\right) \leq \min_{\mA} f(\mA)\text{.}
\]
On the other hand, suppose $(\mA_2,\,r_2)$ is an optimum of $G_1$. Let $r_3(\vx) = (\vh - \mA_2\vx)^+$, then $\forall \vx \in \mathbb{R}^d$ and $\vh$ be the corresponding hidden output, if $[\vh - \mA_2\vx]_j \geq 0$, then $[r_3(\vx) + \mA_2\vx - \vh]_j = 0$, otherwise $[r_3(\vx) + \mA_2\vx - \vh]_j^2 = [\mA_2\vx - \vh]^2 \leq [h_2\left(\rvx\right) + \mA_2\vx - \vh]_j^2$ since $h_2$ is nonnegative. So we know that
\[
\min_{\mA,\,r}G_1(\mA, r) = G_1(\mA_2, r_2) =  G_1(\mA_2, r_3) = f(\mA_2).
\]
From the optimality, we further have
\[
\min_{\mA,\,r}G_1(\mA, r) \geq \min_{\mA}f(\mA).
\]
From the simple calculation above, we can see that one optimal solution to \cref{equ:layer2-LP-obj-qp-2-2} has a one-to-one correspondence to an optimal solution to $G_1$.
\end{proof}

Similarly, the empirical version of the two problems are equivalent, which indicates their consistency in the empirical estimation being equivalent. In the following, we justify the strong consistency of empirical \cref{equ:layer2-LP-obj-qp-2-2} instead of $G_1$
\begin{equation}\label{equ:layer2-QP-obj-3}
\min_{\mA}\hat{f}_n\left(\mA\right)=\frac{1}{2n}\sum_{i\in [n]}\norm{\left(\mA\vx^{(i)} - \vh^{(i)}\right)^+}^2\text{.}
\end{equation}

Denote $\sT^\ast \coloneqq \{\diag(\vk)\cdot\mA^\ast \mid 0 \leq\evk_j\leq 1, j\in[d]\}$ as the true optimal solution set, and $\hat{\mathbb{T}}_n$ as the optimal solution set corresponding to the $n$-sample problem. In the following, we justify four conditions in a row that hold for $f$ to derive the strong consistency of its optimal solution estimator.

\begin{lemma}\label{lem:wp1-in-compact-set}
Let $\hat{\sT}_{n^\prime}$ be the layer 1 QP/LP solution space by $n^\prime$ samples. Then there exists a compact set $\sC$ determined by $\mA^\ast$, namely $\sC(\mA^\ast)$, s.t.~$\hat{\sT}_{n^\prime}\subset\sC(\mA^\ast)$ w.p.~$1$ as $n\to\infty$.
\end{lemma}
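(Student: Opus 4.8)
The plan is to read off $\hat{\sT}_{n^\prime}$ as the feasible polyhedron of the layer~1 LP (\autoref{equ:layer1-LP-noiseless}), namely $\hat{\sT}_{n^\prime} = \{\mA \in \R^{d\times d} : \mA\vx^{(i)} \le \vh^{(i)},\ \forall\, i \in [n^\prime]\}$ with $\vh^{(i)} = (\mA^\ast\vx^{(i)})^+$, and to note that this set decomposes row by row: the $j$-th row $\mA_{j,:}$ is constrained only by the scalar inequalities $\mA_{j,:}\vx^{(i)} \le (\mA^\ast_{j,:}\vx^{(i)})^+$. It therefore suffices to trap each row in a fixed ball whose radius depends only on $\mA^\ast$, and to take $\sC(\mA^\ast)$ to be the product of these balls.

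First I would replace each constraint by a purely directional one using the crude estimate $(\mA^\ast_{j,:}\vx^{(i)})^+ \le \norm{\mA^\ast_{j,:}}\,\norm{\vx^{(i)}}$. Since a full-support continuous input has $\vx^{(i)} \neq \vzero$ a.s., dividing by $\norm{\vx^{(i)}} > 0$ gives $\mA_{j,:}\hat{\vx}^{(i)} \le \norm{\mA^\ast_{j,:}}$, where $\hat{\vx}^{(i)} = \vx^{(i)}/\norm{\vx^{(i)}}$ is the sample direction. Hence if the observed directions $\{\hat{\vx}^{(i)}\}$ form an $\epsilon$-net of the unit sphere, then for every unit $\vu$ we may select a near-aligned $\hat{\vx}^{(i)}$ and write $\mA_{j,:}\vu = \mA_{j,:}\hat{\vx}^{(i)} + \mA_{j,:}(\vu - \hat{\vx}^{(i)}) \le \norm{\mA^\ast_{j,:}} + \epsilon\,\norm{\mA_{j,:}}$; taking the supremum over $\vu$ yields $\norm{\mA_{j,:}} \le \norm{\mA^\ast_{j,:}}/(1-\epsilon)$. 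Fixing, say, $\epsilon = \tfrac12$, this defines the desired sample-independent compact set $\sC(\mA^\ast) = \{\mA : \norm{\mA_{j,:}} \le 2\norm{\mA^\ast_{j,:}},\ \forall\, j \in [d]\}$.

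It remains to show the sample directions a.s.\ eventually form such a net. The key step is that full support of $\rvx$ on $\R^d$ transfers to full support of the direction $\rvx/\norm{\rvx}$ on the sphere: the preimage of any spherical cap under $\vx \mapsto \vx/\norm{\vx}$ is a nonempty open cone, hence carries positive probability. Covering the compact sphere by finitely many caps $\sN_1,\dots,\sN_M$ of radius $\epsilon/2$, each has $P(\sN_m) > 0$; since the probability that a fixed cap is missed by the first $n^\prime$ samples is $(1 - P(\sN_m))^{n^\prime} \to 0$, each cap is a.s.\ hit at some finite time, and a finite union over $m$ gives an a.s.\ finite $N$ after which every cap has been sampled. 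Once all caps are hit the directions form an $\epsilon$-net, so the per-row bound above holds and $\hat{\sT}_{n^\prime} \subseteq \sC(\mA^\ast)$ w.p.\ $1$ as $n^\prime \to \infty$. This mirrors the neighborhood-covering argument already used in \autoref{lem:QP-LP-consistency-layer-2}.

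The main obstacle is making the bound genuinely depend only on $\mA^\ast$ rather than on the random samples; this is exactly what the directional reformulation buys us, by replacing the random right-hand sides $(\mA^\ast_{j,:}\vx^{(i)})^+$ with the deterministic bound $\norm{\mA^\ast_{j,:}}$ before invoking the net. The only measure-theoretic care required is the transfer of full support to the sphere and the uniform ``all caps eventually hit'' quantifier, both of which are routine given the positivity $P(\sN_m) > 0$.
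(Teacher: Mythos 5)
Your proof is correct, and it takes a genuinely different route from the paper's. The paper works row by row, splits the samples according to the sign of $\mA^\ast_{j,:}\vx^{(i)}$, writes the row-wise feasible set as a polyhedron, and rules out recession directions w.p.~$1$ via Farkas' lemma, conic hulls of the sampled points, the supporting hyperplane theorem, and Borel--Cantelli; it then sketches a second conic-hull argument to place all of these sample-dependent compact polyhedra inside a single compact set determined by $\mA^\ast$. You instead normalize each constraint to the directional bound $\mA_{j,:}\hat{\vx}^{(i)} \le \norm{\mA^\ast_{j,:}}$ and show that the sampled directions a.s.\ eventually form an $\epsilon$-net of the unit sphere, which immediately yields the explicit uniform bound $\norm{\mA_{j,:}} \le \norm{\mA^\ast_{j,:}}/(1-\epsilon)$. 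Your version is more elementary and, arguably, better targeted: it directly produces the \emph{fixed} compact set $\sC(\mA^\ast)$ that the lemma (and its downstream use in the uniform law of large numbers, \autoref{lem:uni-as-cvgc}) actually requires, with an explicit radius, whereas the paper first establishes compactness of each $\hat{\sT}_{n^\prime}^l$ and only then argues (partly by an omitted repetition of the conic-hull step) that they share a common compact container. The trade-off is that your covering uses $O(\epsilon^{-(d-1)})$ caps rather than the $O(d)$ witness points the paper's polyhedral argument needs, but this is immaterial for an almost-sure asymptotic statement; also note that monotonicity of the feasible sets ($\hat{\sT}_{n^\prime} \subseteq \hat{\sT}_N$ for $n^\prime \ge N$) makes the ``for all $n^\prime$ beyond the a.s.\ finite hitting time'' conclusion immediate once the net is formed.
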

\begin{proof}
$\forall l \in [d]$, let $\va^\ast$ be the $l$-th row of $\mA^\ast$, and $\hat{\va}_l$ be the $l$-th row of $\hat{\rmA}_{n^\prime}$. We'd like first to prove that the set
\begin{equation}
\hat{\sT}_{n^\prime}^l = \{\va_l: \va_l \text{ is the }l \text{-th row of }\mA \text{, where } \mA \in \hat{\mathbb{T}}_{n^\prime}\}
\end{equation}
is compact w.p.1. 

Suppose $n^\prime > d$, and among the $n^\prime$ samples, we classify them into two folds. To avoid ambiguity, let $\vu^{(i)}$ be the points such that $(\va^\ast)^\top \vu^{(i)} >0$, $i \in [q]$; and $\vv^{(j)}$ be the points such that $(\va^\ast)^\top \vv^{(j)} <0 $, $j \in [n-q]$. From the analysis of \cref{thm:layer1}, we have $(\va^\ast)^\top \vu^{(i)} \geq \hat{\va}^\top\vu^{(i)}$, $\forall i \in [q]$ and $\hat{\va}^\top\vv^{(j)}\leq 0$, $\forall j \in [n-q]$. It follows that we can rewrite $\hat{\sT}_{n^\prime}^l$ as a polyhedron
\begin{equation}
\hat{\sT}_{n^\prime}^l = \{\va \in \R^d: \va^\top u^{(i)}\leq (\va^\ast)^\top \vu^{(i)}, \va^\top \vv^{(j)}\leq 0\}.
\end{equation}
We are going to show the polyhedron $\hat{\sT}_{n^\prime}^l$ is bounded by contradiction. If it is not bounded, then $\exists \vd \in \R^d$, $\vd \neq \vzero$ and $\tilde{\va} \in \hat{\sT}_{n^\prime}^l$, such that $\forall \lambda >0$,  $\tilde{\va} + \lambda \vd \in \hat{\sT}_{n^\prime}^l$. Then
\begin{equation}
\left(\tilde{\va} + \lambda \vd\right)^\top \vu^{(i)} = \tilde{\va}^\top\vu^{(i)} + \lambda \vd^\top \vu^{(i)} \leq \left(\va^\ast\right)^\top \vu^{(i)} \iff \lambda \vd^\top \vu^{(i)} \leq \left(\va^\ast\right)^\top \vu^{(i)} - \tilde{\va}^\top\vu^{(i)}.
\end{equation}
Similarly,
\begin{equation}
\left(\tilde{\va} + \lambda \vd\right)^\top \vv^{(i)} = \tilde{\va}^\top \vv^{(j)} + \lambda \vd^\top \vv^{(j)}\leq 0.
\end{equation}
From the definition,  $\lambda$ can be arbitrarily big, then $\vd^\top \vu^{(i)} \leq 0$, $\forall i \in [q]$, and $\vd^\top \vv^{(j)} \leq 0$, $\forall j \in [n-q]$. 

Since we know $\text{span}\{\vu^{(i)}\} = \R^d$ w.p.~$1$, then there $\exists$ some $i^\ast$ such that $\vd^\top \vu^{(i^\ast)} < 0$, w.p.~$1$. (Otherwise if $\vd^\top \vu^{(i)} = 0$ for all $i \in [q]$, then either $\text{span}\{\vu^{(i)}\} \neq \R^d$ or $\vd = \vzero$.) Under our assumption that, $\hat{\sT}_{n^\prime}^l$ is not bounded, we know the following system (w.r.t $\vx$) has a feasible solution w.p.~$1$
\[
\left[
\begin{matrix}
-\mU\\-\mV
\end{matrix}\right] \vx \geq 0 \text{, } \vx^\top \vu^{(i^\ast)}<0, \tag{I}
\]
where every row of $\mU$ and $\mV$ is $\left(\vu^{(i)}\right)^\top$ and $\left(\vv^{(j)}\right)^\top$ respectively. By Farkas' Lemma \citep{farkas1902theorie}, the system
\[
[-\mU^\top, -\mV^\top]\cdot\vx = \vu^{(i^\ast)}  \text{, } \vx \geq 0 \tag{II} \label{sys-2}
\]
is not feasible (w.p.~$1$). We claim that  $\vu^{(i^\ast)}$ lies in the conic hull of $-\vv^{(j)}$'s w.p.~$1$. So that the second system actually has a feasible solution and thus it raises the contradiction.

Denote the conic hull as
\[
\sH = \left\{\vt\in \R^d: \vt = \sum_{j\in[n-q]} \lambda_j\left(-\vv^{(j)}\right) \text{, } \lambda_j \geq 0\text{ for }\forall j \in [n-q]\right\}.
\]
Now suppose $\vu^{(i^\ast)} \notin \sH$, by the supporting hyperplane theorem \citep{luenberger1997optimization}, $\exists \vb \in \R^d$, $\vb \neq \vzero$, such that $\vb^\top \vu^{(i^\ast)} \leq \vb^\top \vt$ for $\forall \vt \in \sH$. Then by definition,
\[
-\vv^{(j)} \in \{\vt: \vb^\top\vt\leq \vb^\top \vu^{(i^\ast)}\} \text{, for }\forall j \in [n-q].
\]
Denote the hyperplane $\sJ = \{\vt: \vb^\top\vt\leq \vb^\top \vu^{(i^\ast)}\}$, then
\[
P\left(\vu^{(i^\ast)} \notin \sH \right) \leq P\left(-\vv^{(j)} \in \sJ \text{, for }\forall j \in [n-q] \right) = P\left(-\vv^{(j)} \in \sJ \right)^{n-q}.
\]
Since we have in this case a geometric sequence, we know its infinite sum is bounded. By Borel-Cantelli lemma \citep{borel1909probabilites}, we conclude that $\vu^{(i^\ast)} \in \sH$ w.p.~$1$. Then system \ref{sys-2} is feasible w.p.~$1$. So that $\hat{\sT}_{n^\prime}^l$ is compact w.p.~$1$.

Now we prove that there exists a compact set $\sC(\mA^\ast)$,  s.t.~$\hat{\sT}_{n^\prime} \subset \sC(\mA^\ast)$ w.p.~$1$ as $n\to\infty$. Similarly, we focus on the analysis of one row. As discussed above, $\hat{\sT}_{n^\prime}^l$ is compact w.p.~$1$. Let $\hat{\sW}^1_{n^\prime}$ be the set of all $\vu^{(i)}$ sampled in estimating $\hat{\sT}_{n^\prime}$, and $\hat{\sW}^2_{n^\prime}$ be the set of all $\vv^{(j)}$ sampled. Now for another set $\hat{\sT}_{n^{\prime\prime}}$, similarly define sample point sets $\hat{\sW}^1_{n^{\prime\prime}}$ and $\hat{\sW}^2_{n^{\prime\prime}}$. We claim that $\forall \vu^{(i)} \in \hat{\sW}^1_{n^\prime}$, $\vu^{(i)}$ lies in the conic hull of $\hat{\sW}^1_{n^{\prime\prime}}$. Actually, this part of the proof is very similar to the way we prove $\vu^{(i^\ast)} \in \sH$ w.p.~$1$, so we will omit it here. 

$\forall \va \in \hat{\sT}_{n^{\prime\prime}}$, let $\vx^{(1)}$ and  $\vx^{(2)}$ be two different points in $\hat{\sW}^1_{n^{\prime\prime}}$. Then $\va^\top (\lambda_1\vx^{(1)} + \lambda_2\vx^{(2)}) \leq (\va^\ast)^\top (\lambda_1\vx^{(1)} + \lambda_2\vx^{(2)})$ for $\forall \lambda_1 \geq 0$ and $\lambda_2\geq 0$. This simple calculation reveals $\va^\top \vu^{(i)}\leq (\va^\ast)^\top \vu^{(i)}$ for $\forall \vu^{(i)} \in \hat{\sW}^1_{n^\prime}$ (from the claim we made). This implies that $\hat{\sT}_{n^{\prime\prime}} \subset \sC(\mA^\ast)$ w.p.~$1$ as $n\to\infty$, too.
\end{proof}

\begin{lemma}\label{lem:true-solution-condition}
The minimizer space of $f(\mA)$, i.e.~$\sT^\ast = \{\diag(\vk)\cdot\mA^\ast \mid 0 \leq\evk_j\leq 1, j\in[d]\}$, is contained in $\sC(\mA^\ast)$.
\end{lemma}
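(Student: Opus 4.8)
The plan is to prove the inclusion deterministically, by first showing that every minimizer in $\sT^\ast$ is feasible for the empirical constraint system that defines $\hat{\sT}_{n^\prime}$, and then chaining this with \autoref{lem:wp1-in-compact-set}. The point is that $\sT^\ast$ sits \emph{inside} every empirical feasible set, so its compactness-containment is inherited for free.

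First I would recall from \autoref{thm:layer1}, equivalently from the reformulation in \autoref{lem:layer1_nonzero}, that any $\mA \in \sT^\ast$, i.e.\ $\mA = \diag(\vk)\cdot\mA^\ast$ with $0 \leq \evk_j \leq 1$, satisfies the pointwise inequality $\left(\mA^\ast\vx\right)^+ - \mA\vx \geq 0$ for \emph{every} $\vx \in \R^d$. Working one row at a time, write $\va^\ast$ for the $l$-th row of $\mA^\ast$, so that the $l$-th row of such an $\mA$ is $\evk_l\va^\ast$ with $0 \leq \evk_l \leq 1$. Using the notation of the proof of \autoref{lem:wp1-in-compact-set}, for a positive sample $\vu^{(i)}$ (with $(\va^\ast)^\top\vu^{(i)} > 0$) one has $(\evk_l\va^\ast)^\top\vu^{(i)} = \evk_l(\va^\ast)^\top\vu^{(i)} \leq (\va^\ast)^\top\vu^{(i)}$, and for a negative sample $\vv^{(j)}$ (with $(\va^\ast)^\top\vv^{(j)} < 0$) one has $(\evk_l\va^\ast)^\top\vv^{(j)} = \evk_l(\va^\ast)^\top\vv^{(j)} \leq 0$. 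These are exactly the two inequalities defining the row polyhedron $\hat{\sT}_{n^\prime}^l$, so $\evk_l\va^\ast \in \hat{\sT}_{n^\prime}^l$ for every realization of the samples; assembling the rows gives $\sT^\ast \subseteq \hat{\sT}_{n^\prime}$ for every $n^\prime$ and every draw, with no probabilistic qualification. (Equivalently, and even more cheaply, the segment $\{\evk_l\va^\ast : 0 \leq \evk_l \leq 1\}$ joins $\vzero$ and $\va^\ast$, both of which are clearly in the convex polyhedron $\hat{\sT}_{n^\prime}^l$, so convexity already places the whole true-solution segment inside it.)

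Next I would invoke \autoref{lem:wp1-in-compact-set}, which supplies a compact set $\sC(\mA^\ast)$ depending only on $\mA^\ast$ with $\hat{\sT}_{n^\prime} \subseteq \sC(\mA^\ast)$ w.p.~$1$ as $n \to \infty$; chaining the two inclusions yields $\sT^\ast \subseteq \hat{\sT}_{n^\prime} \subseteq \sC(\mA^\ast)$ on an event of probability one. To close, I would note that $\sT^\ast$ and $\sC(\mA^\ast)$ are both nonrandom, so the proposition ``$\sT^\ast \subseteq \sC(\mA^\ast)$'' is a fixed true-or-false statement; since it holds on a probability-one (hence nonempty) event, it holds outright. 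The only step requiring care is the first one—verifying that the containment $\sT^\ast \subseteq \hat{\sT}_{n^\prime}$ is genuinely sample-independent—and this is immediate because the feasibility inequality holds pointwise on all of $\R^d$, hence a fortiori at any finite collection of sampled inputs; everything else is bookkeeping.
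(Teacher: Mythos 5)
Your proposal is correct and follows exactly the route the paper intends: the paper gives no written proof, stating only that the lemma is ``easily obtained by the formulation of $f$ and \autoref{lem:wp1-in-compact-set},'' and your argument simply fills in those details — the pointwise inequality $\left(\mA^\ast\vx\right)^+ - \mA\vx \geq 0$ makes every element of $\sT^\ast$ feasible for the empirical constraint system defining $\hat{\sT}_{n^\prime}$ (a fact the paper itself uses in the analogous layer-2 argument via $\sS^\ast \subset \hat{\sS}_n$), and chaining with $\hat{\sT}_{n^\prime} \subseteq \sC(\mA^\ast)$ gives the claim. The observation that the deterministic inclusion of two nonrandom sets cannot hold merely ``with probability one'' is a nice touch of rigor the paper omits.
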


\begin{lemma}\label{lem:finity-and-continuity}
$f(\mA)$ is finite valued and continuous on $\sC(\mA^\ast)$.
\end{lemma}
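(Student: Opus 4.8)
The plan is to control the integrand $g(\mA, \vx) \coloneqq \frac12\norm{(\mA\vx - \vh)^+}^2$ pointwise, where $\vh = (\mA^\ast\vx)^+$, so that $f(\mA) = \E_\rvx[g(\mA, \rvx)]$, and then to leverage two elementary properties of the coordinatewise ReLU map $(\cdot)^+$: it is \emph{magnitude-reducing}, i.e.~$\norm{(\vu)^+} \le \norm{\vu}$, and \emph{$1$-Lipschitz}, i.e.~$\norm{(\vu)^+ - (\vv)^+} \le \norm{\vu - \vv}$. The only analytic input beyond these is that $\rvx$ has a finite second moment $\E_\rvx\norm{\rvx}^2 < \infty$; this is not stated in the main body (the support is all of $\R^d$), so I would invoke the same Kolmogorov moment assumption already used earlier in this appendix. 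Flagging this is the main conceptual point of the proof — everything else is routine once compactness of $\sC(\mA^\ast)$ is in hand.

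For finiteness, I would first bound the integrand uniformly over the compact set. Using the magnitude-reducing property followed by the triangle inequality and submultiplicativity, $\norm{(\mA\vx - \vh)^+} \le \norm{\mA\vx - \vh} \le \norm{\mA}_{\rm F}\norm{\vx} + \norm{\mA^\ast}_{\rm F}\norm{\vx}$, since $\norm{\vh} = \norm{(\mA^\ast\vx)^+} \le \norm{\mA^\ast}_{\rm F}\norm{\vx}$. Because $\sC(\mA^\ast)$ is compact, $M \coloneqq \sup_{\mA \in \sC(\mA^\ast)} \norm{\mA}_{\rm F}$ is finite, so $g(\mA, \vx) \le \frac12 (M + \norm{\mA^\ast}_{\rm F})^2 \norm{\vx}^2$ for every $\mA \in \sC(\mA^\ast)$. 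Taking expectations and using the second-moment assumption gives $f(\mA) \le \frac12 (M + \norm{\mA^\ast}_{\rm F})^2 \, \E_\rvx\norm{\rvx}^2 < \infty$.

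For continuity, I would prove the stronger statement that $f$ is Lipschitz on $\sC(\mA^\ast)$. Fixing $\mA_1, \mA_2 \in \sC(\mA^\ast)$ and writing $\vu = (\mA_1\vx - \vh)^+$, $\vv = (\mA_2\vx - \vh)^+$, I would use $\abs{\frac12\norm{\vu}^2 - \frac12\norm{\vv}^2} \le \frac12 \norm{\vu - \vv}(\norm{\vu} + \norm{\vv})$. The $1$-Lipschitz property gives $\norm{\vu - \vv} \le \norm{(\mA_1 - \mA_2)\vx} \le \norm{\mA_1 - \mA_2}_{\rm F}\norm{\vx}$, while the bound from the finiteness step gives $\norm{\vu} + \norm{\vv} \le 2(M + \norm{\mA^\ast}_{\rm F})\norm{\vx}$. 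Combining, $\abs{g(\mA_1, \vx) - g(\mA_2, \vx)} \le (M + \norm{\mA^\ast}_{\rm F})\norm{\mA_1 - \mA_2}_{\rm F}\norm{\vx}^2$, and integrating yields $\abs{f(\mA_1) - f(\mA_2)} \le (M + \norm{\mA^\ast}_{\rm F})\bigl(\E_\rvx\norm{\rvx}^2\bigr)\norm{\mA_1 - \mA_2}_{\rm F}$, which is Lipschitz continuity and hence continuity on $\sC(\mA^\ast)$. (An alternative, if one prefers not to track constants, is to note that $g(\cdot, \vx)$ is continuous in $\mA$ for each fixed $\vx$ and dominated by the integrable envelope $\frac12(M + \norm{\mA^\ast}_{\rm F})^2\norm{\vx}^2$, then apply dominated convergence.) The main obstacle, as noted, is not the algebra but ensuring the dominating envelope is integrable — i.e.~justifying the finite-second-moment hypothesis on the input distribution; the non-smoothness of the ReLU is harmless because only its global Lipschitz and contraction properties are used.
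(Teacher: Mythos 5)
Your proof is correct and matches the paper's intent: the paper dismisses this lemma as ``easily obtained by the formulation of $f$'' together with the compactness from \autoref{lem:wp1-in-compact-set}, and the envelope bound you derive is essentially the same dominating-function estimate the paper writes out in the proof of \autoref{lem:uni-as-cvgc} (where it also relegates the assumption $\E\norm{\rvx}^2 < \infty$ to a footnote, exactly the hypothesis you correctly flag as the only nontrivial input). Your Lipschitz estimate for continuity is a slightly stronger and cleaner conclusion than the paper asks for, but it is the same route.
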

\cref{lem:true-solution-condition,lem:finity-and-continuity} are easily obtained by the formulation of $f$ (see \cref{equ:layer2-LP-obj-qp-2-2}) and \cref{lem:wp1-in-compact-set}.
\begin{lemma}[uniform a.s.~convergence]\label{lem:uni-as-cvgc}
$\hat{f}_n(\mA) \xrightarrow{\text{a.s.}} f(\mA)$ as $n\to\infty$, uniformly in $\mA\in\sC(\mA^\ast)$.
\end{lemma}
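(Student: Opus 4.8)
The statement is a uniform strong law of large numbers for the i.i.d.\ sample average of the single integrand $g(\mA;\vx) \coloneqq \tfrac12\norm{(\mA\vx - \vh)^+}^2$ over the \emph{deterministic} compact set $\sC(\mA^\ast)$, which is fixed by $\mA^\ast$ through \autoref{lem:wp1-in-compact-set}. Since the hidden output $\vh = (\mA^\ast\vx)^+$ is itself a fixed measurable function of $\vx$, both $f(\mA) = \E_\rvx[g(\mA;\rvx)]$ and $\hat f_n(\mA) = \tfrac1n\sum_{i\in[n]} g(\mA;\vx^{(i)})$ are built from $g$ alone. The plan is the classical route to a uniform SLLN: first a pointwise SLLN at each fixed $\mA$; then an integrable Lipschitz envelope that makes $\mA\mapsto g(\mA;\vx)$ quantitatively equicontinuous; and finally a finite cover of $\sC(\mA^\ast)$ gluing the two together.

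For the pointwise step and integrability, note that on $\sC(\mA^\ast)$ the norm $\norm{\mA}_{\rm F}$ is bounded by some $M$, and $\norm{\vh} = \norm{(\mA^\ast\vx)^+}\le \norm{\mA^\ast}_{\rm F}\norm{\vx}$, so that $g(\mA;\vx) \le \tfrac12\norm{\mA\vx - \vh}^2 \le \tfrac12\left(M + \norm{\mA^\ast}_{\rm F}\right)^2\norm{\vx}^2 \eqqcolon C\norm{\vx}^2$. Under the standing moment assumption $\E[\norm{\rvx}^2]<\infty$ (the Kolmogorov condition already invoked in this appendix), $G(\vx)\coloneqq C\norm{\vx}^2$ is an integrable envelope, whence $f(\mA)$ is finite and, by Kolmogorov's SLLN, $\hat f_n(\mA)\xrightarrow{\text{a.s.}} f(\mA)$ for each fixed $\mA$.

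For the modulus of continuity, I would use that the coordinatewise map $t\mapsto t^+$ is $1$-Lipschitz together with $\norm{(\mA\vx-\vh)^+}+\norm{(\mA'\vx-\vh)^+}\le 2(M+\norm{\mA^\ast}_{\rm F})\norm{\vx}$; applying $|a^2-b^2| = |a-b|\,|a+b|$ gives $|g(\mA;\vx) - g(\mA';\vx)| \le (M+\norm{\mA^\ast}_{\rm F})\norm{\vx}\cdot\norm{(\mA-\mA')\vx} \le L(\vx)\,\norm{\mA-\mA'}_{\rm F}$ with $L(\vx) = (M+\norm{\mA^\ast}_{\rm F})\norm{\vx}^2$, again integrable. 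Now fix $\delta>0$ and cover the compact $\sC(\mA^\ast)$ by finitely many balls of radius $\delta$ with centers $\mA^{(1)},\dots,\mA^{(K)}$. For an arbitrary $\mA$ with nearest center $\mA^{(k)}$, splitting through $\mA^{(k)}$ yields $|\hat f_n(\mA) - f(\mA)| \le \delta\cdot\tfrac1n\sum_{i\in[n]}L(\vx^{(i)}) + \max_{k\in[K]}|\hat f_n(\mA^{(k)}) - f(\mA^{(k)})| + \delta\cdot\E[L(\rvx)]$. As $n\to\infty$ the middle term tends to $0$ a.s.\ (a finite maximum of a.s.-convergent quantities), and $\tfrac1n\sum_i L(\vx^{(i)})\to\E[L(\rvx)]$ a.s.\ by the SLLN, so $\limsup_n\sup_{\mA\in\sC(\mA^\ast)}|\hat f_n(\mA) - f(\mA)| \le 2\delta\,\E[L(\rvx)]$ a.s.; letting $\delta\downarrow 0$ along a countable sequence gives uniform a.s.\ convergence.

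The main obstacle is entirely the domination/moment step: the whole argument rests on an integrable envelope for $g$ and for its Lipschitz modulus $L$, which needs the finite second moment of the input and the compactness supplied by \autoref{lem:wp1-in-compact-set}; once these are secured the cover argument is routine. A secondary point worth stating explicitly is that $\sC(\mA^\ast)$ is deterministic, so the supremum is taken over a fixed domain and no joint randomness between the parameter set and the samples needs to be disentangled here, in contrast with the earlier set-convergence lemmas.
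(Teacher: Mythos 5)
Your proposal is correct and rests on exactly the same three ingredients the paper uses---compactness of $\sC(\mA^\ast)$ from \autoref{lem:wp1-in-compact-set}, continuity of the single-sample objective in $\mA$, and an integrable envelope of the form $\mathrm{const}\cdot\norm{\vx}^2$ under the standing assumption $\E[\norm{\rvx}^2]<\infty$---but it packages them differently. The paper verifies these as the hypotheses of the uniform law of large numbers and then cites that theorem (Jennrich, 1969) as a black box; you instead re-derive the uniform statement from scratch via the classical chaining argument: a pointwise SLLN at each center, an integrable Lipschitz modulus $L(\vx)=(M+\norm{\mA^\ast}_{\rm F})\norm{\vx}^2$ obtained from the $1$-Lipschitzness of $t\mapsto t^+$ and the factorization $\abs{a^2-b^2}=\abs{a-b}\,\abs{a+b}$, and a finite $\delta$-cover of the compact set with the three-term splitting and $\delta\downarrow 0$ along a countable sequence. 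Both routes are sound; yours is more self-contained and makes explicit the quantitative equicontinuity that the cited theorem only requires implicitly (the paper gets away with mere continuity plus domination because Jennrich's hypotheses do not demand a Lipschitz bound), at the cost of a longer argument. Your closing observation that $\sC(\mA^\ast)$ is deterministic, so the supremum is over a fixed domain, is a point the paper leaves implicit and is worth stating. One cosmetic note: the paper's own envelope bound drops a square in the intermediate step ($\norm{\mA\vx}^2+\norm{\mA^\ast\vx}^2\leq(\norm{\mA}_{\rm F}+\norm{\mA^\ast}_{\rm F})\norm{\vx}^2$ should have the first factor squared); your version $\tfrac12\left(M+\norm{\mA^\ast}_{\rm F}\right)^2\norm{\vx}^2$ is the correct one.
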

\begin{proof}
Name single-sample objective $g(\vx, \mA) = \frac12\norm{\left(\mA\vx - \vh\right)^+}^2$. The uniform a.s.~convergence is guaranteed by the uniform law of large numbers \citep{jennrich1969asymptotic}:
\begin{enumerate}
\item By \cref{lem:wp1-in-compact-set}, $\sC(\mA^\ast)$ is a compact set.
\item $g$ is continuous w.r.t.~$\mA$ by its formulation and measurable over $\vx$ at each $\mA\in\sC(\mA^\ast)$.
\item In fact,
\begin{align*}
g(\vx, \mA) &= \frac12\norm{\left(\mA\vx - \vh\right)^+}^2\\
&\leq \norm{\mA\vx}^2 + \norm{\mA^\ast\vx}^2\\
&\leq \left(\norm{\mA}_{\rm F} + \norm{\mA^\ast}_{\rm F}\right)\norm{\vx}^2\text{.}
\end{align*}
Since $\mA\in\sC(\mA^\ast)$ is in a compact set,
\begin{equation}
g(\vx, \mA) \leq \left[\sup_{\mA\in\sC(\mA^\ast)}\norm{\mA}_{\rm F} + \norm{\mA^\ast}_{\rm F}\right]\norm{\vx}^2\text{.}
\end{equation}
Thus the dominating function exists.\footnote{Here, we assume $\E[\norm{\rvx}^2] < +\infty$ as is commonly done in empirical analysis.}
\end{enumerate}
\end{proof}
By \cref{lem:wp1-in-compact-set,lem:true-solution-condition,lem:finity-and-continuity,lem:uni-as-cvgc}, all of the conditions are satisfied in \citep[Thm.~5.3]{shapiro2014lectures}. Thus, we have the strong consistency of layer 1 objective optima estimator as described in \cref{lem:QP-LP-layer-1-strong-consistency}.
\begin{lemma}[QP/LP strong consistency, layer 1]\label{lem:QP-LP-layer-1-strong-consistency}
$D_{\rm H}(\hat{\sT}_{n^\prime}, \sT^\ast) \xrightarrow{\text{a.s.}} 0$ as $n\to\infty$.
\end{lemma}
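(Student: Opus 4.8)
The plan is to recognize this as a standard sample-average-approximation consistency statement and to discharge it by invoking the consistency theorem for optimal-solution sets of stochastic programs \citep[Thm.~5.3]{shapiro2014lectures}, whose hypotheses are exactly the content of the four preceding lemmas. By \autoref{lem:consistency-equivalence} the optimization of $G_1$ is equivalent to minimizing $f(\mA)=\frac12\E_\rvx[\norm{(\mA\rvx-\rvh)^+}^2]$, with empirical counterpart $\hat{f}_n$ as in \autoref{equ:layer2-QP-obj-3}; under this reformulation $\sT^\ast$ and $\hat{\sT}_{n^\prime}$ are precisely the optimal-solution sets of the true and empirical programs. First I would restrict attention to the compact set $\sC(\mA^\ast)$ supplied by \autoref{lem:wp1-in-compact-set}: on it, $\hat{\sT}_{n^\prime}\subset\sC(\mA^\ast)$ w.p.~$1$ for $n$ large (\autoref{lem:wp1-in-compact-set}), $\sT^\ast\subset\sC(\mA^\ast)$ (\autoref{lem:true-solution-condition}), $f$ is finite-valued and continuous (\autoref{lem:finity-and-continuity}), and $\hat{f}_n\to f$ uniformly a.s.\ (\autoref{lem:uni-as-cvgc}). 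These are verbatim the hypotheses of the cited theorem, so it delivers the one-sided deviation $D(\hat{\sT}_{n^\prime},\sT^\ast)\xrightarrow{\text{a.s.}}0$.

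The cited theorem only controls the deviation of the empirical minimizers from the true ones, so to upgrade to the full Hausdorff distance I would separately argue that the reverse deviation $D(\sT^\ast,\hat{\sT}_{n^\prime})$ is identically zero w.p.~$1$. This is immediate from the polyhedral description derived in \autoref{lem:wp1-in-compact-set}: any $\mA=\diag(\vk)\mA^\ast$ with $0\le\evk_j\le1$ satisfies every sampled constraint, since $(\va^\ast)^\top\vu^{(i)}>0$ forces $\evk_j(\va^\ast)^\top\vu^{(i)}\le(\va^\ast)^\top\vu^{(i)}$ while $(\va^\ast)^\top\vv^{(j)}<0$ forces $\evk_j(\va^\ast)^\top\vv^{(j)}\le0$. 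Hence $\sT^\ast\subseteq\hat{\sT}_{n^\prime}$ w.p.~$1$, so for every $\mA\in\sT^\ast$ we have $\inf_{\hat{\mA}\in\hat{\sT}_{n^\prime}}\norm{\mA-\hat{\mA}}_{\rm F}=0$, whence $D(\sT^\ast,\hat{\sT}_{n^\prime})=0$. Combining the two directions gives $D_{\rm H}(\hat{\sT}_{n^\prime},\sT^\ast)=D(\hat{\sT}_{n^\prime},\sT^\ast)\xrightarrow{\text{a.s.}}0$, which is the claim.

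The main obstacle is not in this final assembly but upstream, in securing the uniform compact containment of \autoref{lem:wp1-in-compact-set} --- the Farkas'/supporting-hyperplane argument that no row of an empirical minimizer can escape to infinity as samples accumulate. Granting that, the only delicate points here are bookkeeping: checking that $\hat{\sT}_{n^\prime}$ is nonempty w.p.~$1$ (which follows at once from $\sT^\ast\neq\emptyset$ together with $\sT^\ast\subseteq\hat{\sT}_{n^\prime}$, so the cited theorem is applicable), and intersecting the several almost-sure events coming from \Autoref{lem:wp1-in-compact-set, lem:uni-as-cvgc} into a single probability-one event on which all conclusions hold simultaneously.
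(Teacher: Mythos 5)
Your proposal matches the paper's own proof: the paper likewise discharges the lemma by checking that \Autoref{lem:wp1-in-compact-set, lem:true-solution-condition, lem:finity-and-continuity, lem:uni-as-cvgc} are exactly the hypotheses of \citep[Thm.~5.3]{shapiro2014lectures} and then invoking that theorem. Your additional observation that the reverse deviation vanishes because $\sT^\ast\subseteq\hat{\sT}_{n^\prime}$ (every true minimizer attains empirical objective value zero) is correct and in fact makes explicit a step the paper leaves implicit, mirroring the analogous containment $\sS^\ast\subset\hat{\sS}_n$ it notes in the layer~2 argument.
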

Similar to \cref{lem:k-est-consistency}, we have the strong consistency of the layer 1 scale factor estimator as described in \cref{lem:k-est-consistency-2}.
\begin{lemma}[scale factor estimator strong consistency, layer 1]\label{lem:k-est-consistency-2}
Let $\rk_{n^{\prime}_{\text{sf}}}(\hat{\rmC}_n)$ be the $n^{\prime}_{\text{sf}}$-sample estimator of $k_j$ via LR: \cref{alg:preReLU-TLRN-QP-Layer1} line 5. given that $\ervh_j^{(i)} > 0$. Define sets
\begin{equation}
\sV_{n^{\prime}_{\text{sf}},n} \coloneqq \{\rk_{n^{\prime}_{\text{sf}}}(\mA): \mA \in \hat{\sT}_{n^\prime}\} \text{, and } \sV^\ast \coloneqq \left[0, 1\right]\text{.}
\end{equation}
Then $\lim\limits_{n^{\prime}_{\text{sf}}\to\infty}\lim\limits_{n^{\prime}\to\infty}D_{\rm H}(\sV_{n^{\prime}_{\text{sf}},n^\prime}, \sV^\ast) \aseq 0$.
\begin{remark}
In case $\evk_j = 0$, suppose the algorithm finds a solution over a continuous distribution with $\left[0, 1\right]$ as support and the probability that it finds a solution with scale factor $0$ is $0$.
\end{remark}
\end{lemma}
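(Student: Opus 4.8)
The plan is to transcribe the proof of \autoref{lem:k-est-consistency} into the layer 1 setting, substituting the layer 2 objects by their layer 1 counterparts: the random solution space $\hat{\sS}_n$ is replaced by $\hat{\sT}_{n^\prime}$, the limiting optimal set $\sS^\ast$ by $\sT^\ast$, and the conditioning event $\ervx_j<0$ by $\ervh_j>0$. First I would record the closed form of the layer 1 scale-factor regression. By \autoref{thm:rescale-layer1}, conditioned on $\ervh_j>0$ the model $\mA_{j,:}\rvx=\evk_j\ervh_j$ is exactly linear, so the estimator in \autoref{alg:preReLU-TLRN-QP-Layer1} line 5 is
\begin{equation*}
\rk_{n^{\prime}_{\text{sf}}}(\mA)=\frac{\sum_{i:\,\ervh_j^{(i)}>0}\ervh_j^{(i)}\,\mA_{j,:}\rvx^{(i)}}{\sum_{i:\,\ervh_j^{(i)}>0}\left(\ervh_j^{(i)}\right)^2}\text{.}
\end{equation*}
A simplification relative to layer 2 is that the denominator involves only the observed hidden outputs $\ervh_j^{(i)}$ and is independent of the matrix argument, so $\rk_{n^{\prime}_{\text{sf}}}(\cdot)$ is an affine function of the row $\mA_{j,:}$. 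A useful consequence is that for $\mA\in\sT^\ast$ the recovery is exact and sample-free: since $\mA^\ast_{j,:}\rvx=\ervh_j$ whenever $\ervh_j>0$, one gets $\rk_{n^{\prime}_{\text{sf}}}(\mA)=\evk_j$ identically.

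Next I would establish the inclusion $\sV^\ast\subset\sV_{n^{\prime}_{\text{sf}},n^\prime}$. By \autoref{thm:layer1} every $\mA\in\sT^\ast$ satisfies $\mA_{j,:}=\evk_j\mA^\ast_{j,:}$ with $\evk_j\in[0,1]$, and the ``iff'' in that theorem gives $\sV^\ast=\{\rk_{n^{\prime}_{\text{sf}}}(\mA):\mA\in\sT^\ast\}$ for every fixed $n^{\prime}_{\text{sf}}$. Because the pointwise-everywhere constraint defining $\sT^\ast$ is stronger than the finite-sample constraint defining $\hat{\sT}_{n^\prime}$, we always have $\sT^\ast\subset\hat{\sT}_{n^\prime}$, hence $\sV^\ast\subset\sV_{n^{\prime}_{\text{sf}},n^\prime}$ and $D(\sV^\ast,\sV_{n^{\prime}_{\text{sf}},n^\prime})=0$. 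It therefore remains only to control $D(\sV_{n^{\prime}_{\text{sf}},n^\prime},\sV^\ast)$.

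For that, exactly as in \autoref{lem:k-est-consistency}, I would bound for any $\shat{\mA}\in\hat{\sT}_{n^\prime}$ and $\mA\in\sT^\ast$
\begin{equation*}
\abs{\rk_{n^{\prime}_{\text{sf}}}(\shat{\mA})-\rk_{n^{\prime}_{\text{sf}}}(\mA)}\leq\frac{\sum_{i:\,\ervh_j^{(i)}>0}\ervh_j^{(i)}\norm{\rvx^{(i)}}}{\sum_{i:\,\ervh_j^{(i)}>0}\left(\ervh_j^{(i)}\right)^2}\norm{\shat{\mA}-\mA}_{\rm F}\text{,}
\end{equation*}
using Cauchy--Schwarz on the difference of numerators. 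Taking $\sup_{\shat{\mA}}\inf_{\mA}$ yields $D(\sV_{n^{\prime}_{\text{sf}},n^\prime},\sV^\ast)\leq C_{n^{\prime}_{\text{sf}}}\,D(\hat{\sT}_{n^\prime},\sT^\ast)$, where the prefactor $C_{n^{\prime}_{\text{sf}}}$ is the displayed ratio of sample averages. For fixed $n^{\prime}_{\text{sf}}$ this prefactor is a finite constant (its denominator is positive a.s.~once some sample has $\ervh_j^{(i)}>0$), so letting $n^\prime\to\infty$ and invoking $D(\hat{\sT}_{n^\prime},\sT^\ast)\xrightarrow{\text{a.s.}}0$ (\autoref{lem:QP-LP-layer-1-strong-consistency}) drives the inner limit to $0$ a.s.; the outer limit $n^{\prime}_{\text{sf}}\to\infty$ of the identically zero inner limit is then $0$, giving the stated iterated limit. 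One may also sanity-check that by the strong law of large numbers $C_{n^{\prime}_{\text{sf}}}$ converges to the finite constant $\E[\ervh_j\norm{\rvx}\mid\ervh_j>0]/\E[\ervh_j^2\mid\ervh_j>0]$.

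The main obstacle, and the sole genuine departure from the layer 2 argument, is the boundary value $\evk_j=0$. Unlike layer 2, where the scale factors are confined to $[\tfrac{1}{1+\emA^\ast_{j,j}},1]$ and bounded away from zero, here $0\in\sV^\ast$, corresponding to the degenerate minimizer $\mA_{j,:}=\vzero$. This causes no problem for the Hausdorff convergence above, but it threatens the subsequent rescaling $\shat{\mA}_{j,:}\gets\shat{\mA}_{j,:}/\shat{k}_j$ in \autoref{alg:preReLU-TLRN-QP-Layer1}. The remark handles precisely this: modeling the solver as returning a point drawn from a continuous distribution on the solution polytope makes the event $\shat{k}_j=0$ a probability-zero event, so the rescale is a.s.~well-defined. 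I would make this continuity-of-the-selected-solution hypothesis explicit and note that it affects only the well-definedness of the final rescaling, not the set-convergence claim proved above.
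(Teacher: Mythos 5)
Your proposal is correct and matches the paper's intent: the paper omits this proof entirely, stating only that it is ``similar to'' the layer 2 scale-factor consistency argument (\autoref{lem:k-est-consistency}), and your transcription is exactly that argument with the right substitutions, including the observation that the denominator of the layer 1 regression no longer depends on the matrix argument. Your explicit treatment of the $\evk_j=0$ boundary case via the remark is the one genuine point of departure from layer 2, and you handle it correctly.
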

With \cref{thm:rescale-layer1} and the continuous mapping theorem, the strong consistency of layer 1 estimation is guaranteed.
\begin{theorem}[strong consistency, layer 1]\label{thm:layer-1-strong-consistency}
Suppose $\hat{\rmA}_{n^\prime}$ is scaled by $\hat{\rvk}_{n^\prime_{\text{sf}}}$. Then $\hat{\rmA}_{n^\prime} \xrightarrow{\text{a.s.}} \mA^\ast$ as $n^\prime, n^{\prime}_{\text{sf}}\to\infty$.
\end{theorem}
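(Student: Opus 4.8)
The plan is to mirror the structure of the layer~2 argument in \autoref{thm:layer-2-strong-consistency}, combining the set-convergence of the QP/LP feasible region with the consistency of the per-row scale-factor estimator, and then invoking the continuous mapping theorem to propagate both through the rescaling map $(\mA,\vk)\mapsto\diag^{-1}(\vk)\mA$. Concretely, \autoref{lem:QP-LP-layer-1-strong-consistency} gives $D_{\rm H}(\hat{\sT}_{n^\prime},\sT^\ast)\xrightarrow{\text{a.s.}}0$, so since the algorithm returns some $\hat{\rmA}_{n^\prime}\in\hat{\sT}_{n^\prime}$ and $\sT^\ast\subseteq\hat{\sT}_{n^\prime}$, the deviation term forces $d(\hat{\rmA}_{n^\prime},\sT^\ast)\xrightarrow{\text{a.s.}}0$; hence a.s.\ there is a point $\mA_{n^\prime}=\diag(\vk^{(n^\prime)})\mA^\ast\in\sT^\ast$ with $\norm{\hat{\rmA}_{n^\prime}-\mA_{n^\prime}}_{\rm F}\le\varepsilon$ for all large $n^\prime$. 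Reading this row by row, the $j$-th row of $\hat{\rmA}_{n^\prime}$ lies within $\varepsilon$ of $\evk^{(n^\prime)}_j\mA^\ast_{j,:}$.

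Next I would control the scale-factor estimate. By \autoref{thm:rescale-layer1}, for any minimizer the ratio $(\mA_{j,:}\rvx)^+/(\mA^\ast_{j,:}\rvx)^+$ equals the true scale factor on the event $\ervh_j>0$, so the conditional linear model $\hat{\rmA}_{n^\prime,j,:}\rvx=\evk_j\ervh_j$ (given $\ervh_j>0$) is unbiased and LR recovers the scaling; \autoref{lem:k-est-consistency-2} together with a Lipschitz-in-$\varepsilon$ bound analogous to \autoref{equ:k-est-bound} then yields $\abs{\hat{\evk}_j-\evk^{(n^\prime)}_j}\le K\varepsilon$ as $n^\prime_{\text{sf}}\to\infty$, where $K$ is the limiting strong-law ratio and $\evk^{(n^\prime)}_j$ is the (exact) scale factor of the true-set point $\mA_{n^\prime}$. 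Combining the two estimates, the rescaled row obeys
\[
\norm{\frac{\hat{\rmA}_{n^\prime,j,:}}{\hat{\evk}_j}-\mA^\ast_{j,:}}
\le\frac{1}{\abs{\hat{\evk}_j}}\norm{\hat{\rmA}_{n^\prime,j,:}-\evk^{(n^\prime)}_j\mA^\ast_{j,:}}
+\norm{\mA^\ast_{j,:}}\abs{\frac{\evk^{(n^\prime)}_j}{\hat{\evk}_j}-1}
=\mathcal{O}(\varepsilon),
\]
the $\mathcal{O}(\varepsilon)$ holding provided $\evk^{(n^\prime)}_j$ is bounded away from zero so that $1/\hat{\evk}_j$ stays bounded. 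Staging the nested limits $n^\prime\to\infty$ then $n^\prime_{\text{sf}}\to\infty$ and letting $\varepsilon\downarrow0$ gives $\hat{\rmA}_{n^\prime,j,:}/\hat{\evk}_j\xrightarrow{\text{a.s.}}\mA^\ast_{j,:}$ for every $j$, i.e.\ $\hat{\rmA}_{n^\prime}\xrightarrow{\text{a.s.}}\mA^\ast$.

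The main obstacle I anticipate is the division by $\hat{\evk}_j$ when the true scale factor is near $0$: the rescaling map is continuous only away from $\evk_j=0$, and a vanishing $\evk^{(n^\prime)}_j$ would make $1/\hat{\evk}_j$ blow up and destroy the Lipschitz bound above. This is exactly the degenerate case flagged in the remark accompanying \autoref{lem:k-est-consistency-2}, which I would dispose of by noting that the selected minimizer's scale factor is drawn from a continuous law on $[0,1]$, so $\evk^{(n^\prime)}_j=0$ occurs with probability zero and a.s.\ $\evk^{(n^\prime)}_j$ is bounded below, keeping $1/\hat{\evk}_j$ bounded for large $n^\prime_{\text{sf}}$. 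The remaining care is the bookkeeping of two independent sample budgets, since the feasible-set convergence is in $n^\prime$ while the scale-factor LR convergence is in $n^\prime_{\text{sf}}$, forcing the iterated-limit staging (first pin $\hat{\rmA}_{n^\prime}$ near $\sT^\ast$, then pin $\hat{\evk}_j$), exactly as in the layer~2 proof. With these two points handled, a final appeal to the continuous mapping theorem \citep{mann1943stochastic} upgrades the joint a.s.\ convergence of $(\hat{\rmA}_{n^\prime},\hat{\rvk}_{n^\prime_{\text{sf}}})$ to the a.s.\ convergence of their continuous image $\diag^{-1}(\hat{\rvk}_{n^\prime_{\text{sf}}})\hat{\rmA}_{n^\prime}$ to $\mA^\ast$.
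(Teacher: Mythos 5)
Your proposal is correct and follows essentially the same route as the paper: the paper obtains this theorem by combining \autoref{lem:QP-LP-layer-1-strong-consistency}, \autoref{lem:k-est-consistency-2}, \autoref{thm:rescale-layer1}, and the continuous mapping theorem, explicitly omitting the perturbation bookkeeping on the grounds that it parallels the layer~2 argument of \autoref{thm:layer-2-strong-consistency}. You in fact supply more detail than the paper does (the explicit row-wise decomposition, the Lipschitz bound in $\varepsilon$, the treatment of $\evk_j$ near zero via the remark, and the iterated-limit staging), all of which faithfully mirrors the layer~2 proof the paper points to.
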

By \cref{thm:layer-2-strong-consistency,thm:layer-1-strong-consistency}, \cref{thm:strong-consistency-formal} is guaranteed by the continuous mapping theorem.

\section{Sample Complexity Results}
\label{appendix:sample}
\label{app:convergence-rate}

\newcommand{\empP}{\overline{P}}
\newcommand{\sgnfunc}{\mathrm{sgn}}

In this appendix, we begin with an intuition that justifies how our core QP/LP approaches eliminate the exponential sample efficiency compared to the vanilla LR approach. In layer 2 learning, the optimization of $\mC$ is done to find a feasible point in the space determined by $n$ inequalities each of which corresponds to a sample. Taking the $j$-th row of the inequality, i.e.~$\mC_{j, :}\vy \geq \evx_j$. Every time we get a new sample $\vx^{(i)}$, $\vy^{(i)}$ in,
\begin{itemize}
    \item The inequality $\mC_{j, :}\vy^{(i)} \geq \evx_j^{(i)}$ eliminates the solution space of $\mC_{j, :}$ by one of the spaces divided by plane $\mC_{j, :}\vy^{(i)} = \evx_j^{(i)}$. This property guarantees fast convergence speed at early phase since $\mC_{j, :}$'s feasibility starts from $\R^d$.
    \item In fact, when ReLU does not activate $\mA^\ast\rvx^{(i)}$ at the $j$-th row, i.e.~$\mA^\ast_{j,:} \rvx^{(i)} \leq 0$, the theoretical solution of $\mC_{j, :}$ that $\mC_{j, :}\mB^\ast = (0,\dots,0,1,0,\dots,0)$ (a $1$ at index $j$) directly lies on the separating plane $\mC_{j, :}\vy^{(i)} = \evx_j^{(i)}$:
    $$\mC_{j, :}\vy^{(i)} = \mC_{j, :}\mB^\ast\left[\left(\mA^\ast\vx^{(i)}\right)^+ + \vx^{(i)}\right] = \evx_j^{(i)}\text{.}$$
    This property remarkably speeds up further constraints on the solution space of $\mC_{j, :}$ to the correct estimate and is with high probability, since it directly depends on the sign of $\mA^\ast\rvx^{(i)}$.
\end{itemize}
With the vanilla LR approach mentioned in \cref{sec:lr} all the dimensions need to have the correct sign at the same time. This is the reason for the exponential complexity of this approach. Our main algorithm only requires one dimension to get the correct sign for a single sample, which avoids the exponential sample size expectation. The convergence speed is determined by the actual probability of each dimension not getting activated by ReLU, and such probability is determined by specified input distribution.

\paragraph{Roadmap}
Our sample complexity results follow several intermediate results. We first demonstrate that the constraints that define our optimization objective give a bounded polytope (\cref{thm:unboundedness}). This happens with a high probability, given a set of samples. Once we have shown that, we then show that the optimization problem constraints we use define a polytope with a small diameter with a high probability given a sufficient number of samples, where the true network parameters exist. The boundnesses of the constraint polytope is required for this result.

\paragraph{Details} We follow the above intuitive explanation with a more rigorous analysis that describes the sample complexity of recovering $\mC_{j, :}$ for $j \in [d]$.

Let us assume we have $n$ samples of $\vy^{(i)}$, $i \in [n]$. We denote by $\mF \in \mathbb{R}^{n \times d}$  such that $\emF_{ij} = - y^{(i)}_j$. We assume $n > d$.
We denote by $L_j = \E[\rvx_j^2]$.

\begin{condition}[Haar condition on samples]\label{cond:haar}
$\mF$ satisfies the Haar condition, i.e. that every submatrix of size $d \times d$ is invertible, almost surely.
\end{condition}

We will assume that \cref{cond:haar} holds for the rest of the discussion. For a discussion of the Haar condition, see \cite{schmidt1977probability}.

Fix $j \in [d]$. The first step in our process is to show that with high likelihood, the inequalities $\mC_{j, :}\vy^{(i)} \geq \evx_j^{(i)}$ for $i \in [n]$, or equivalently $\mC_{j, :}(-\vy^{(i)}) \leq (-\evx_j^{(i)})$, or equivalently $\mF \mC_{j,:} \leq - \evx_j^{(:)}$ define a bounded polytope with high probability.

\cite{schmidt1977probability} describe a necessary and sufficient condition for this set of inequalities, which describe an intersection of half-spaces, to be bounded. More specifically, under \cref{cond:haar}, such an intersection is unbounded if and only if the subspace spanned by the columns of $\mF$ intersected with the negative quadrant of $\mathbb{R}^n$ is non-empty.

This means that this halfspace intersection is unbounded if and only if there exists $\alpha \in \mathbb{R}^d$, $\alpha \neq 0$, such that $\mF_{i,:} \alpha \leq 0$ for $i \in [n]$.

For a given $\alpha \in \mathbb{R}^d$, $\alpha \neq 0$, let $h_{\alpha}(\vy) = \sgnfunc(\sum_{i=1}^d \alpha_i \vy_i)$ where $\sgnfunc(z)$ for $z \in \mathbb{R}$ is $0$ if $z \le 0$ and $1$ otherwise.

Define:

\begin{align}
    \empP(\alpha) = \E[h_{\alpha}(\vy)], \\
    \empP_n(\alpha) = \displaystyle\frac{1}{n} \sum_{i=1}^n h_{\alpha}(\vy^{(i)}).
\end{align}

A well-known result of VC-theory \citep{vapnik1999nature}, specialized to linear separators, states that:

\begin{theorem}[\citealt{vapnik1999nature}]\label{thm:vc}

For any $t > \sqrt{2 / n}$, it holds that:

\begin{equation}
P\left(\sup_{\alpha \in \mathbb{R}^d} | \empP_n(\alpha) - \empP(\alpha) | \ge t\right) \le 4\left(\frac{2en}{d+1}\right)^{(d+1)} \exp(-nt^2/8).
\end{equation}

\end{theorem}

We continue with the assuming the following separability condition:

\begin{condition}[separability of $\rvy$]
There exists $a > 0$ such that for any $\alpha \in \mathbb{R}^d$, $\alpha \neq 0$, $\empP(\alpha) > a$.
\label{cond:sep}
\end{condition}

\begin{theorem}[unboundedness of halfspace intersection]\label{thm:unboundedness}

Under \cref{cond:sep} and \cref{cond:haar}, the
halfspace intersection defined by $\mF$ for identifying $\mC_{j,:}$ is bounded with probability

\begin{equation}
P\left(\mF \text{ defines unbounded intersection}\right) \le
    4\left(\frac{2en}{d+1}\right)^{(d+1)} \exp(-na^2/8)
\end{equation}

\noindent if $n > \max \{ 2/a^2, d \}$.

\end{theorem}

\begin{proof}
For any $\alpha \in \mathbb{R}^d$, $\alpha \neq 0$,
it holds that if $\empP_m(\alpha) = 0$ then
for all $i \in [n]$, $\sgnfunc(\sum_{j=1}^d \alpha_j \vy^{(i)}_j) = 0$. Each such case means that for this specific $\alpha$, we found a span of the columns of $\mF$ such that its $i$th coordinate is $0$, and as such, a case in which $\mF$ is unbounded.

This means that if for all $\alpha \in \mathbb{R}^d$, it holds that $\empP_n(\alpha) > 0$, then the $n$ samples define a bounded halfspace intersection.

In addition, if $\empP_n(\alpha) = 0$, then

\begin{equation}
    \empP_n(\alpha) - \empP(\alpha) = -\empP(\alpha),
\label{eqAAA}
\end{equation}

\noindent therefore,

\begin{equation}
    |\empP_n(\alpha) - \empP(\alpha)| = \empP(\alpha).
\end{equation}

Since $a < \empP(\alpha)$, it holds that if \cref{eqAAA} holds then:

\begin{equation}
    |\empP_n(\alpha) - \empP(\alpha)| > a.
\end{equation}

Therefore,

\begin{align}
P\left(\mF\text{ defines unbounded intersection}\right) & \le P\left(\exists \alpha \neq 0, \, \empP_n(\alpha) = 0\right)\\
& \le P\left(\sup_{\alpha} |P_n(\alpha) - P(\alpha)|  > a \right) \\
& \le 4\left(\frac{2en}{d+1}\right)^{(d+1)} \exp(-na^2/8),
\end{align}

\noindent where the last inequality is the result of \cref{thm:vc}.

\end{proof}

\cref{thm:unboundedness} gives a well-behaving sample complexity for the $n$ samples to define halfspace intersection that is bounded, when solving for $\mC_{j,:}$.
Note that the theorem does not depend on $j$, because the use of $\rvx_j$ is not necessary for the boundedness result of \cite{schmidt1977probability}.

We now further show that \emph{when} the halfspace intersection is bounded, then the diameter $r$ of the bounded space is smaller (where diameter is defined as the maximal distance between any two points).

The diameter is bounded by the maximal distance between all pairs of vertices of the intersection (the intersection is a bounded polytope).
Let $\sK \subseteq \{v \mid \mF v \le -\rvx_j^{(:)} \}$ be the set of these vertices.  The size $|\sK|$ is bounded from above by $O(n^{d/2})$ (by McMullen's Upper Bound theorem; \citealt{toth2017handbook}).

\begin{condition}[equality saturation for vertices]
For every $v \in \sK$, there are $d$ equalities that
are satisfied, in the form of $\mF_{I,:} v = -x_j^{(I)}$,
such that $I \subset [n]$ and $|I| = d$.
\label{cond:saturation}
\end{condition}

\begin{condition}[bounded input distribution]
For every $j \in [d]$, $| \vx_j | \le b$ for some $b > 0$.
\label{cond:xbound}
\end{condition}

\begin{lemma}[Hoeffding's inequality]\label{lem:hoeffding}

Let $Z_1, \ldots Z_n$ be $n$ random variables such that $Z_i \in [a_i, b_i]$ almost surely. Then, for all $t > 0$:

\begin{equation}
P\left(\sum_{i=1}^n Z_i - \sum_{i=1}^n \E[Z_i] \ge t\right) \le \exp\left(-\displaystyle\frac{2t^2}{\sum_{i=1}^n (b_i - a_i)^2}\right).
\end{equation}

\end{lemma}

\begin{lemma}\label{lem:diameter}

Assume \cref{cond:haar}, \cref{cond:saturation} and \cref{cond:xbound} are satisfied. 

In addition, let $\sigma^{\ast} = \E[\sigma]$ where $\sigma$ is the smallest singular value of $\mF_{I,:}$ for $I = \{ 1, \ldots, d\}$. Let $\varepsilon > \displaystyle\frac{\sqrt{d b^2}}{\sigma^{\ast} \nu}$,
where $0 \le \nu = \displaystyle\frac{\log(\delta / 4)}{\sigma^{\ast}}+1 \le 1$ for any $1 > \delta > 0$.

Define:

\begin{equation}
 L\left(\varepsilon, d, \sigma^{\ast}, b, L_j, \delta\right) =\displaystyle{\sqrt[d]{\frac{\delta}{4D^2}}} \exp\left(\displaystyle\frac{2((\sigma^{\ast})^2 \nu^2 \varepsilon^2 - dL_j)^2}{d^2 b^2}\right) .
\end{equation}

For any set of samples $(\rvx^{(i)}, \rvy^{(i)})$, $i \in [n]$ such that the halfspace intersection is bounded, it holds that with probability at least $1-\delta > 0$ the diameter of the halfspace intersection is smaller than $\varepsilon$ if the following inequality is satisfied:

\begin{equation}
n \ge L(\varepsilon, d, \sigma^{\ast}, b, \delta) \ge d.
\end{equation}
\end{lemma}

\begin{proof}
The diameter of the intersection is not larger than the maximal distance between two pairs of vertices $||v-u||$, $u,v \in \sK$. By \cref{cond:saturation}, let $I, J$ be the two subsets of integers such that $|I|=|J|=d$ and

\begin{align}
\mF_{I,:} u = - \vx_j^{(I)}, \\
\mF_{J,:} v = - \vx_j^{(J)}.
\end{align}

By the Haar assumption on $\mF$, the inverse of $\mF$ exists, and it holds that:

\begin{equation}
    || u - v || = ||\mF_{I,:}^{-1} \vx_j^{(I)} - \mF_{J,:}^{-1} \vx_j^{(J)} || \le ||\mF_{I,:}^{-1} ||^{\ast} \cdot || \vx_j^{(I)}|| + ||\mF_{J,:}^{-1} ||^{\ast} \cdot || \vx_j^{(J)}|| .
\end{equation}

Consider $||\mF_{I,:}^{-1} ||^{\ast}$, the operator norm of $\mF_{I,:}^{-1}$ which equals $1/\sigma_I$ where $\sigma_I$ is the smallest singular value of $\mF_{I,:}$. Similarly, we define $\sigma_J$.
Therefore:

\begin{equation}
    || u - v || \le  || \vx_j^{(I)}|| / \sigma_I +  || \vx_j^{(J)}|| / \sigma_J.
\end{equation}

The event that the diameter $r$ is larger than $\varepsilon$ can be bounded by ($A(I,J)$ is the event $\sigma_I \ge c, \sigma_J \ge c$ for some $c$):

\begin{equation}
P\left(r \ge \varepsilon\right) \le p\left(r \ge \varepsilon , A(I,J)\right) + P\left(\sigma_I \le c \wedge \sigma_J \le c\right).
\end{equation}

In addition, 

\begin{align}
 P\left(r \ge \varepsilon, A(I,J)\right) & \le
 P\left(\max_{u,v \in \sK} || u - v|| \ge \varepsilon , A(I,J)\right) \\
 & \le P\left(\max_{u,v} || \vx_j^{(I)}|| / c + || \vx_j^{(J)}|| / c \ge \varepsilon\right) \\
 & \le \sum_{u,v \in \sK} P\left(|| \vx_j^{(I)}|| / c + || \vx_j^{(J)}|| / c  \ge \varepsilon\right),
\end{align}

We know that $|\sK| \le D(n^{d/2})$ for some constant $D$ \citep{toth2017handbook}, therefore $P\left(r \ge \varepsilon, A(I,J)\right)$ can be further bounded by

\begin{equation}
D^2 n^{d} \cdot P\left(|| \vx_j^{(I)}|| / c + || \vx_j^{(J)}|| / c \ge \varepsilon\right).
\end{equation}

Since the distribution of $\vx_j^{(I)}$ and $\vx_j^{(J)}$ (and the singular values) is identical, $P\left(r \ge \varepsilon, A(I,J)\right)$ can be further bounded by

\begin{equation}
2 D^2 n^{d} \cdot P\left(|| \vx_j^{(I)}|| / c \ge \varepsilon\right),
\label{eq:ab}
\end{equation}

\noindent for any $I$, for example $I = \{ 1, \ldots, d\}$.

We can use any value for $c = \E[\sigma_I]\cdot\nu$ to be smaller than $\sigma_I$, and the inequality still holds. Therefore, \cref{eq:ab} can be further bounded by (assuming $\nu < 1$):

\begin{equation}
2 D^2 n^{d} \cdot \left( P\left(|| \vx_j^{(I)}|| / \E[\sigma_I]\nu \ge \varepsilon\right) \right).
\end{equation}

This can be further bounded by:

\begin{equation}
2 D^2 n^{d} \cdot \left( P\left(|| \vx_j^{(I)}||^2 \ge \E[\sigma_I]^2\nu^2 \varepsilon^2\right) \right),
\label{eq:CCC}
\end{equation}

\noindent where we also square the norm of $\vx_j^{(I)}$ (and its bounding term).

The term $P\left(\sigma_I \le \E[\sigma_I]\cdot\nu \wedge \sigma_J \le \E[\sigma_J]\cdot\nu\right)$ for small $\nu$ is going to be
close to $0$, more specifically, it measures the probability of the complement of the event $A(I,J)$.
This probability is at most twice the probability that $\exp(-\sigma_I) \ge \exp(-\nu \cdot \E[\sigma_I])$, which by Markov's inequality
is smaller than $\E\left[\exp(-\sigma_I)\right]/\exp(-\nu \cdot \E[\sigma_I]) \le \exp(\E\left[(\nu-1) \cdot \sigma_I\right]) = \exp((\nu-1)\sigma^{\ast})$ (by Jensen's inequality). 
Therefore, the probability of the complement of $A(I,J)$  is bounded by:

\begin{equation}
P\left(\sigma_I \le \nu\sigma^{\ast} \wedge \sigma_J \le \sigma^{\ast}\right) \le 2\exp((\nu-1)\cdot\sigma^{\ast}).\label{eq:aijcomp}
\end{equation}

Consider that $|| \vx_j^{(I)}||^2$ is the sum of $d$ i.i.d. samples $\vx_j^2$.
Under \cref{cond:xbound}, we assume $|\vx_j^2| \le b$. Therefore, by Hoeffding's inequality, we can get an upper bound on the probability term in \cref{eq:CCC} to hold:

\begin{align}
P\left(|| \vx_j^{(I)}||^2 - d\cdot\E[\vx_j^2] \ge  (\sigma^{\ast})^2\nu^2 \varepsilon^2 - d\cdot\E[\vx_j^2]\right) \le  \exp\left(\displaystyle\frac{-2((\sigma^{\ast})^2 \nu^2 \varepsilon^2 - d\cdot\E[\vx_j^2])^2}{d b^2}\right),
\label{eq:vxbound}
\end{align}

\noindent remembering it is stated in the theorem that $\sigma^{\ast} \nu \varepsilon \ge \sqrt{d b^2}$ and the fact that $\sqrt{d b^2} \ge \sqrt{d\cdot\E[\vx_j^2]}$ by \cref{cond:xbound}. Hence, $\E[\sigma_I]^2\nu^2 \varepsilon^2 - d\cdot\E[\vx_j^2] > 0$, and the use of Hoeffding's inequality is allowed.

Let $\delta > 0$, then taking \cref{eq:vxbound} with the union bound constant from $\sK$ and setting it to $\delta/2$ we get:

\begin{equation}
    2 D^2 n^{d} \exp\left(\displaystyle\frac{-2((\sigma^{\ast})^2 \nu^2 \varepsilon^2 - dL_j)^2}{d b^2}\right) \le \delta/2.
\end{equation}

We can now get an upper bound on $n$:

\begin{equation}
    n \le \displaystyle{\sqrt[d]{\frac{\delta}{4D^2}}} \exp\left(\displaystyle\frac{2((\sigma^{\ast})^2 \nu^2 \varepsilon^2 - dL_j)^2}{d^2 b^2}\right) = L(\varepsilon, d, \sigma^{\ast}, b, L_j, \delta).
    \label{eq:first-n}
\end{equation}

We want, in addition, the complement of $A(I,J)$ to have probability at most $\delta/2$, so therefore, we choose $\nu = \displaystyle\frac{\log(\delta / 4)}{\sigma^{\ast}}+1$ according to \cref{eq:aijcomp}.

Noting that the diameter of the sphere gets smaller as $n$ increases (because there are more halfspaces that possibly intersect into a smaller space).

\end{proof}

Note that \cref{lem:diameter} requires $\delta \ge \exp(-\sigma^{\ast})$ in its statement (this can be inferred from the relationship between $\nu$ and $\delta$). To have an arbitrary confidence $1-\delta$ in the diameter being smaller than $\varepsilon$, we can run the algorithm $k$ times with an $n$ as needed. The probability of all of them having diameter larger than $\varepsilon$ is smaller than $\delta^k$.

\cref{lem:diameter} describes under which condition we get a small error for $\mC_{j,:}$ if we fix $j$. Using a union bound, we bound the probability that for any $j \in [d]$, the corresponding feasible halfspace intersection space is small. This will require adding a factor of $d$ to $\delta$, and a factor of $d$ to $n$. We overcome this by making sure that the probability for each $j$ having a diameter larger than $\varepsilon$ is smaller than $\delta / d$, and we use $dn$ samples for a choice of $n$ satisfied by \cref{lem:diameter}. Using a union bound, we can show that the probability that for \emph{any} $j$ the diameter is larger than $\varepsilon$ is smaller than $\delta$.

In addition, note that \cref{lem:diameter} works only when we assume the halfspace intersection is bounded. This is the case where we can use the diameter bound through the set of vertices. To take this into account, we also use \cref{thm:unboundedness}.
This leads to the following result regarding the sample complexity of our algorithm.

\begin{theorem}[sample complexity of learning ReLU two-layered networks (layer 2)]\label{thm:sample}

Assume \cref{cond:haar}, \cref{cond:sep}, \cref{cond:saturation} and \cref{cond:xbound} are satisfied. 

In addition, let $\sigma^{\ast} = \E[\sigma]$ where $\sigma$ is the smallest singular value of $\mF_{I,:}$ for $I = \{ 1, \ldots, d\}$. Let $\varepsilon > \displaystyle\frac{\sqrt{d b^2}}{\sigma^{\ast} \nu}$,
where $0 \le \nu = \displaystyle\frac{\log(\delta / 8d)}{\sigma^{\ast}}+1 \le 1$ for any $1 > \displaystyle\frac{\delta}{2d} > 0$.

Define:

\begin{equation}
L^{\ast}(\varepsilon, d, \sigma^{\ast}, b, L_j, \delta) = \min_j \displaystyle{\sqrt[d]{\frac{\delta}{8dD^2}}} \exp\left(\displaystyle\frac{2((\sigma^{\ast})^2 \nu^2 \varepsilon^2 - d L_j)^2}{d^2 b^2}\right) .
\end{equation}

\noindent and define:

\begin{equation}
L_u(a, \delta, d) = \displaystyle\frac{(d+1)\log6 + \log 4 - \log\displaystyle\frac{\delta}{2}}{a^2/8 - 1/e}.
\end{equation}

For any set of samples $(\rvx^{(i)}, \rvy^{(i)})$, $i \in [n]$, it holds that with probability at least $1-\delta > 0$ the diameter of the halfspace intersection is smaller than $\varepsilon$ if the following inequalities hold:

\begin{align}
n & \ge d \cdot L^{\ast}(\varepsilon, d, \sigma^{\ast}, b, \delta), \\
L^{\ast}(\varepsilon, d, \sigma^{\ast}, b, \delta) & \ge \max \left\{ d, \displaystyle L_u(a,\delta,d), 2/a^2 \right\}.
\end{align}

\end{theorem}

\begin{proof}
Let $U$ be the event that the diameter is larger than $\varepsilon$ for any $j \in [d]$. Let $V$ be the event that $\mF$ represents bounded space. The event $U$ is the one we need to show has small probability.

By \cref{thm:unboundedness}, we know that if
$n \ge L_u\left(a,\delta,d\right)$ then
$\mF$ is unbounded with probability smaller than $\delta/2$. Therefore, $P\left(V\right) \ge 1 - \delta/2$.

From \cref{lem:diameter} and the analysis before this theorem statement, we know that for $n$ as required by this theorem statement, any time $\mF$ represents an bounded half-space intersection, the probability of the diameter this intersection (for any $j$) being larger than $\epsilon$ is smaller than $\delta / 2$. Therefore, $P\left(U \mid V\right) \le \delta/2$.

Hence,
\begin{align}
P\left(U\right) = P\left(U \mid V\right)p\left(V\right) + P\left(U \mid \neg V\right)\left(1-P\left(V\right)\right) \le P\left(U \mid V\right) + \left(1-P\left(V\right)\right) \le \delta/2 + \delta/2 = \delta.
\end{align}

\end{proof}

\begin{table}[ht]
    \centering
    \scalebox{0.93}{
    \begin{tabular*}{1.08\textwidth}{ll|ccccccccccc}
$s$ & Alg. & $p=$0 & 0.1 & 0.2 & 0.3 & 0.4 & 0.5 & 0.6 & 0.7 & 0.8 & 0.9 & 1.0 \\
\hline
\multirow{3}{*}{{$0$}} & SGD & 0.4353  &  0.4135 &   0.3844    & 0.3511    & 0.3364    & 0.3255    & 0.3310    & 0.3480 &    0.3792 &    0.4158  &  0.4417  \\ 
& LP & 0.0923   &  0.0991  &   5.2210 &  18.3364 & $\infty$ & $\infty$ & $\infty$ & $\infty$ & $\infty$ &  $\infty$ & 55.4519 \\
& QP &    0.0492  &   0.0583 &   0.5445 &  10.0328   &     $\infty$ & $\infty$ &  $\infty$ & $\infty$ & $\infty$ & $\infty$ &     0.4129 \\

\hline
\multirow{2}{*}{{$2$}} &SGD & 0.4252 & 0.4058 & 0.3777 & 0.3510 & 0.3315 & 0.3216 & 0.3293 & 0.3447 & 0.3704 & 0.4060 & 0.4234 \\

& QP & 0.0729 & 0.0819 & 0.5072 & 7.8984 & 9.8862 & $\infty$ & $\infty$ & $\infty$ & $\infty$ & 5.8306 & 0.4297 \\

\hline
\multirow{2}{*}{{$4$}} &SGD & 0.4152 & 0.3962 & 0.3711 & 0.3423 & 0.3286 & 0.3213 & 0.3228 & 0.3382 & 0.3683 & 0.3944 & 0.4118 \\

& QP &0.0926 & 0.1070 & 0.4480 & 5.6844 & $\infty$ & $\infty$ &$\infty$ & $\infty$ & $\infty$ & 12.0391 &$\infty$\\

\hline
\multirow{2}{*}{{$8$}} & SGD & 0.3941 & 0.3823 & 0.3637 & 0.3410 & 0.3289 & 0.3248 & 0.3256 & 0.3428 & 0.3578 & 0.3743 & 0.3874 \\

& QP & 0.1221 & 0.1527 & 0.2835 & 1.3792 & 8.4902 & 7.2903 & 7.4253 & $\infty$ & 5.9887 & 8.2727 & 0.5928 \\

\hline
\multirow{2}{*}{$16$} & SGD & 0.3820 & 0.3829 & 0.3827 & 0.3812 & 0.3825 & 0.3825 & 0.3827 & 0.3815 & 0.3831 & 0.3818 & 0.3827  \\
& QP & 0.1810 & 0.1841 & 0.1853 & 0.1823 & 0.1833 & 0.1845 & 0.1823 & 0.1823 & 0.1825 & 0.1840 & 0.1803 \\

    \end{tabular*}}
    \caption{\textbf{Prediction errors (RMSE)} as $p$ (the fraction of nonnegative entries in $\rmA$) increases ($n=512$, $d=16$). Each set of rows describes an experiment with a different number of scale vector rows in $\rmA$ ($s$).}
    \label{tab:nonnegative}
\end{table}

\section{Nonnegativity of \texorpdfstring{$\rmA$}{Layer 1 Weights} and the Scale Transformation Condition}
\label{app:nonnegative}

Throughout the paper, we assume that the first layer, the matrix $\rmA$, has nonnegative entries. As we mention before in \cref{section:conclusion} and \cref{section:benchmark}, such restrictions on neural network weights have successfully been studied before and shown to be effective in practice.

While the theory of our algorithm does require $\rmA$ to be nonnegative, we use this condition in a specific part of the proof of \cref{thm:layer2-general} in \cref{subsec:layer-2-obj-proof}. This theorem is only necessary to prove that the objective functional for layer 2 achieves the correct estimate of the network. It is not necessary for proofs for layer 1 or for the proofs of the transition from the layer objectives into the QP or LP formulation. In that sense, we can aim to alleviate this nonnegativity condition quite in a modular way when addressing this issue. For example, we could parametrize the number of negative entries in $\rmA$ and make more nuanced arguments (albeit more complex) in the proofs in \cref{subsec:layer-2-obj-proof}.

This condition does not prevent us from running the algorithm as-is with negative entries in $\rmA$. To test our algorithm sensitivity to this assumption, we repeated the experiment in \cref{section:synthetic}, with $d=16$ and $n=512$, and checked what error the algorithm gives as a function of $p \in \{ k/10 \mid k \in \{0, \ldots, 10 \} \}$ -- a fraction of random entries that are set to be negative in $\rmA$. To test the effect of \cref{cond:layer-2-unique} (which is alleviated in \cref{app:layer2-generalization}), we also vary the number of rows that are a scaling vector (through parameter $s \in \{ 0, 2, 4, 8, 16 \}$, which denotes the number of such rows in $\rmA$). We repeated each experiment five times with different parameters (and each experiment includes multiple executions with different samples). We experimented both with the LP and the QP objectives for $s=0$ and the QP objective for $s>0$, and compared them against the SGD algorithm. 

The results are given in \cref{tab:nonnegative}. When the assumption about nonnegative entries holds, both the QP algorithm and the LP algorithm  (for $s=0$) give an error much lower than SGD. This holds even for a small $p > 0$. However, as $p$ increases, the results of LP and QP degrade quickly and then recover (for the QP) when $p=1.0$. This latter result could be due to the symmetry of our input distribution. we also note that in general, the QP algorithm, as expected, is more robust to $p > 0$ than the LP algorithm. As we increase the number of rows in $\rmA$ that are scale vectors,
we see that our algorithm becomes much more robust to the number of negative entries in the matrix $\rmA$ (with respect to $\infty$ entries). However, having scale vectors in $\rmA$ does seem to make the learning problem more difficult for the QP algorithm.

\end{document}